
\documentclass[11pt]{article}

\usepackage[margin=1in]{geometry}

\usepackage{microtype}
\usepackage{graphicx}
\usepackage{subcaption}
\usepackage{booktabs} 
\usepackage{colortbl} 

\usepackage{hyperref}
\usepackage{url}


\usepackage{natbib}

\usepackage{amsmath}
\usepackage{amssymb}
\usepackage{mathtools}
\usepackage{amsthm}
\usepackage{bm}  
\usepackage{algorithm}
\usepackage{algorithmic}
\usepackage{multirow}
\usepackage{float}  
\usepackage{tikz}
\usepackage{tcolorbox}
\tcbuselibrary{skins}

\newtcolorbox{pinkbox}{
    colback=pink!15,
    colframe=pink!15,
    arc=8pt,
    boxrule=0pt,
    left=4pt, right=4pt, top=4pt, bottom=4pt
}
\newtcolorbox{greenbox}{
    colback=green!10,
    colframe=green!10,
    arc=8pt,
    boxrule=0pt,
    left=4pt, right=4pt, top=4pt, bottom=4pt
}
\newtcolorbox{yellowbox}{
    colback=yellow!12,
    colframe=yellow!12,
    arc=8pt,
    boxrule=0pt,
    left=4pt, right=4pt, top=4pt, bottom=4pt
}

\newcommand{\RETURN}{\STATE \textbf{return} }

\usepackage[capitalize,noabbrev]{cleveref}

\theoremstyle{plain}
\newtheorem{theorem}{Theorem}[section]
\newtheorem{proposition}[theorem]{Proposition}
\newtheorem{lemma}[theorem]{Lemma}

\theoremstyle{definition}
\newtheorem{definition}[theorem]{Definition}

\theoremstyle{remark}


\newcommand{\mat}[1]{\mathbf{#1}}

\renewcommand{\vec}[1]{\bm{#1}}

\begin{document}

\title{Noise is All You Need: Solving Linear Inverse Problems by Noise Combination Sampling with Diffusion Models}

\author{
Xun Su\thanks{Corresponding author: \texttt{suxun\_opt@asagi.waseda.jp}} \\
Waseda University, Tokyo, Japan
\and
Hiroyuki Kasai\thanks{\texttt{hiroyuki.kasai@waseda.jp}} \\
Waseda University, Tokyo, Japan
}

\date{}

\maketitle

\begin{abstract}
    Pretrained diffusion models have demonstrated strong capabilities in zero-shot inverse problem solving by incorporating observation information into the generation process of the diffusion models. However, this presents an inherent dilemma: excessive integration can disrupt the generative process, while insufficient integration fails to emphasize the constraints imposed by the inverse problem. To address this, we propose \emph{Noise Combination Sampling}, a novel method that synthesizes an optimal noise vector from a noise subspace to approximate the measurement score, replacing the noise term in the standard Denoising Diffusion Probabilistic Models process. This enables conditional information to be naturally embedded into the generation process without reliance on step-wise hyperparameter tuning. Our method can be applied to a wide range of inverse problem solvers, including image compression, and, particularly when the number of generation steps $T$ is small, achieves superior performance with negligible computational overhead, significantly improving robustness and stability.
\end{abstract}

\begin{figure*}[t]
    \centering
    \includegraphics[width=1.0\linewidth]{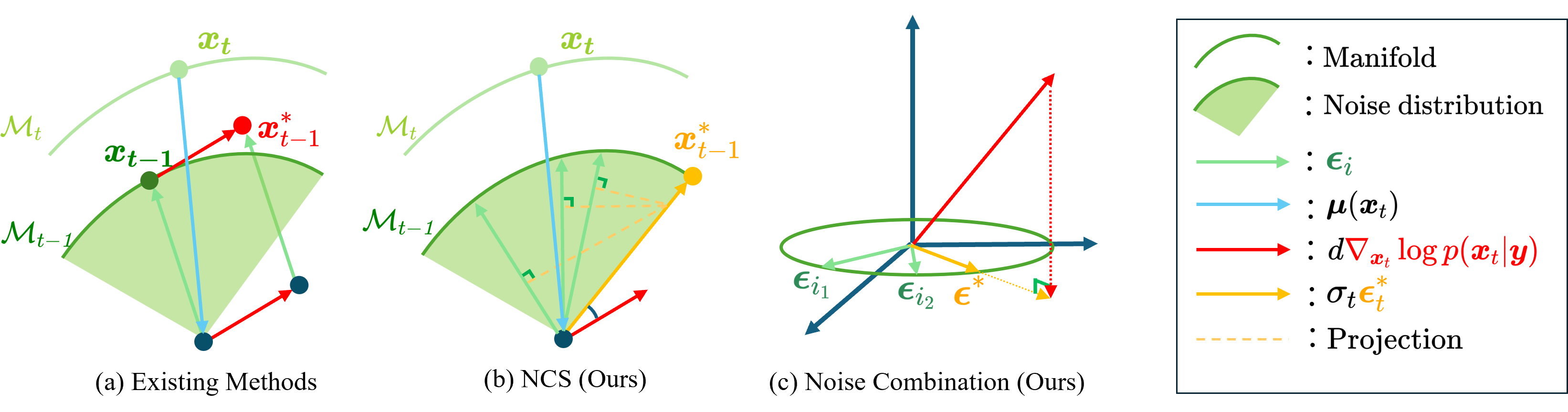}
    \vspace{-0.5cm}
    \caption{An illustration showing the difference between exact approximation methods and NCS. The intervention, i.e., the \textcolor{red}{measurement score} in the existing methods pushes the trajectory off the manifold $\mathcal{M}_{t-1}$ of $\vec{x}_{t-1}$. In contrast, NCS embeds the intervention into the optimal noise within an ellipsoidal subspace, defined by the span of the noise codebook. This allows NCS to naturally preserve both the position of $\vec{x}_{t-1}$ on its manifold and the consistency of the diffusion process.}
    \label{fig:illustration}
\end{figure*}

\section{Introduction}
Diffusion models (DMs) have achieved state-of-the-art results in image generation, audio synthesis, video modeling, and language modeling~\citep{ho2020denoisingdiffusionprobabilisticmodels, rombach2022highresolutionimagesynthesislatent, podell2024sdxl, ho2022video, nie2025largelanguagediffusionmodels, gat2024discrete}, and show strong zero-shot potential for tasks such as inpainting, depth estimation, and segmentation~\citep{lugmayr2022repaint, tian2024diffuse, li2023your}. Without additional training, DMs can be used to solve various inverse problems by injecting observational information during the stochastic denoising process~\citep{wang2022zero, chung2023diffusion, cardosomonte, dou2024diffusion, he2024manifold, kim2025regularization}. Such powerful plug-and-play approaches have further been extended to more specialized settings, including deep learning-guided generation tasks such as style transfer, image editing~\citep{yu2023freedom, he2024manifold, shi2024dragdiffusion, ye2024tfg}, and scientific inverse problems~\citep{zheng2025inversebench}. However, existing methods often suffer from sampling instability and therefore rely on complex hyperparameter tuning and long sampling schedules to compensate for the limited effectiveness of the guidance~\citep{wang2022zero, song2023solving, yang2024guidance, zhang2025improving, zheng2025inversebench}.

We propose \emph{Noise Combination Sampling} (NCS), a framework that approximates the measurement score $\nabla_{\vec{x}_t} \log p(\vec{y} \mid \vec{x}_t)$ via a linear combination of Gaussian noise vectors, replacing the noise term in the Denoising Diffusion Probabilistic Models (DDPM) process. By adjusting the sampling trajectory through synthesized noise vectors rather than explicit gradients, the plug-and-play NCS stably improves mainstream diffusion-based inverse problem solvers such as Diffusion Posterior Sampling (DPS)~\citep{chung2023diffusion} and Manifold-Preserving Gradient Descent (MPGD)~\citep{he2024manifold}, with negligible computational overhead. Across a wide range of tasks, NCS consistently delivers substantial performance gains and, in several cases, surpasses the state of the art, while remaining robust under a single default setting.

Denoising Diffusion Codebook Models (DDCM)~\citep{ohayon2025compressed}, a generative image compression method, can be interpreted as a special case of NCS, in which one noise vector is selected from a noise codebook, corresponding to an extremely low-rate setting. To approximate the measurement score with more than one noise vectors, DDCM relies on a greedy search whose computational complexity grows exponentially with noise combination. In contrast, we show that NCS achieves competitive quantization performance with only linear complexity. Concurrently, Turbo-DDCM~\citep{vaisman2025turboddcmfastflexiblezeroshot} reports a similar observation and further improves compression effectiveness through engineering optimizations.

Our contributions are summarized as follows:
\begin{itemize}
    \item We show that, NCS formulates measurement score approximation as an inner-product minimization problem, which admits a closed-form solution that is nearly equivalent to cosine-direction fitting in high-dimensional settings with a relatively limited noise basis, enabling stable and affordable conditional control of the DDPM process.

    \item We extend NCS to generative image compression and show that DDCM~\citep{ohayon2025compressed} is a special case of NCS. By combining multiple noise vectors from a large codebook and using significantly fewer diffusion steps, NCS achieves competitive quantization quality with linear-time complexity in $m$, enabling fast compression and decompression.

    \item We conduct extensive experiments on both compression and inverse problems. For compression, NCS attains comparable rate--distortion performance at the same bitrate using roughly $1/10$ of the diffusion steps $T$. For inverse problems, we show that augmenting DPS, a canonical and widely used inverse problem solver, with NCS leads to consistent and substantial performance improvements across diverse inverse problems and sampling regimes, without introducing additional hyperparameters or computational overhead, and with particularly strong gains in low-step settings.

\end{itemize}

\begin{figure*}[!t]
    \centering
    \begin{pinkbox}
    \centering

    \begin{minipage}[t]{0.47\textwidth}
        \centering
        \includegraphics[width=\linewidth]{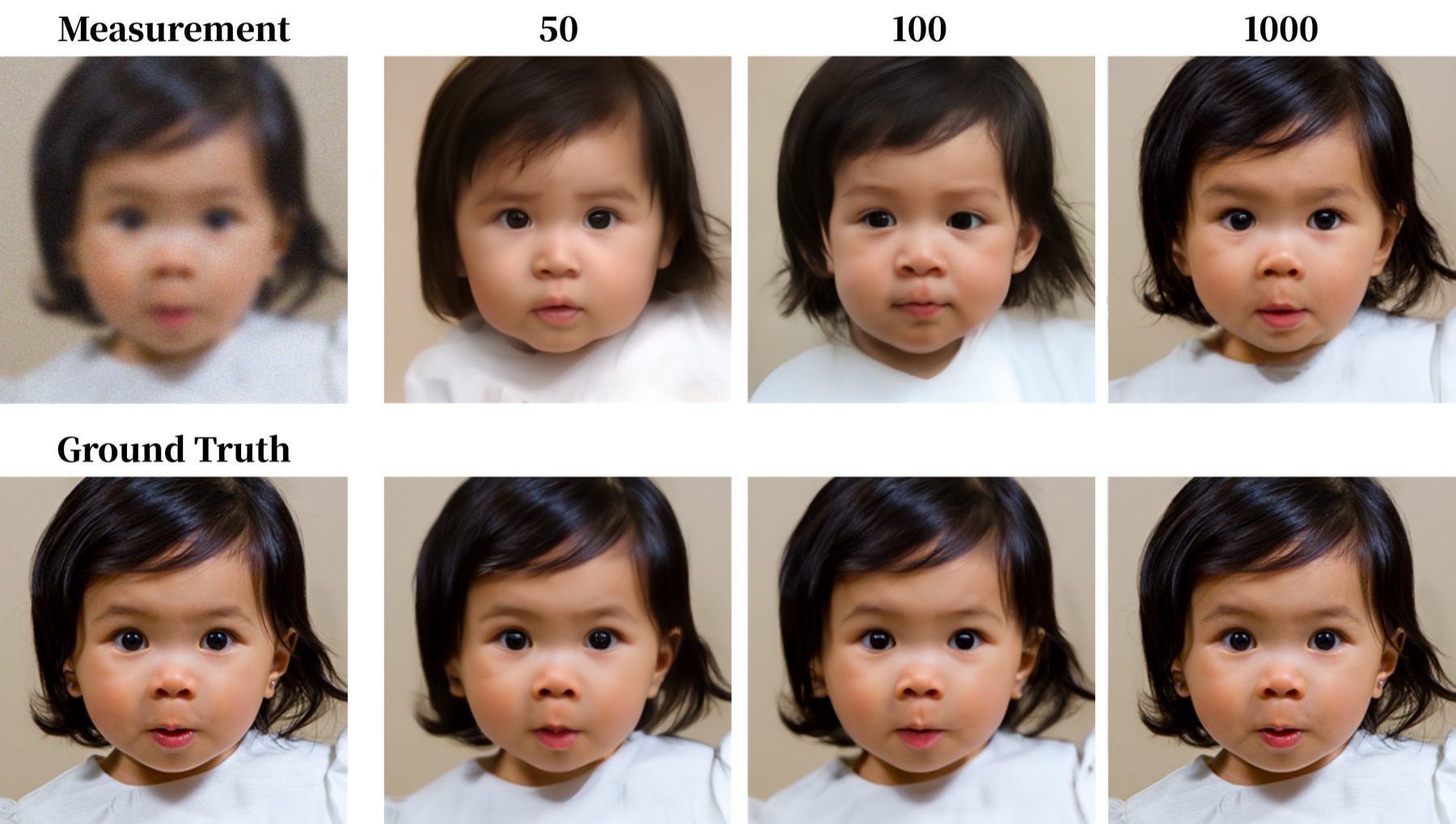}
        \small (a) Motion Blur
    \end{minipage}
    \hfill
    \begin{minipage}[t]{0.47\textwidth}
        \centering
        \includegraphics[width=\linewidth]{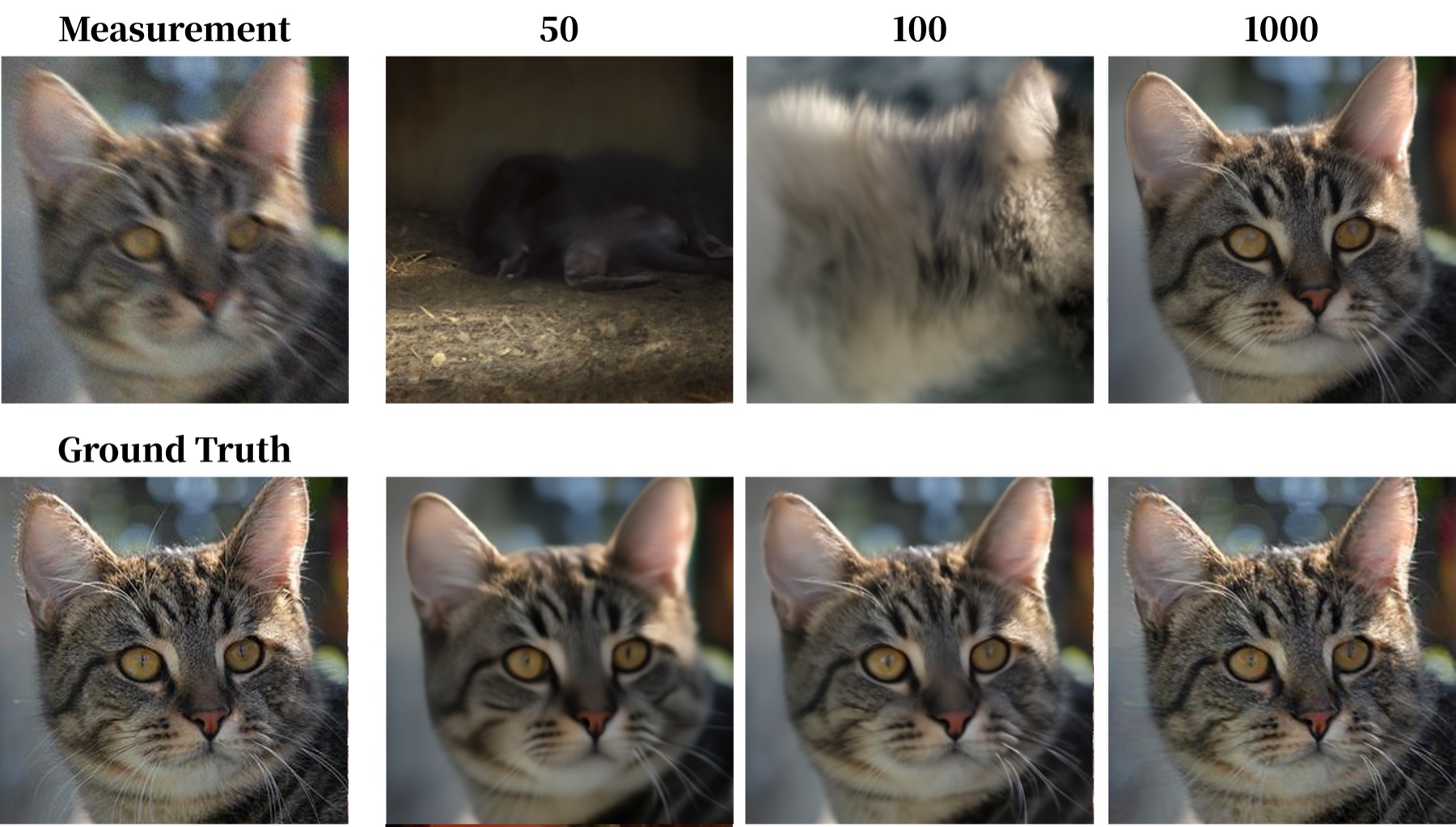}
        \small (b) Gaussian Deblur
    \end{minipage}
    \end{pinkbox}
    \begin{greenbox}
    \centering
    \setlength{\tabcolsep}{4pt}
    \begin{tabular}{c@{\hspace{12pt}}c@{\hspace{12pt}}cccc}
        & \small\textbf{GT} & \small\textbf{DPS} & \small\textbf{DAPS} & \small\textbf{REDDiff} & \small\textcolor{red}{\textbf{NCS-DPS (Ours)}} \\[1pt]
        \raisebox{-.35\height}{\footnotesize\textcolor{orange}{\textbf{\shortstack{Black Hole\\Imaging}}}}\hspace{6pt} & 
        \raisebox{-.5\height}{\includegraphics[width=0.12\textwidth]{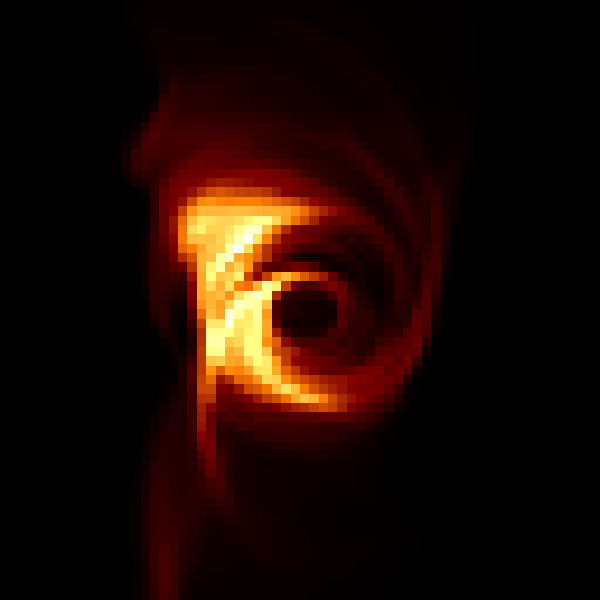}} &
        \raisebox{-.5\height}{\includegraphics[width=0.12\textwidth]{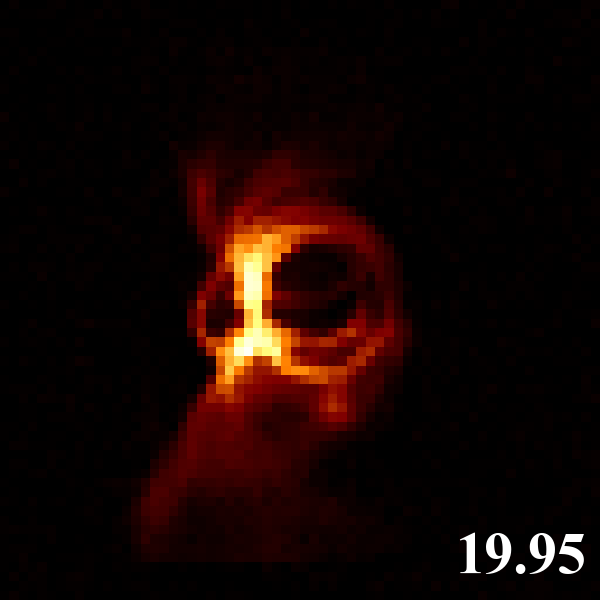}} &
        \raisebox{-.5\height}{\includegraphics[width=0.12\textwidth]{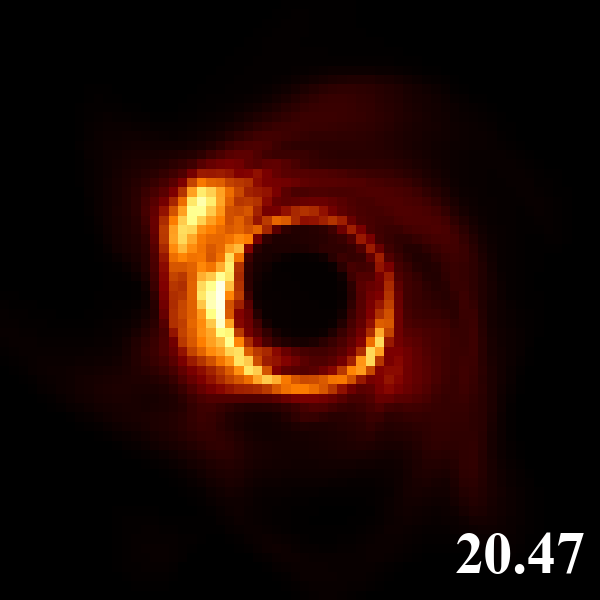}} &
        \raisebox{-.5\height}{\includegraphics[width=0.12\textwidth]{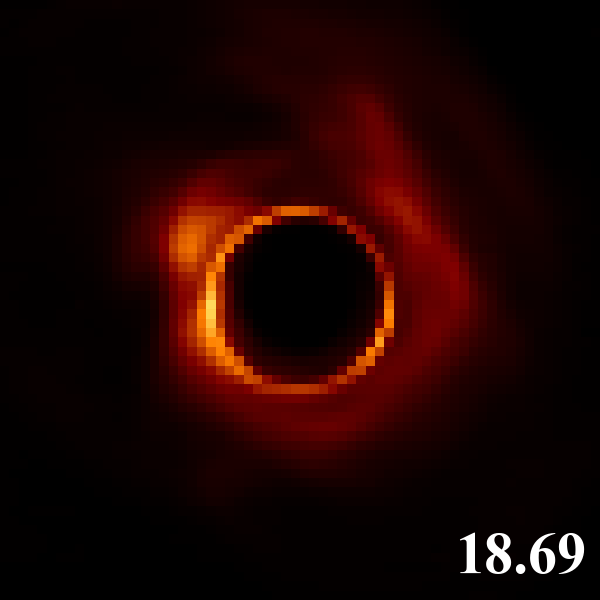}} &
        \raisebox{-.5\height}{\includegraphics[width=0.12\textwidth]{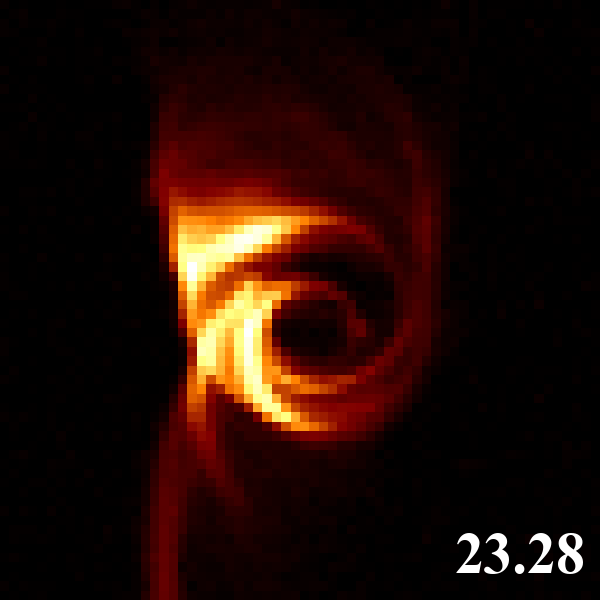}} \\
        \noalign{\vspace{5pt}}
        \raisebox{-.35\height}{\footnotesize\textcolor{violet}{\textbf{\shortstack{Inverse\\Scattering}}}}\hspace{6pt} & 
        \raisebox{-.5\height}{\includegraphics[width=0.12\textwidth]{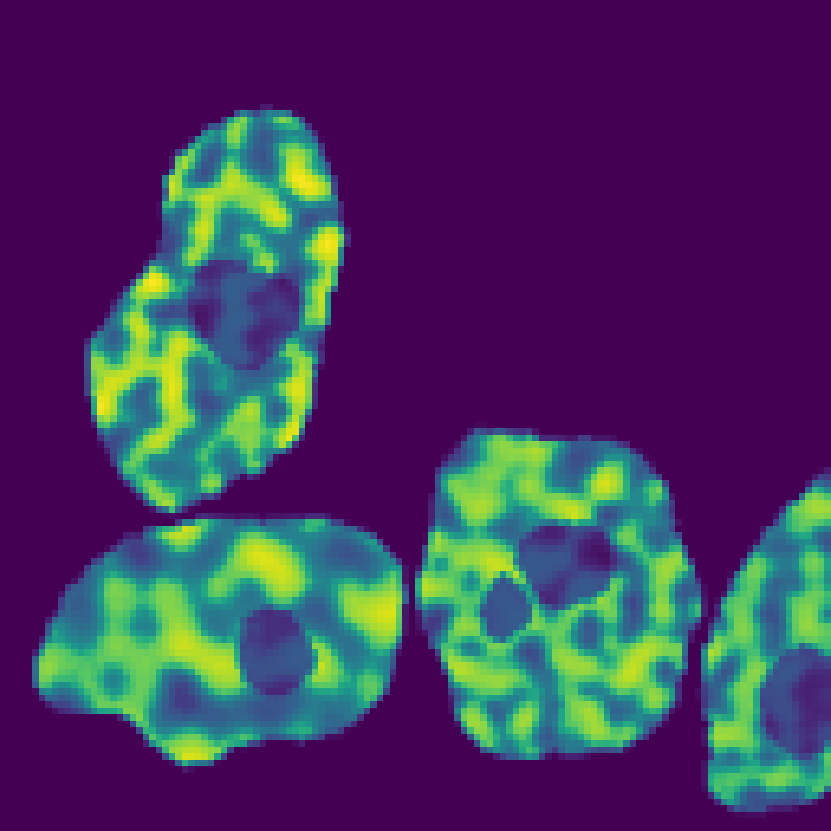}} &
        \raisebox{-.5\height}{\includegraphics[width=0.12\textwidth]{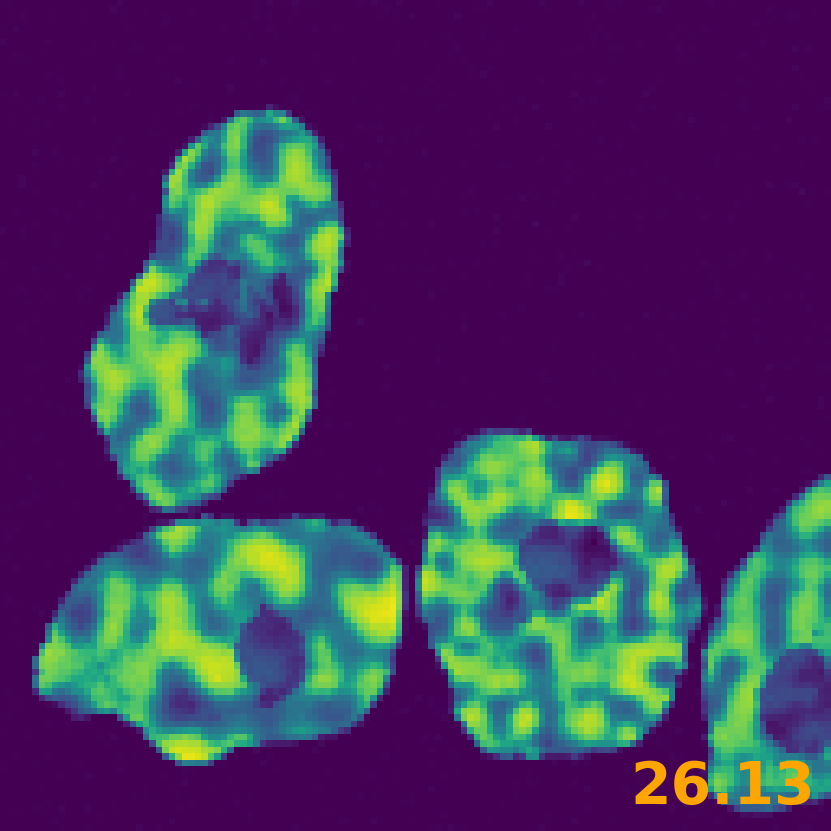}} &
        \raisebox{-.5\height}{\includegraphics[width=0.12\textwidth]{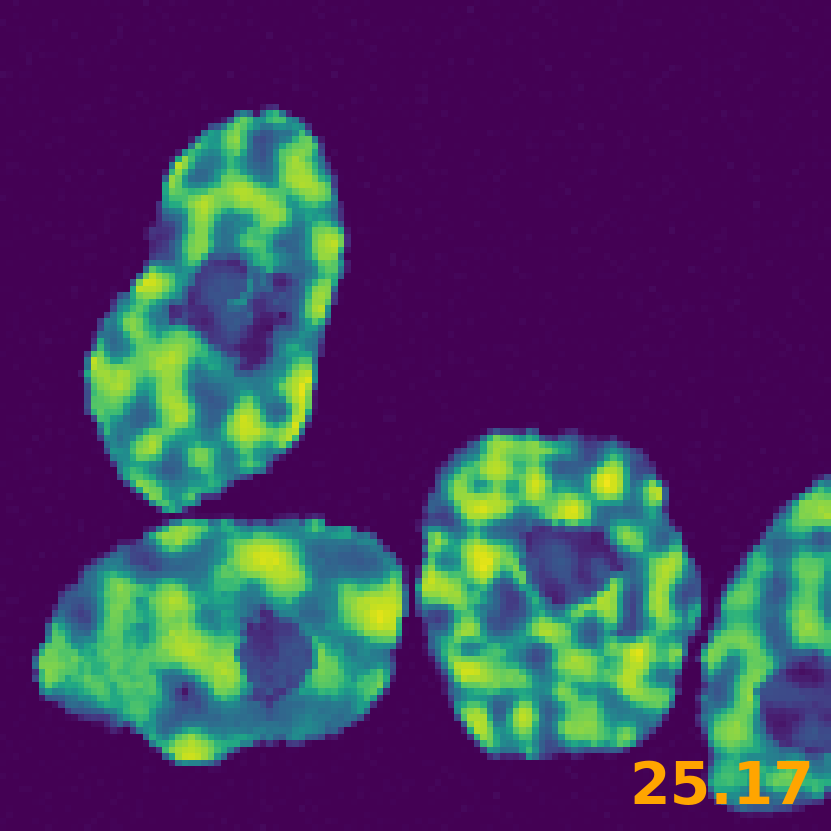}} &
        \raisebox{-.5\height}{\includegraphics[width=0.12\textwidth]{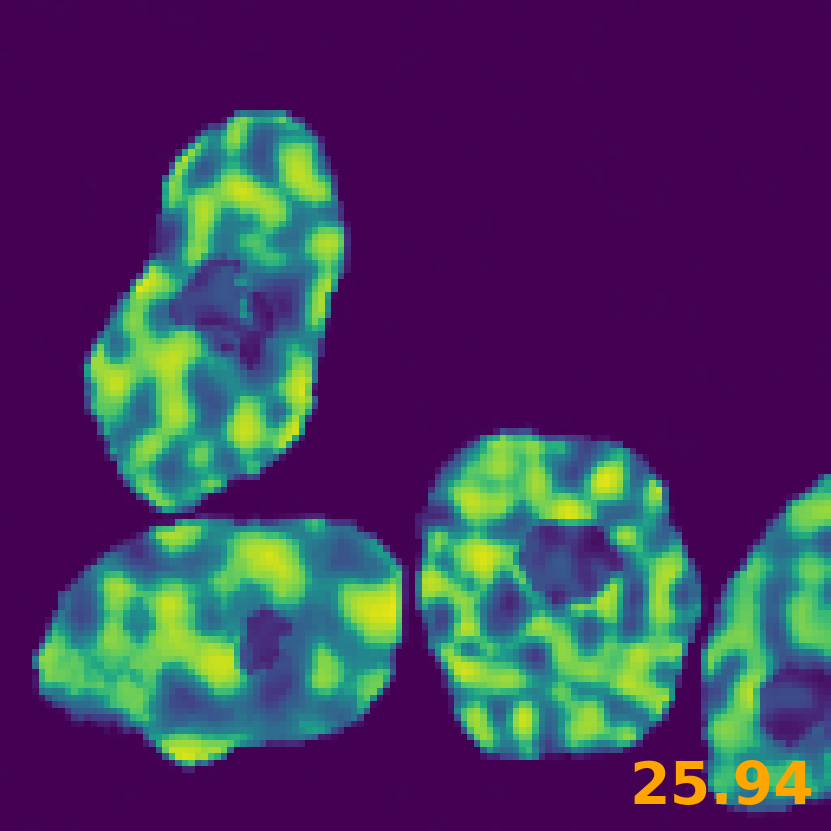}} &
        \raisebox{-.5\height}{\includegraphics[width=0.12\textwidth]{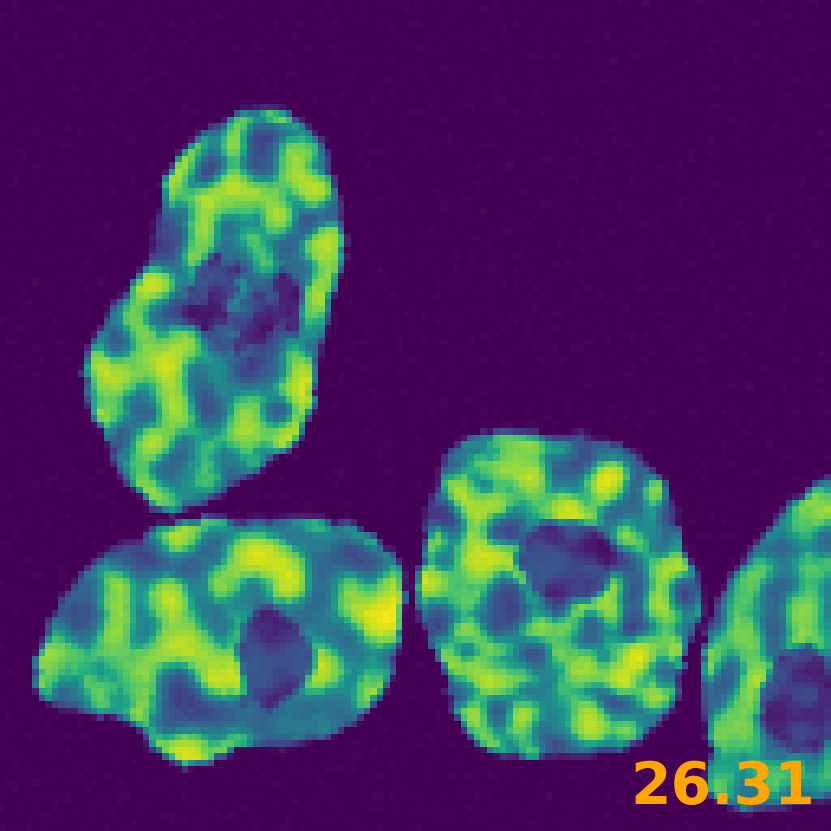}} \\
        \noalign{\vspace{5pt}}
        \raisebox{-.35\height}{\footnotesize\textcolor{teal}{\textbf{\shortstack{Compressed\\Sensing\\MRI}}}}\hspace{6pt} & 
        \raisebox{-.5\height}{\includegraphics[width=0.12\textwidth]{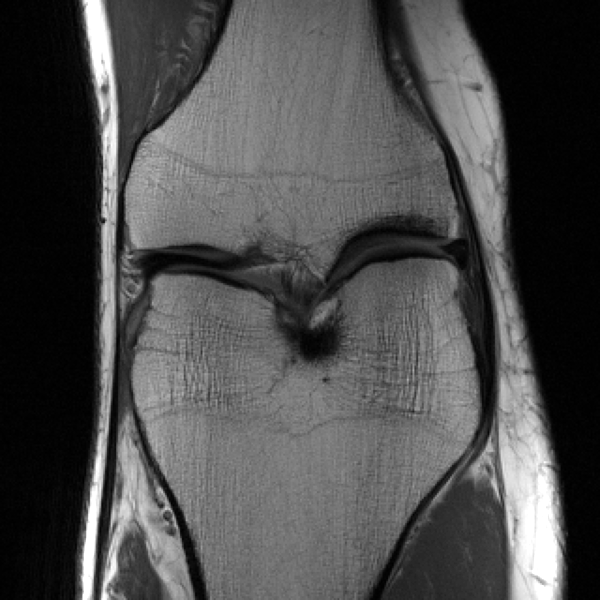}} &
        \raisebox{-.5\height}{\includegraphics[width=0.12\textwidth]{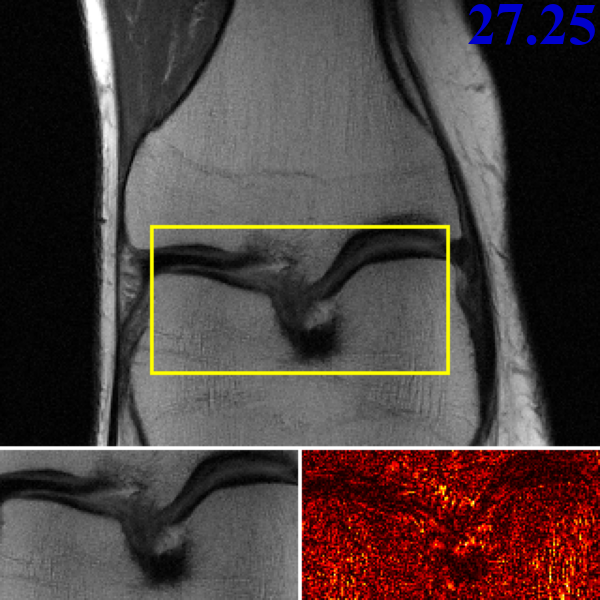}} &
        \raisebox{-.5\height}{\includegraphics[width=0.12\textwidth]{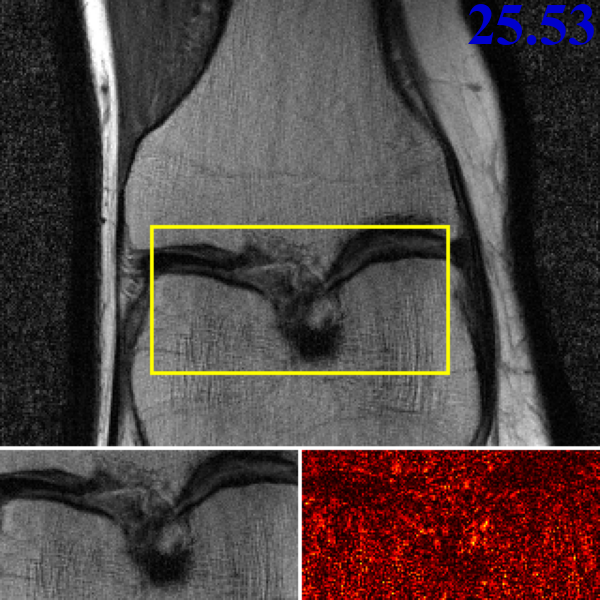}} &
        \raisebox{-.5\height}{\includegraphics[width=0.12\textwidth]{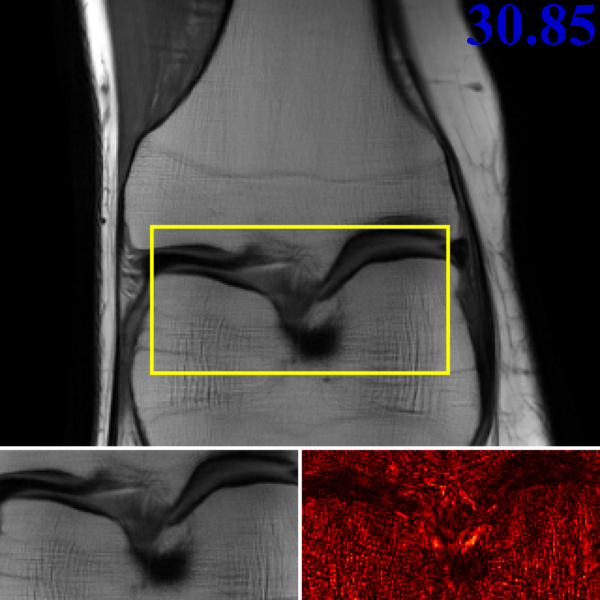}} &
        \raisebox{-.5\height}{\includegraphics[width=0.12\textwidth]{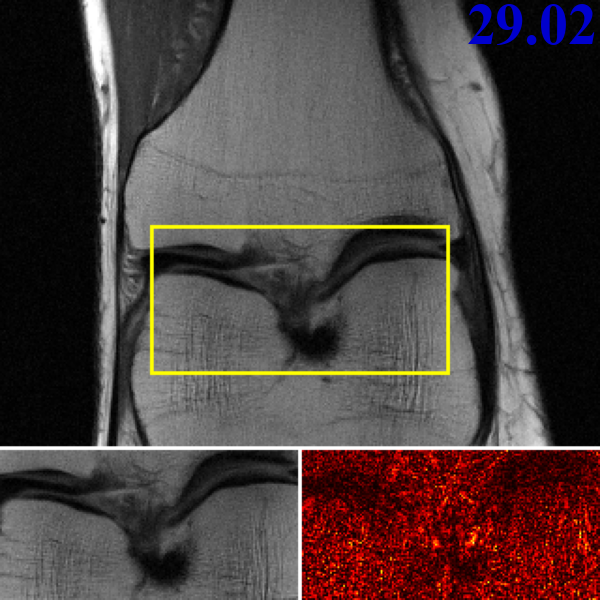}} \\
    \end{tabular}
    \end{greenbox}
    \caption{Visual comparison: \textbf{Top}: DPS vs.\ NCS-DPS on two inverse problems (motion blur, Gaussian deblur) under varying sampling steps. \textbf{Bottom}: Comparison across three scientific inverse problems with different methods.}
    \label{fig:main_results}
\end{figure*}

\section{Background}

\subsection{Denoising Diffusion Probabilistic Models (DDPMs)}

DMs define the generative process as the reverse of a predefined noising process. Following the formulation of~\citet{songscore}, we describe the forward (noising) process using an Itô stochastic differential equation (SDE), where $\vec{x}_t \in \mathbb{R}^d$ and $t \in [0, T]$:
\[
    d\vec{x}_t = f(\vec{x}_t, t)\,dt + g(t)\,d\vec{w}_t,
    \]
where $f$ and $g$ denote the drift function and diffusion coefficient, respectively, and $\vec{w}_t$ is a standard Wiener process.

With $\vec{x}_0 \sim p_{\text{data}}$ and $\vec{x}_T \sim \mathcal{N}(0, \mathbf{I})$, the reverse process recovers samples by solving~\citep{anderson1982reverse}:
\[
    d\vec{x}_t = \left( f(\vec{x}_t, t) - g^2(t)\nabla_{\vec{x}_t} \log p_t(\vec{x}_t) \right) dt + g(t)\,d\vec{w}_t,
\]
initialized at $\vec{x}_T \sim \mathcal{N}(0, \mathbf{I})$.

We follow \citet{songscore}'s definition to choose a Variance-Preserving (VP)-SDE, or DDPM schedule to show the discrete update rule. Researchers use a neural network $\vec{s}_\theta(\vec{x}_t, t)$ to approximate the score function $\nabla_{\vec{x}_t} \log p_t(\vec{x}_t)$ and makes it possible to use the reverse process to generate the data. We consider the general condition by discretizing the whole process into T bins,
\begin{equation}
    \label{eq:discrete}
    \vec{x}_{t-1} = \vec{x}_t - f(\vec{x}_t, t) +  g^2(t) \vec{s}_\theta(\vec{x}_t, t) +  g(t) \vec{\epsilon}_t
\end{equation}
where $\vec{\epsilon}_t \sim \mathcal{N}(0, \mathbf{I})$. \citet{ho2020denoisingdiffusionprobabilisticmodels} consider the marginal distribution of $\vec{x}_t$ given $\vec{x}_0$ is Gaussian:
\[
    q(\vec{x}_t \mid \vec{x}_0) = \mathcal{N}\left(\vec{x}_t; \sqrt{\bar{\alpha}_t}\, \vec{x}_0,\ (1 - \bar{\alpha}_t)\, \mathbf{I}\right),
\]
where the noise schedule is defined via $\beta_t = g(t)^2 = {-2f(t)}$, $\alpha_t = 1 - \beta_t$, and $\bar{\alpha}_t = \prod_{s=1}^t \alpha_s$. This leads to the update rule of the DDPM process:
\begin{equation}
    \label{eq:ddpm_update}
    \vec{x}_{t-1} = \vec{\mu}(\vec{x}_t, t) + \sigma_t\, \vec{\epsilon}, \quad \vec{\epsilon} \sim \mathcal{N}(0, \mathbf{I}),
\end{equation}
where $\vec{\mu}(\vec{x}_t, t) = \frac{1}{\sqrt{\alpha_t}} \left( \vec{x}_t - \frac{\beta_t}{\sqrt{1 - \bar{\alpha}_t}}\, \vec{s}_\theta(\vec{x}_t, t) \right)$,
and $\sigma_t = \sqrt{\beta_t}$ is the variance parameter governing the stochasticity of the reverse process.

Tweedie's formula~\citep{kadkhodaie2021stochastic} estimates the original signal $\vec{x}_0$ from $\vec{x}_t$. In practice, DDPMs approximate this using the score network:
\[
    \tilde{\vec{x}}_{0 \mid t}(\vec{x}_t, t) = \mathbb{E}[\vec{x}_0 \mid \vec{x}_t] \approx \frac{1}{\sqrt{\bar{\alpha}_t}} \left( \vec{x}_t - \sqrt{1 - \bar{\alpha}_t}\, \vec{s}_\theta(\vec{x}_t, t) \right).
\]

\subsection{Linear Inverse Problems and Conditional Generation}

Conditional generation addresses scenarios where only partial observations or measurements $\vec{y} \in \mathbb{R}^n$, derived from the original signal $\vec{x}_0 \in \mathbb{R}^d$, are available. The corresponding \emph{inverse problem} is typically formulated as:
\[
    \vec{y} = \mathcal{A}(\vec{x}_0) + \vec{n}, \quad \vec{x}_0 \in \mathbb{R}^d,\ \vec{y}, \vec{n} \in \mathbb{R}^n, 
\]
where $\mathcal{A}: \mathbb{R}^d \rightarrow \mathbb{R}^n$ is a known linear degradation operator, and $\vec{n} \sim \mathcal{N}(0, \sigma^2 \mathbf{I})$ denotes additive white Gaussian noise. It covers a wide range of tasks \citep{tarantola2005inverse}.

Solving the conditional generation problem with a pretrained DM requires replacing the score function $\nabla_{\vec{x}_t} \log p_t(\vec{x}_t)$ in Eq.~\eqref{eq:discrete} with the conditional score function $\nabla_{\vec{x}_t} \log p_t(\vec{x}_t \mid \vec{y})$. Under a Bayesian formulation, the conditional distribution at $t$ is given by $ p(\vec{x}_t \mid \vec{y}) = {p(\vec{y} \mid \vec{x}_t) p(\vec{x}_t)}/{p(\vec{y})} $, which indicates that $\nabla_{\vec{x}_t} \log p(\vec{x}_t \mid \vec{y}) = \nabla_{\vec{x}_t} \log p(\vec{y} \mid \vec{x}_t) + \nabla_{\vec{x}_t} \log p(\vec{x}_t)$. Substituting the $\nabla_{\vec{x}_t} \log p(\vec{x}_t \mid \vec{y})$ into the discrete update rule in Eq.~\eqref{eq:discrete} introduces an additional term that is not learned by the score network. The modified discrete update becomes:
\begin{align}
    \label{eq:conditional_discrete}
    \vec{x}_{t-1} &= \vec{x}_t - f(\vec{x}_t, t) + g^2(t)\, \nabla_{\vec{x}_t} \log p_t(\vec{x}_t) + g(t)\, \vec{\epsilon}_t \nonumber \\
    & \quad + \textcolor{red}{g^2(t)\, \nabla_{\vec{x}_t} \log p_t(\vec{y} \mid \vec{x}_t)}.
\end{align}

\begin{figure}[t]
    \centering
    \begin{minipage}[c]{0.49\linewidth}
        \centering
        \begin{minipage}[t]{0.49\linewidth}
            \centering
            \includegraphics[width=\linewidth]{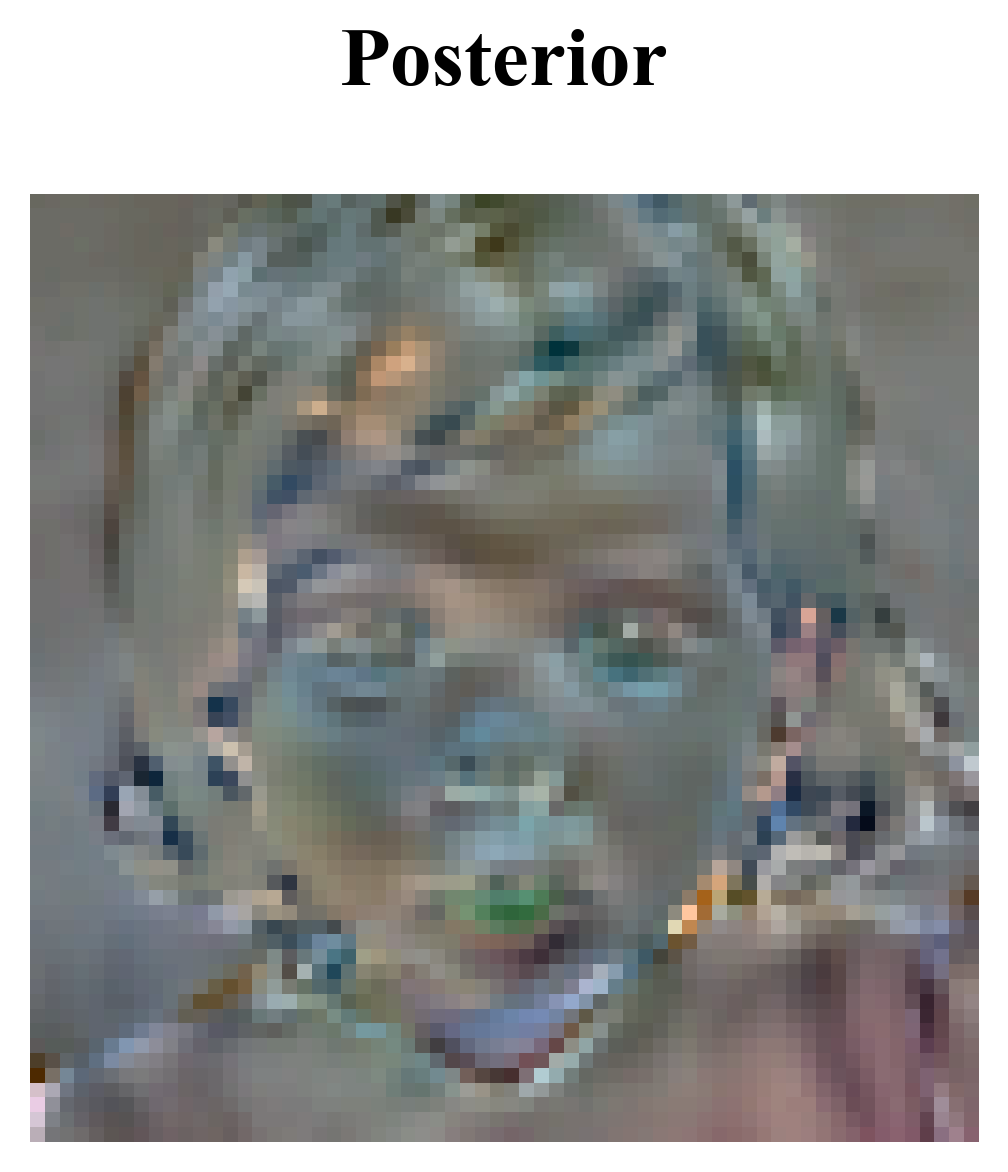}\\
            \includegraphics[width=\linewidth]{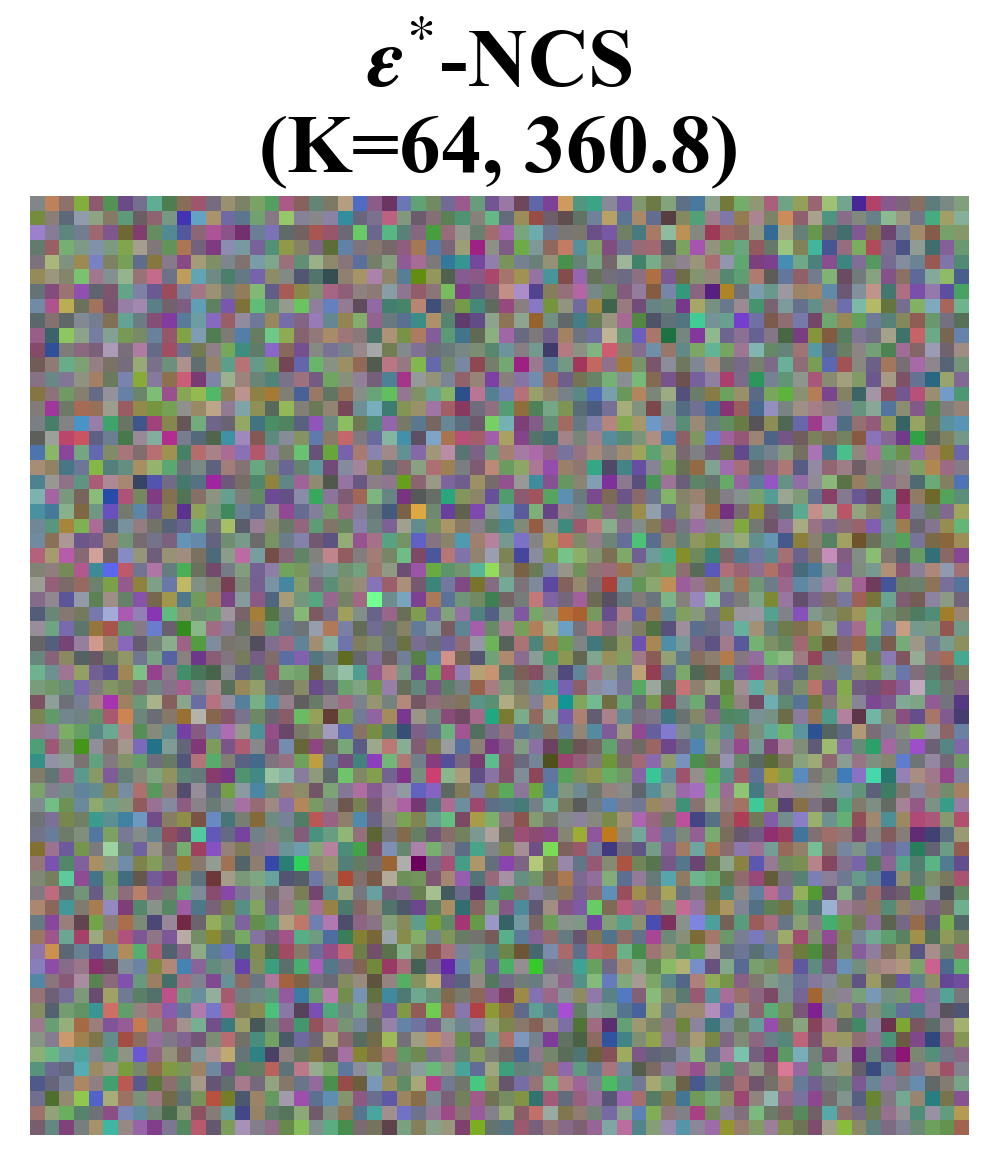}
        \end{minipage}
        \hfill
        \begin{minipage}[t]{0.49\linewidth}
            \centering
            \includegraphics[width=\linewidth]{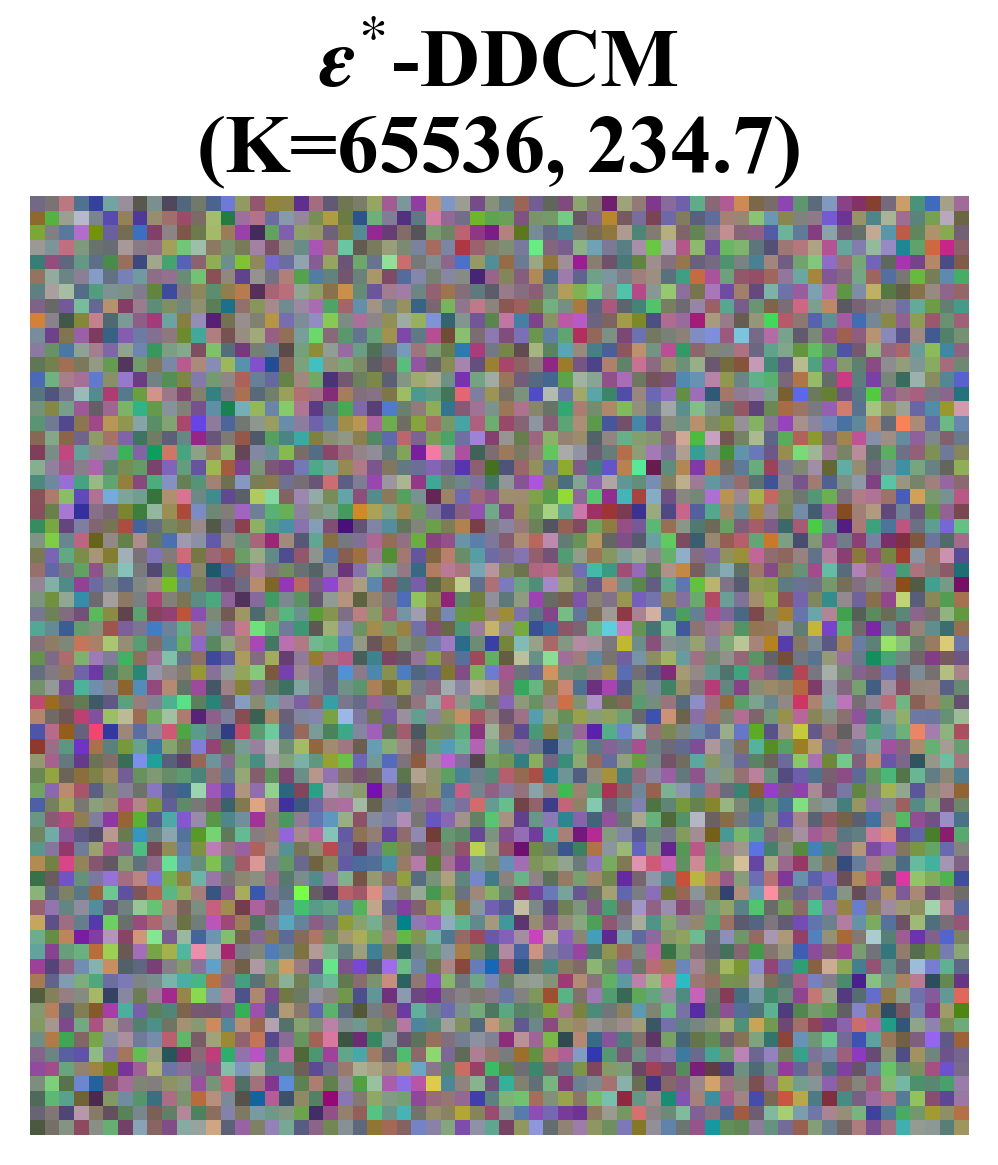}\\
            \includegraphics[width=\linewidth]{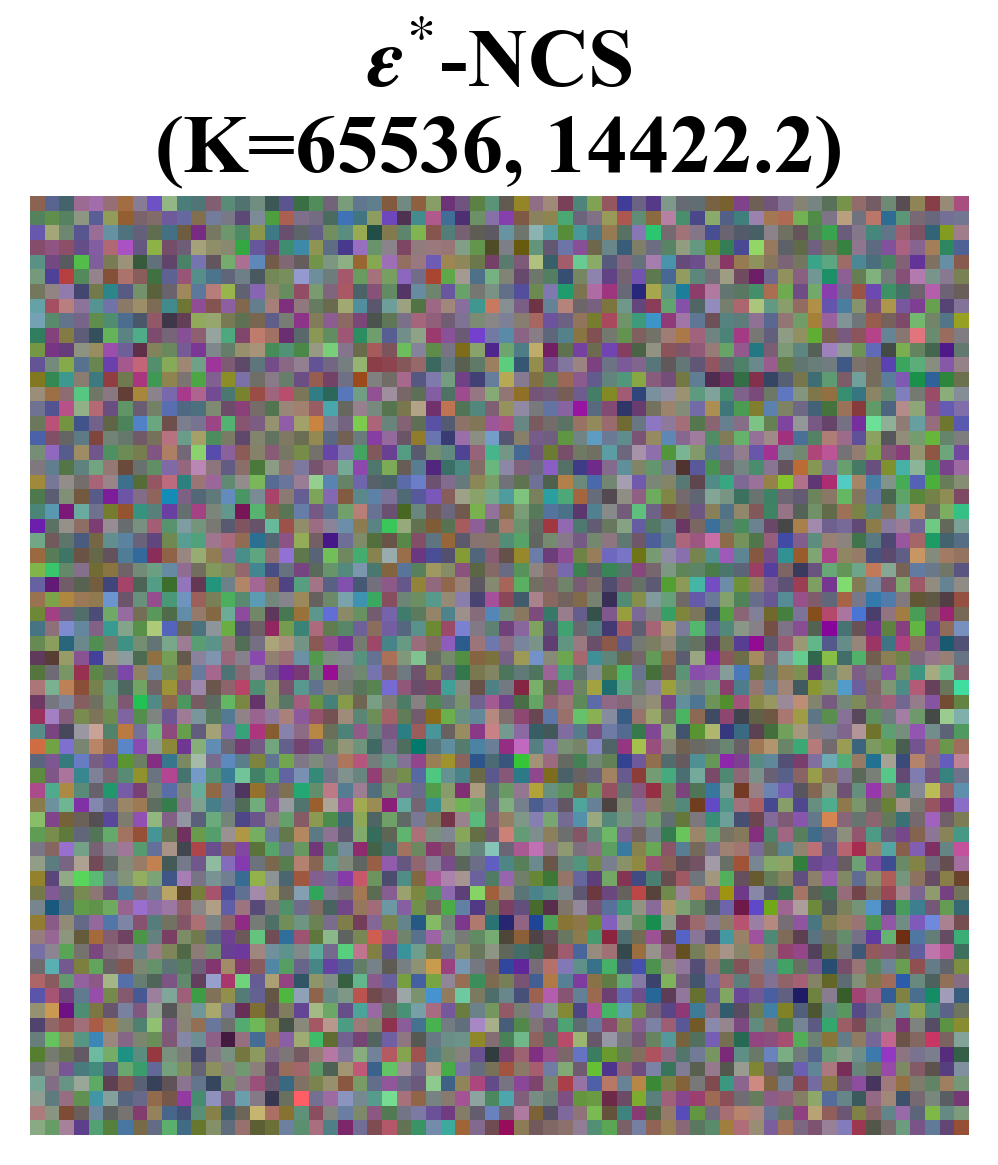}
        \end{minipage}
    \end{minipage}%
    \hfill
    \begin{minipage}[c]{0.49\linewidth}
        \centering
        \includegraphics[width=\linewidth]{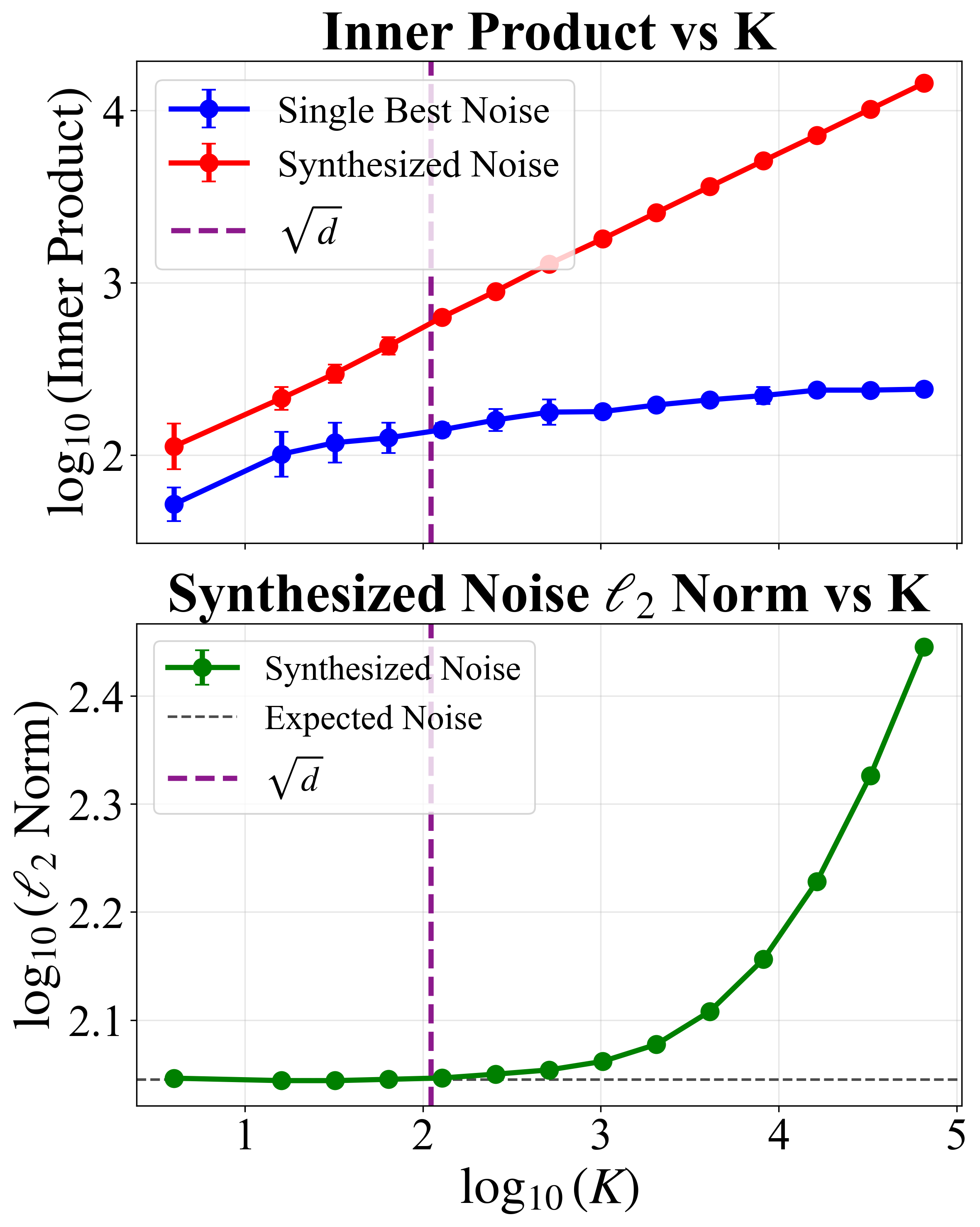}
    \end{minipage}
    \caption{Comparison of the selected noise on a downsampled $64\times64$ image under NCS optimization}
    \label{fig:inner_product_and_norm_vs_k}
\end{figure}
The red term in Eq.~\eqref{eq:conditional_discrete} captures the effect of the observation $\vec{y}$ on the sampling trajectory. Since the exact posterior score $\nabla_{\vec{x}_t} \log p(\vec{y} \mid \vec{x}_t)$ is generally intractable, learning it typically requires additional training~\citep{dhariwal2021diffusion, ho2022classifier}. The central challenge therefore lies in accurately approximating this term. Explicit approximation methods often suffer from sampling instability and long step inference. Variational inference approaches~\citep{mardani2024a, zhang2025improving} aim to approximate a more accurate posterior through more sophisticated optimization procedures, but require complex searches over multiple hyperparameters. Sampling-based methods~\citep{dou2024diffusion, cardosomonte, wu2023practical} avoid explicit posterior modification by selecting candidate samples via importance sampling; however, they typically suffer from weak guidance effects and high computational cost.

\begin{figure*}[t]
    \centering
    \includegraphics[width=0.9\linewidth]{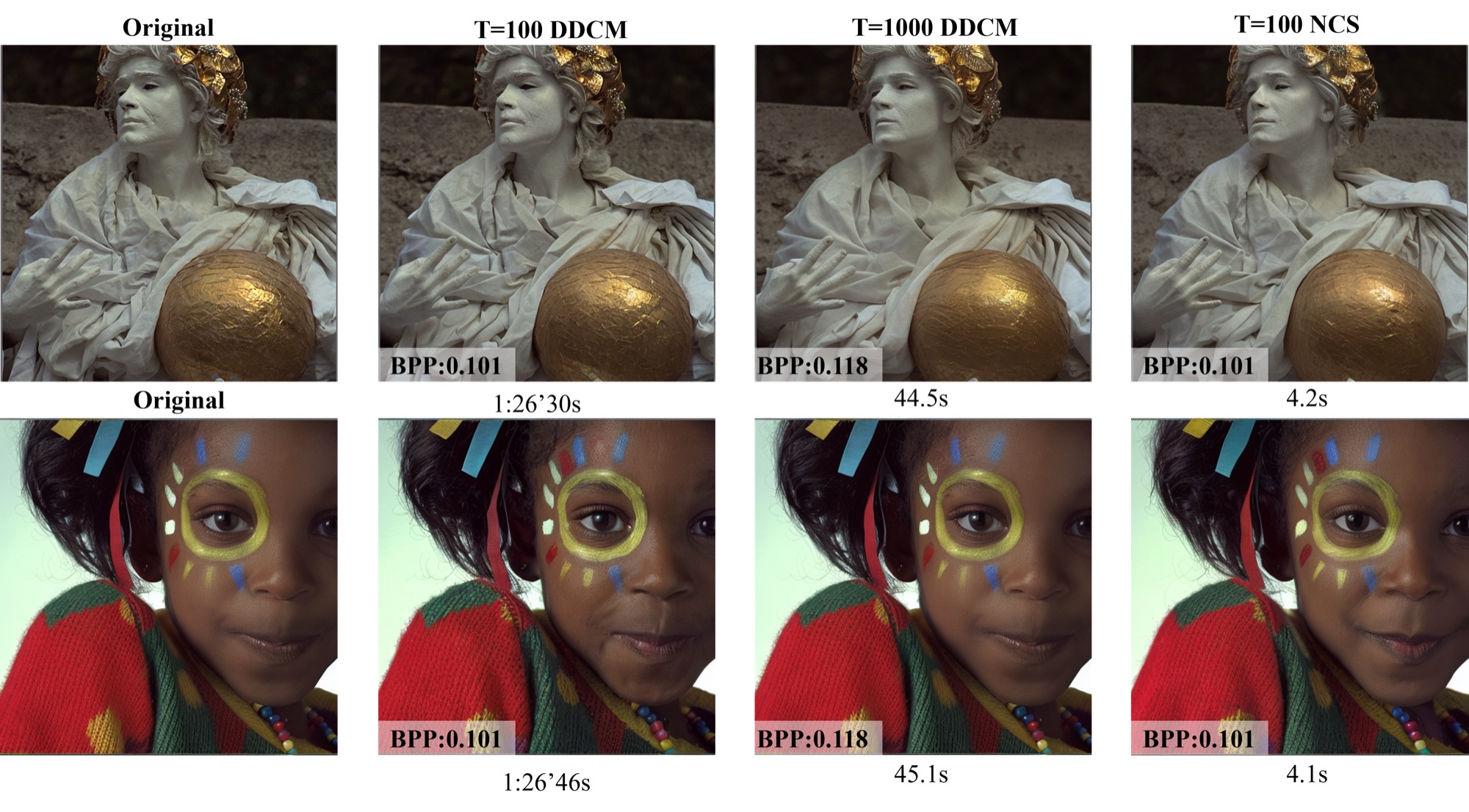}
    \caption{Comparison of the compression efficiency of NCS-MPGD and DDCM. For $T=1000$, we choose $K=32768$, $m=12$, $C=8$. For $T=100$, we choose $K=32768$, $m=2$, $C=0$. Our method achieves equivalent performance in less time.}
    \label{fig:compression_images}
\end{figure*}

\section{Noise Combination Sampling (NCS)}

\label{subsec:ncs}
Under the manifold assumption, $\vec{x}_t$ concentrates around a low-dimensional set induced by $\mathcal{M}_t:=\{\vec x \in \mathcal{N}{(\vec x, \sigma_t^2I)}, \vec x \in \mathcal{M}_0\}$.
Most training-free inverse solvers form an estimate $\tilde{\vec{x}}_{0|t}$ and then enforce measurement consistency by
injecting an external guidance term, e.g., $\nabla_{\vec{x}_t}\log p(\vec{y}\mid \vec{x}_t)$ (or its proxy), into the denoising update.
Implicitly, this strategy relies on the assumption that the guided update still yields an $\tilde{\vec{x}}_{0|t}$ that stays within the true data manifold $\mathcal{M}_0$. In practice, however, the conditional score is only approximate, and directly perturbing the denoising dynamics can easily move the trajectory off-manifold, causing instability and sensitivity to guidance strength and step size (Fig.~\ref{fig:illustration}). 

This preserves the denoising update structure while steering samples toward measurement consistency. NCS takes the opposite view: instead of explicitly modifying $\tilde{\vec{x}}_{0|t}$ or searching for an optimal guidance scale, it implicitly embeds the approximated conditional information into the noise component of the update rule, thereby avoiding direct perturbations of the sampling trajectory off the learned manifold:
\[
    \vec{x}_{t-1} \approx \vec{x}_t - f(\vec{x}_t, t) + g^2(t)\, \nabla_{\vec{x}_t} \log p_t(\vec{x}_t) + \textcolor{orange}{g(t)\, \vec{\epsilon}_t^*}, \notag
\]
where $\vec{\epsilon}_t^*$ is a \textcolor{orange}{synthesized noise} vector that approximates the effect of the conditional term. As a result, the degradation in the measurement score is substantially mitigated, leading to more stable diffusion dynamics:

To achieve this, NCS restricts $\vec{\epsilon}_t^*$ to lie in the span of a finite set of noise vectors, referred to as the \textcolor{blue}{noise codebook}, and synthesizes it as a linear combination of these basis vectors to align with the target conditional direction. The most natural alignment objective is the cosine similarity:
\begin{equation}
    \label{eq:ncs_optimization_angle}
    \max_{\vec{\epsilon}_t^* \in \mathcal{N}(0, \mathbf{I})}
    \cos\left(\nabla_{\vec{x}_t} \log p(\vec{y} \mid \vec{x}_t),\ \vec{\epsilon}_t^*\right).
\end{equation}
Instead of directly optimizing the Eq.~\eqref{eq:ncs_optimization_angle}, NCS adopts an inner-product-based surrogate objective. This surrogate admits a computationally efficient closed-form solution and yields a direction that is nearly identical in high-dimensional settings when the codebook size $K$ is relatively small.

\begin{theorem}[Noise Combination Sampling]
    \label{thm:ncs}
    For linear inverse problems, the optimal noise vector $\vec{\epsilon}_t^*$ that best aligns with the conditional score direction is given by:
    \[
        \vec{\epsilon}_t^* = \sum_{i=1}^K \gamma_i \vec{\epsilon}^i_{t},
    \]
    where $\{\vec{\epsilon}^i_{t}\}_{i=1}^K$ are Gaussian vectors drawn from a fixed noise codebook with size $K$, and $\vec{\gamma} = (\gamma_1, \ldots, \gamma_K)$ denotes the combination weights. NCS seeks the optimal combination weights $\vec{\gamma}^*$ that maximizes the inner product between the conditional score direction and the synthesized noise:
    \begin{equation}
        \label{eq:ncs_optimization}
        \vec{\gamma}^* = \operatorname*{argmax}_{\vec{\gamma} \in \mathbb{R}^K,\ \|\vec{\gamma}\|_2 = 1} \left\langle \nabla_{\vec{x}_t} \log p(\vec{y} \mid \vec{x}_t),\ \sum_{i=1}^K \gamma_i \vec{\epsilon}^i_{t} \right\rangle.
    \end{equation}
    \end{theorem}
    

\begin{algorithm}[t]
    \caption{Noise Combination Sampling for Linear Inverse Problems}
    \label{alg:NCS}
    \begin{algorithmic}[1]
    \REQUIRE Select the $K$ for Codebooks $\mathcal{C}_t = \{\vec{\epsilon}_t^1, \dots, \vec{\epsilon}_t^K\}$ for all $t$, represented as matrices $\mat{E}_t$; observation $\vec{y}$; approximate conditional score direction $\vec{c}$
    \ENSURE Reconstructed sample $\vec{x}_0$
    
    \STATE Sample initial latent $\vec{x}_T \sim \mathcal{N}(0, \mathbf{I})$
    \FOR{$t = T$ {\bfseries to} $1$}
        \STATE $\vec{c} \approx \nabla_{\vec{x}_t} \log p(\vec{y} \mid \vec{x}_t)$ \hfill (approximate) 
        \STATE $\vec{\gamma}^* \leftarrow \vec{c}^\top \mat{E}_t / \| \vec{c}^\top \mat{E}_t \|_2$ \hfill (Theorem~\ref{thm:ncs_optimal})
        \STATE $\vec{\epsilon}_t^* \leftarrow \sum_{i=1}^K \gamma_i^* \vec{\epsilon}_t^i$
        \STATE $\vec{x}_{t-1} \leftarrow \mu_\theta(\vec{x}_t, t) + \sigma_t \vec{\epsilon}_t^*$ \hfill (Eq.~(\ref{eq:ddpm_update}))
    \ENDFOR
    \RETURN $\vec{x}_0$
    \end{algorithmic}
    \end{algorithm}

\begin{theorem}[Optimal Noise Combination]
    \label{thm:ncs_optimal}
    Let $\vec{c} = \nabla_{\vec{x}_t} \log p(\vec{y} \mid \vec{x}_t) \in \mathbb{R}^d$ and
    $\mat{E}_t = [\vec{\epsilon}_t^1, \ldots, \vec{\epsilon}_t^K] \in \mathbb{R}^{d \times K}$.
    The optimal weight vector $\vec{\gamma}^* \in \mathbb{R}^K$ that maximizes Eq.~(\ref{eq:ncs_optimization})
    is given by:
    \begin{equation}
        \label{eq:dcc_selection_column}
        \vec{\gamma}^* = \frac{ \mat{E}_t^\top \vec{c} }{ \| \mat{E}_t^\top \vec{c} \|_2 }.
    \end{equation}
    \end{theorem}
The equivalence between cosine-direction fitting and the inner-product-based optimization adopted by NCS, together with the resulting closed-form solution, is formally justified in Appendix~\ref{app:optimal}. Under the NCS framework, existing conditional guidance methods can be unified as optimal noise combinations, including DPS~\citep{chung2023diffusion}, MPGD~\citep{he2024manifold}, and $\Pi$GDM~\citep{song2023pseudoinverse}. Detailed formulations are provided in Appendix~\ref{app:ncs_definitions}. Since matrix multiplications are highly efficient in practice, NCS introduces less than a $4\%$ increase in computational time; detailed measurements are reported in Appendix~\ref{app:computational_cost}.


\begin{figure*}[!t]
    \centering
    \includegraphics[width=0.9\linewidth]{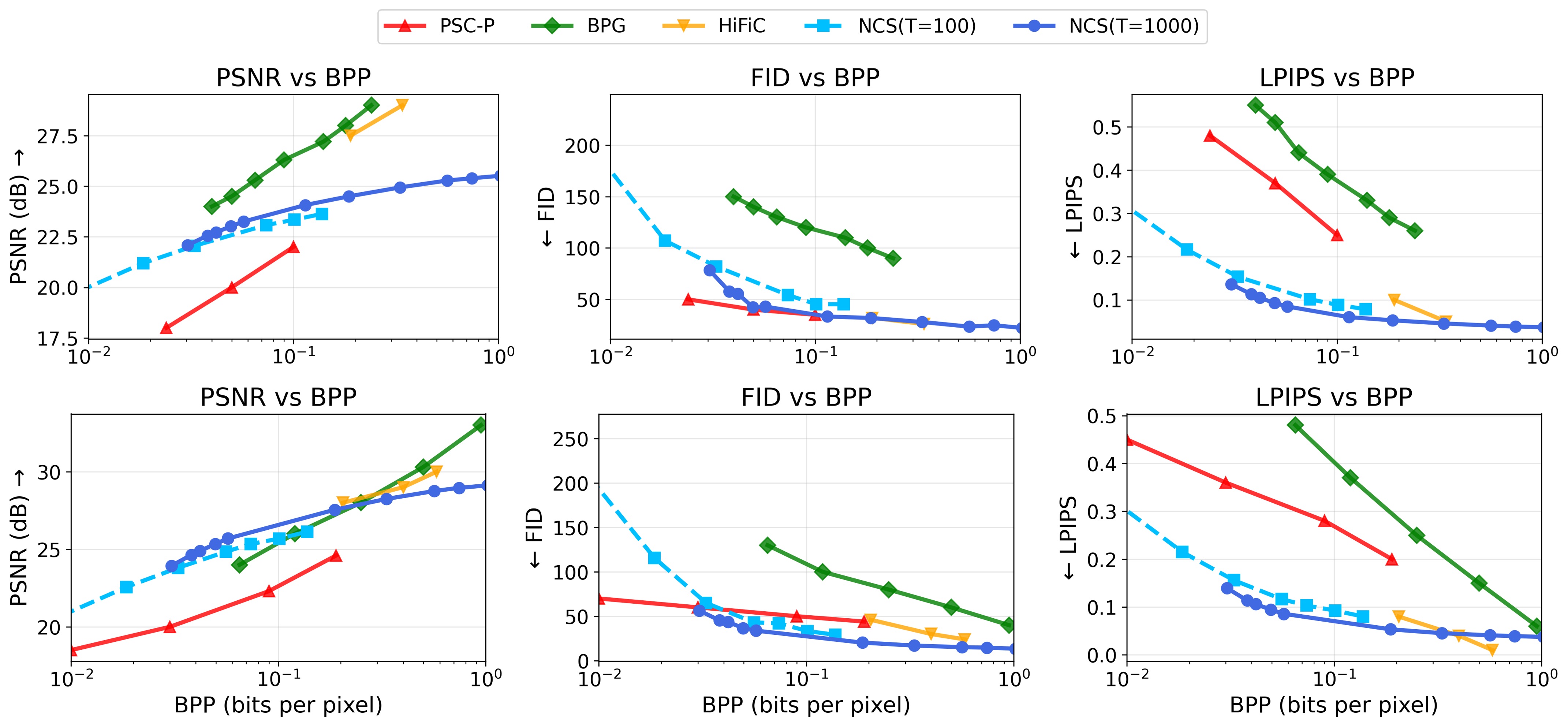}
    \caption{Comparison of compression methods. NCS achieves comparable reconstruction quality while reducing the number of compression steps from 1000 to 100, demonstrating significant efficiency with minimal quality loss.}
    \label{fig:compression_comparison}
\end{figure*} 

\begin{figure}[!t]
    \centering
    \includegraphics[width=0.95\linewidth]{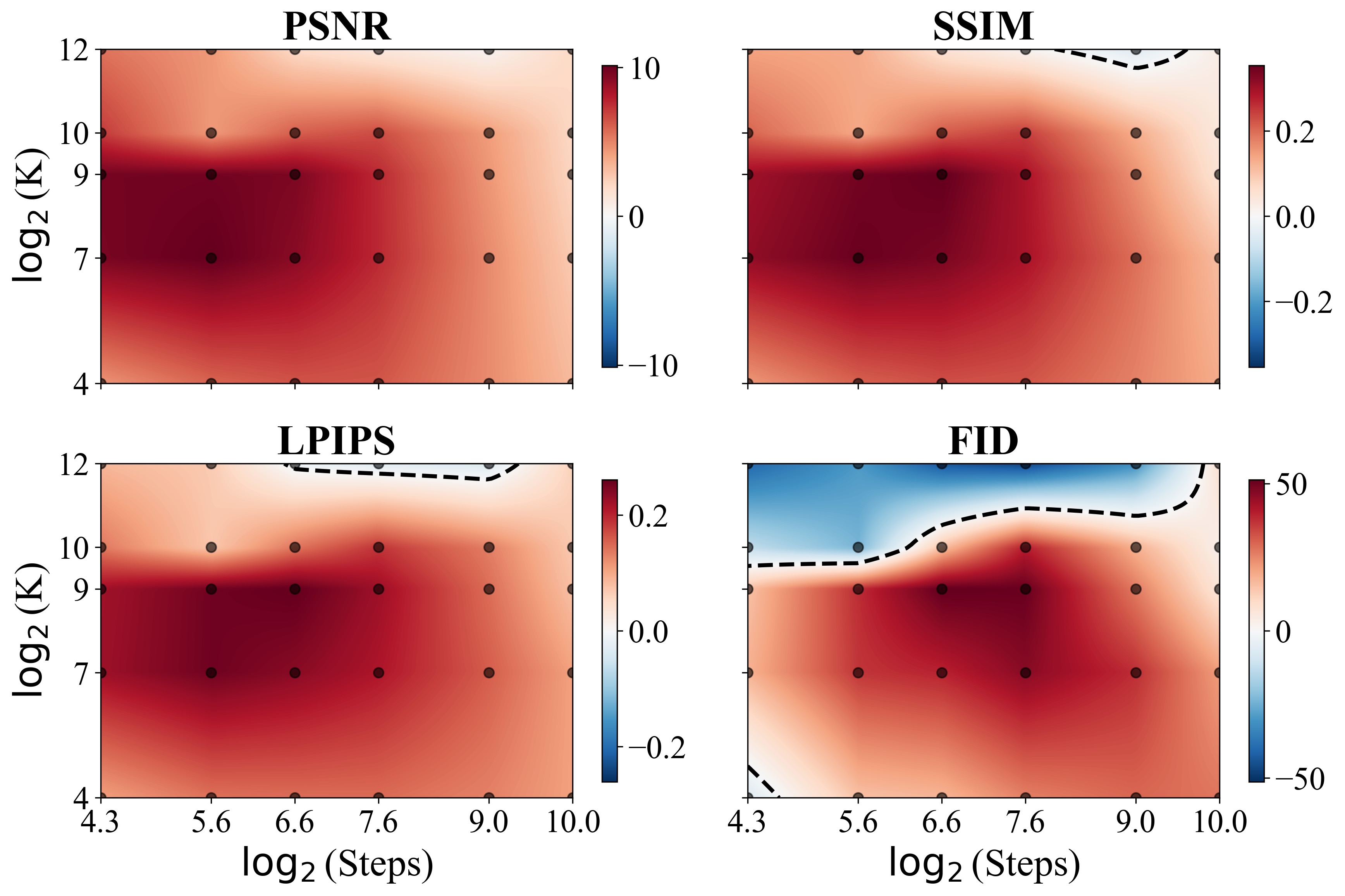}
    \caption{Influence of iteration and codebook size on inpainting task (box-region, $\sigma^2 = 0.05$). The \textcolor{red}{red} regime in the heatmap shows the improvement of NCS-DPS compared to DPS methods according to the metric.}
    \label{fig:heatmap}
\end{figure}

We analyze the sensitivity of NCS to the codebook size $K$ using an inpainting experiment in Figure~\ref{fig:heatmap}. It is evident that NCS yields consistent improvements for the DPS algorithm across almost all values of $K$ and step numbers $T$. As $K$ increases, the synthesized noise $\vec{\epsilon}^*$ becomes more aligned with the measurement score. Nevertheless, due to near-orthogonality in high-dimensional spaces, randomly sampled noise vectors remain almost orthogonal. As illustrated in Fig.~\ref{fig:inner_product_and_norm_vs_k}, the inner product between the synthesized noise and the measurement score grows linearly with $K$, while the $\ell_2$ norm remains consistent with Gaussian noise at the $\sqrt{d}$ scale.

\begin{proposition}[SNR Scaling of NCS]
    \label{prop:n_approx_sqrt_d}
    Let $\vec{\epsilon}^{*} \in \mathbb{R}^d$ denote the synthesized noise produced by NCS from $K$ i.i.d.\ Gaussian noise vectors. Define $\mathrm{SNR}(K,d)$ as the ratio between the expected energy of $\vec{\epsilon}^{*}$ along the measurement score direction and the expected energy in its orthogonal complement. Then,
    \[
        \mathrm{SNR}(K,d) \;=\; \frac{K}{d-1}.
    \]
    \end{proposition}
Choosing $K = \sqrt{d}$ yields $\mathrm{SNR}(K,d) = \Theta(1/\sqrt{d})$, which decays slowly enough to provide effective guidance while preserving the Gaussian-like structure of the noise. However, for ill-posed inverse problems where the forward operator $\mat{A}: \mathbb{R}^d \to \mathbb{R}^n$ satisfies $n \ll d$, the measurement score is effectively constrained to a lower-dimensional subspace of dimension $n$. Thus, we substitute the effective dimension $n$ for $d$ and choose $K \approx \sqrt{n}$ in practice. Task-specific values of $K$ are reported in Appendix~\ref{app:inverse_problems}. A detailed proof of Proposition~\ref{prop:n_approx_sqrt_d} is provided in Appendix~\ref{app:pseudo-noise-analysis}.

\subsection{Generative Compression Tasks}

Compression can be viewed as a special case of inverse problems, where $\mat{A} = \mat{I}$, $\vec{n} = \vec{0}$, and $\vec{y} = \vec{x}_0$. Since the stochasticity of DMs arises from the noise $\vec{\epsilon}_t$, DDCM fixes the noise by selecting, from a noise codebook, the noise vector that best represents the influence of the observation $\vec{y}$. Reconstruction is then achieved by choosing the corresponding noise vector. From this perspective, DDCM can be interpreted as a special case of NCS:

\begin{definition}[DDCM as a Special Case of NCS]
DDCM selects a single noise vector from a predefined codebook via one-hot maximization:
    \begin{equation}
        \vec{\epsilon}_t^* = \operatorname*{argmax}_{i \in \{1, \dots, K\}} 
        \left\langle \vec{x}_0 - \vec{\tilde{x}}_{0|t},\ \vec{\epsilon}_t^i \right\rangle .
    \end{equation}
    This procedure is equivalent to solving the NCS optimization problem in Eq.~\eqref{eq:ncs_optimization} under the additional constraint that $\vec{\gamma} \in \{0,1\}^K$ and $\|\vec{\gamma}\|_0 = 1$, i.e., exactly one component of $\vec{\gamma}$ is nonzero. A detailed proof of this equivalence is provided in Appendix~\ref{app:ncs_optimization}.
    \end{definition}

DDCM observes that employing multiple noise vectors can improve compression quality, and quantizes the combination weights $\gamma_i$ of each selected noise vector into $2^{C}$ bins to reduce storage overhead, where $C$ is the number of bits per coefficient. The resulting bits-per-pixel (BPP) is defined as ${(T-1)\left(m \log_2 K + C (m-1)\right)}/{(H \times W)}$, where $m$ denotes the number of selected noise vectors and $H \times W$ is the image resolution. To determine the optimal combination of noise vectors and their quantized weights, DDCM employs a greedy search strategy that iteratively explores all quantization bins (see~\citet{ohayon2025compressed}, Appendix~B.5). However, this procedure incurs a computational complexity that grows \emph{exponentially} with both $C$ and $m$, rendering it computationally prohibitive for large-scale or low-latency settings.

Based on Theorem~\ref{thm:ncs_optimal}, we propose a closed-form approximation for quantizing $\gamma_i$, which substantially enables increasing $m$ and $C$ while simultaneously reducing $T$. As shown in Figures~\ref{fig:compression_images} and~\ref{fig:compression_comparison}, reducing $T$ by a factor of $10$ while increasing both $m$ and the number of quantization bins preserves the compression ratio, incurs minimal reconstruction degradation, and achieves a $10\times$ speedup in compression time. The approximation time is negligible; additional implementation details are provided in Appendix~\ref{app:ncs_quantization}.

\begin{table*}[!t]
    \centering
\begin{footnotesize}
    \begin{sc}
    \caption{Quantitative comparison of baseline solvers and their NCS variants on FFHQ dataset. Each cell shows PSNR \,/\,FID \,/\,LPIPS.}
    \label{tab:inverse_results_ffhq_main}
    \begin{tabular}{llccc}
    \toprule
    & & \multicolumn{3}{c}{PSNR(↑)\,/\,FID(↓)\,/\,LPIPS(↓)}\\
    \cmidrule(lr){3-5}
    Task & Method & 20 & 100 & 1000 \\
    \midrule
    \multirow{8}{*}{\shortstack{Inpainting\\(Box)}}    
        & DPS              & 12.52\,/\,133.9 /\,0.497 & 18.67\,/\,95.65 /\,0.286 & 22.71\,/\,57.11 /\,0.139 \\
        & \cellcolor{gray!20}\textbf{NCS-DPS} & \cellcolor{gray!20}\textbf{19.16\,/\,116.0 /\,0.323} & \cellcolor{gray!20}\textbf{22.31\,/\,71.38 /\,0.170} & \cellcolor{gray!20}\textbf{23.41\,/\,39.82 /\,0.088} \\
        & MPGD             & 16.84\,/\,107.3 /\,\textbf{0.220} & 17.26\,/\,107.6 /\,0.164 & 13.51\,/\,241.7 /\,0.387 \\
        & \cellcolor{gray!20}\textbf{NCS-MPGD}& \cellcolor{gray!20}\textbf{19.00}\,/\,\textbf{99.31} /\,0.277 & \cellcolor{gray!20}\textbf{20.53\,/\,62.50 /\,{0.153}} & \cellcolor{gray!20}\textbf{20.96\,/\,44.42 /\,{0.101}} \\
        & DAPS             & 22.01\,/\,53.52 /\,0.209            &  {\textbf{22.56}}\,/\,46.49 /\, 0.197 & 24.20\,/\,{\textbf{39.79}} /\,\textbf{0.168}\\
        & \cellcolor{gray!20}\textbf{NCS-DAPS}& \cellcolor{gray!20}\textcolor{blue}{\textbf{22.33}}\,/\,\textcolor{blue}{\textbf{49.71}} /\,{\textbf{0.205}}& \cellcolor{gray!20}22.47\,/\,{\textbf{45.01}} /\,\textbf{0.195}   & \cellcolor{gray!20}{\textbf{24.24}}\,/\,39.98 /\,0.168\\
        & DDNM             & 22.05\,/\,69.50\,/\,\textcolor{blue}{\textbf{0.136}} & 23.87\,/\,62.90\,/\,0.108 & 23.99\,/\,112.9\,/\,0.162 \\
        & RED-Diff         & 15.92\,/\,211.3\,/\,0.809 & \textcolor{blue}{\textbf{24.62}}\,/\,\textcolor{blue}{\textbf{35.60}}\,/\,\textcolor{blue}{\textbf{0.091}} & \textcolor{blue}{\textbf{24.45}}\,/\,\textcolor{blue}{\textbf{25.00}}\,/\,\textcolor{blue}{\textbf{0.075}} \\
    \midrule
    \multirow{8}{*}{\shortstack{Inpainting\\(Random)}}    
        & DPS              & 13.13\,/\,132.8 /\,0.472 & 19.26\,/\,96.09 /\,0.278 & 27.35\,/\,58.85 /\,0.126 \\
        & \cellcolor{gray!20}\textbf{NCS-DPS} & \cellcolor{gray!20}\textbf{21.20\,/\,87.55 /\,0.297} & \cellcolor{gray!20}\textbf{\textcolor{blue}{27.31\,/\,42.34} /\,0.137} & \cellcolor{gray!20}\textbf{\textcolor{blue}{31.57}\,/\,\textcolor{blue}{14.28} /\,\textcolor{blue}{0.042}} \\
        & MPGD             & \textbf{\textcolor{blue}{21.80\,/\,76.86}} /\,\textcolor{blue}{\textbf{0.172}} & \textbf{25.45\,/\,45.73 /\,{0.100}} & 25.05\,/\,69.68 /\,0.214 \\
        & \cellcolor{gray!20}\textbf{NCS-MPGD}& \cellcolor{gray!20}20.25\,/\,102.7 /\,0.290 & \cellcolor{gray!20}25.21\,/\,61.82 /\,0.126 & \cellcolor{gray!20}\textbf{28.71\,/\,31.91 /\,0.049} \\
        & DAPS             & 14.08\,/\,259.2 /\,0.625            &  16.31\,/\,226.4 /\,0.543               &  25.33\,/\,67.69 /\,0.238\\
        & \cellcolor{gray!20}\textbf{NCS-DAPS}& \cellcolor{gray!20}\textbf{16.83}\,/\,\textbf{202.8} /\,\textbf{0.557}& \cellcolor{gray!20}\textbf{19.50}\,/\,\textbf{141.9} /\,\textbf{0.456}&\cellcolor{gray!20}\textbf{25.73\,/\,63.56 /\,0.230}\\
        & DDNM             & 15.48\,/\,229.4\,/\,0.612 & 22.64\,/\,130.4\,/\,0.259 & 26.87\,/\,91.20\,/\,0.157 \\
        & RED-Diff         & 11.70\,/\,416.4\,/\,1.285 & 21.50\,/\,204.8\,/\,{{0.409}} & 23.62\,/\,110.6\,/\,0.198 \\
    \midrule
    \multirow{8}{*}{SR 4×}
        & DPS              & 12.87\,/\,121.9\,/\,0.480 & 16.70\,/\,98.36\,/\,0.338 & 23.53\,/\,69.54\,/\,0.171 \\
        & \cellcolor{gray!20}\textbf{NCS-DPS} & \cellcolor{gray!20}\textbf{21.07\,/\,111.5\,/\,0.290} & \cellcolor{gray!20}\textbf{{26.33}\,/\,62.51\,/\,0.133} & \cellcolor{gray!20}\textbf{26.59\,/\,\textcolor{blue}{31.57}\,/\,\textcolor{blue}{0.084}} \\
        & MPGD             & 19.35\,/\,{\textbf{90.92}}\,/\,0.246 & 22.59\,/\,63.68\,/\,0.148 & 20.46\,/\,84.88\,/\,0.490 \\
        & \cellcolor{gray!20}\textbf{NCS-MPGD}& \cellcolor{gray!20}\textbf{22.83}\,/\,92.41\,/\,\textbf{0.231} & \cellcolor{gray!20}\textbf{25.82\,/\,\textcolor{blue}{49.53\,/\,0.115}} & \cellcolor{gray!20}\textbf{25.85\,/\,35.83\,/\,{0.161}} \\
        & DAPS             & 25.52\,/\,\textcolor{blue}{\textbf{89.95}}\,/\,0.329 &  {26.31\,/\,79.90\,/\,0.329}  & {{28.22}\,/\,52.80\,/\,0.200}\\
        & DDNM             & \textcolor{blue}{\textbf{27.88}}\,/\,99.80\,/\,\textcolor{blue}{\textbf{0.187}} & 28.49\,/\,98.60\,/\,0.169 & 29.03\,/\,101.7\,/\,0.176 \\
        & RED-Diff         & 15.95\,/\,173.0\,/\,0.911 & \textcolor{blue}{\textbf{29.91}}\,/\,72.20\,/\,0.149 & \textcolor{blue}{\textbf{30.27}}\,/\,64.30\,/\,0.136 \\
        \midrule
        \multirow{8}{*}{SR 8×}
            & DPS              & 11.84\,/\,118.3\,/\,0.506 & 15.05\,/\,102.1\,/\,0.387 & 20.87\,/\,76.90\,/\,\textbf{\textcolor{blue}{0.220}} \\
            & \cellcolor{gray!20}\textbf{NCS-DPS} & \cellcolor{gray!20}\textbf{20.83\,/\,109.4\,/\,0.290} & \cellcolor{gray!20}\textbf{24.09\,/\,67.34\,/\,\textcolor{blue}{0.154}} & \cellcolor{gray!20}\textbf{21.97\,/\,\textcolor{blue}{49.35}\,/\,{0.266}} \\
            & MPGD             & 17.60\,/\,93.26\,/\,0.303 & 20.08\,/\,77.28\,/\,0.213 & 18.08\,/\,87.93\,/\,0.560 \\
            & \cellcolor{gray!20}\textbf{NCS-MPGD}& \cellcolor{gray!20}\textbf{22.10}\,/\,\textcolor{blue}{\textbf{92.13}}\,/\,\textcolor{blue}{\textbf{0.246}} & \cellcolor{gray!20}\textbf{23.29}\,/\,\textcolor{blue}{\textbf{65.00}}\,/\,\textbf{0.170} & \cellcolor{gray!20}\textbf{21.67\,/\,60.28\,/\,0.352} \\
            & DAPS             & 23.66\,/\,202.8\,/\,0.387 & 24.02\,/\,270.5\,/\,0.337 & 25.20\,/\,188.4\,/\,\textbf{0.278} \\
            & \cellcolor{gray!20}\textbf{NCS-DAPS}& \cellcolor{gray!20}\textbf{{23.74}\,/\,199.9\,/\,0.382} & \cellcolor{gray!20}\textbf{{24.14}\,/\,273.6\,/\,0.334} & \cellcolor{gray!20}\textbf{{25.23}\,/\,188.6}\,/\,0.279 \\
            & DDNM             & \textcolor{blue}{\textbf{24.52}}\,/\,112.2\,/\,0.252 & 25.09\,/\,112.6\,/\,0.239 & 25.51\,/\,121.2\,/\,0.254 \\
            & RED-Diff         & 15.83\,/\,217.0\,/\,0.974 & \textcolor{blue}{\textbf{26.38}}\,/\,104.4\,/\,0.271 & \textcolor{blue}{\textbf{26.38}}\,/\,94.00\,/\,0.258 \\
        \midrule
        \multirow{8}{*}{Gaussian Deblur}
            & DPS              & 12.12\,/\,139.4\,/\,0.495 & 20.02\,/\,88.92\,/\,0.253 & 24.81\,/\,60.57\,/\,0.130 \\
            & \cellcolor{gray!20}\textbf{NCS-DPS} & \cellcolor{gray!20}\textbf{21.59\,/\,115.1\,/\,0.311} & \cellcolor{gray!20}\textbf{26.68\,/\,52.14\,/\,0.115} & \cellcolor{gray!20}\textbf{27.02\,/\,\textcolor{blue}{49.90}\,/\,\textcolor{blue}{0.083}} \\
            & MPGD             & 21.36\,/\,78.46\,/\,0.206 & 23.85\,/\,64.22\,/\,0.133 & 24.80\,/\,74.67\,/\,0.130 \\
            & \cellcolor{gray!20}\textbf{NCS-MPGD}& \cellcolor{gray!20}\textbf{25.42\,/\,\textcolor{blue}{\textbf{74.20}}\,/\,\textcolor{blue}{0.184}} & \cellcolor{gray!20}\textbf{{26.58}}\,/\,\textcolor{blue}{\textbf{48.04}}\,/\,\textcolor{blue}{\textbf{0.110}} & \cellcolor{gray!20}\textbf{{26.36}\,/\,86.73\,/\,0.166} \\
            & DAPS             & {25.80}\,/\,85.06\,/\,0.319 & {26.63\,/\,71.26\,/\,0.267} & \textcolor{blue}{\textbf{28.38}}\,/\,{{50.10}}\,/\,{0.187} \\
            & DDNM             & \textcolor{blue}{\textbf{27.86}}\,/\,92.50\,/\,0.241 & \textcolor{blue}{\textbf{28.25}}\,/\,105.5\,/\,0.249 & 28.21\,/\,105.5\,/\,0.250 \\
            & RED-Diff         & 15.87\,/\,251.2\,/\,1.025 & 25.71\,/\,116.3\,/\,0.305 & 26.90\,/\,91.30\,/\,0.232 \\
        \midrule
        \multirow{8}{*}{Motion Deblur}
            & DPS              & 12.21\,/\,140.3\,/\,0.493 & 20.19\,/\,88.61\,/\,0.251 & 25.87\,/\,56.65\,/\,0.118 \\
            & \cellcolor{gray!20}\textbf{NCS-DPS} & \cellcolor{gray!20}\textbf{22.05\,/\,114.9\,/\,0.303} & \cellcolor{gray!20}\textbf{\textcolor{blue}{28.28}\,/\,47.25\,/\,{0.097}} & \cellcolor{gray!20}\textbf{29.50\,/\,\textcolor{blue}{28.41}\,/\,0.044} \\
            & MPGD             & 21.86\,/\,76.79\,/\,0.191 & 24.88\,/\,53.05\,/\,0.107 & 25.54\,/\,54.78\,/\,0.142 \\
            & \cellcolor{gray!20}\textbf{NCS-MPGD}& \cellcolor{gray!20}\textbf{{26.22}}\,/\,\textcolor{blue}{\textbf{64.51}}\,/\,\textcolor{blue}{\textbf{0.153}} & \cellcolor{gray!20}\textbf{{27.76}}\,/\,\textcolor{blue}{\textbf{39.31}}\,/\,\textcolor{blue}{\textbf{0.078}} & \cellcolor{gray!20}\textbf{28.17}\,/\,\textbf{41.91}\,/\,\textbf{\textcolor{blue}{0.083}} \\
            & DAPS             & \textbf{\textcolor{blue}{26.25}}\,/\,66.91\,/\,{0.289} & {27.77}\,/\,57.02\,/\,{0.232} & {{30.52}}\,/\,{34.34}\,/\,0.139 \\
            & RED-Diff         & 16.01\,/\,200.0\,/\,0.946 & 28.17\,/\,88.30\,/\,0.205 & \textcolor{blue}{\textbf{32.06}}\,/\,50.10\,/\,0.092 \\
        \bottomrule
    \end{tabular}
\end{sc}
\end{footnotesize}
\end{table*}

\section{Experiments}

For experiments on FFHQ~\citep{karras2019style} and ImageNet~\citep{deng2009imagenet}, reported in Tables~\ref{tab:inverse_results_ffhq_main} and~\ref{tab:inverse_results_imagenet_main}, we adopt the DMs from~\citet{dhariwal2021diffusion}. For compression experiments, we use Stable Diffusion~2.0 models as introduced in~\citet{rombach2022highresolutionimagesynthesislatent}. These experiments were conducted on an NVIDIA RTX~4090 GPU. Experiments on scientific inverse problems were performed using the models provided by~\citet{zheng2025inversebench} and were conducted on an NVIDIA RTX~5090 GPU.

\subsection{Inverse Problem Solving}

We evaluate our methods on image inpainting (box/random region), super-resolution ($\times4$ /$\times8$), Gaussian deblurring, and motion deblurring. We select DPS, MPGD, DDNM~\citep{wang2022zero}, Red-Diff~\citep{mardani2024a}, and DAPS~\citep{zhang2025improving} as baseline solvers. All problem settings follow~\citet{chung2023diffusion}, and all tasks use $\sigma = 0.05$. Detailed configurations are provided in Appendix~\ref{app:inverse_problems}. We apply the NCS framework to DPS, MPGD, and DAPS to compare with the original methods. We use \textbf{black bold font} to indicate the better result between each baseline method and its NCS counterpart, and \textcolor{blue}{\textbf{blue bold font}} to highlight the best-performing method among all approaches under the same setting. For tasks where NCS-DAPS yields negligible improvement over DAPS (e.g., SR~$\times$4), we omit NCS-DAPS results for brevity. Our NCS framework improves existing DPS and MPGD methods and achieves the best performance on several tasks.

\begin{table*}[!t]
    \centering
    \caption{Quantitative comparison of baseline solvers and their NCS variants on ImageNet. Each cell shows PSNR / FID / LPIPS.}
    \label{tab:inverse_results_imagenet_main}
    \begin{footnotesize}
        \begin{sc}
    \begin{tabular}{llccc}
    \toprule
    & & \multicolumn{3}{c}{PSNR(↑) / FID(↓) / LPIPS(↓)}\\
    \cmidrule(lr){3-5}
    Task & Method & 20 & 100 & 1000 \\
    \midrule
    
    \multirow{8}{*}{\shortstack{Inpainting\\(Random)}}    
        & DPS              & 12.88\,/\,271.9\,/\,0.671 & 16.86\,/\,231.9\,/\,0.545 & 23.96\,/\,101.2\,/\,0.278 \\
        & \cellcolor{gray!20}\textbf{NCS-DPS} & \cellcolor{gray!20}\textbf{17.66\,/\,262.1\,/\,0.596} & \cellcolor{gray!20}\textbf{23.70\,/\,110.1\,/\,{0.300}} & \cellcolor{gray!20}\textbf{28.69\,/\,37.77\,/\,{0.097}} \\
        & MPGD             & 17.15\,/\,\textbf{121.8}\,/\,\textbf{0.285} & 19.32\,/\,170.9\,/\,0.435 & 16.38\,/\,263.9\,/\,0.770 \\
        & \cellcolor{gray!20}\textbf{NCS-MPGD}& \cellcolor{gray!20}\textbf{17.47}\,/\,238.2\,/\,0.495 & \cellcolor{gray!20}\textbf{22.05\,/\,102.0\,/\,0.222} & \cellcolor{gray!20}\textbf{24.02\,/\,58.80\,/\,0.178} \\
        & DAPS             & 25.59\,/\,60.41\,/\,0.252 &  {\textbf{26.31}}\,/\,44.64\,/\,{\textbf{0.329}}  & {\textbf{28.79}}\,/\,\textcolor{blue}{\textbf{23.83}}\,/\,\textbf{0.146}\\
        & \cellcolor{gray!20}\textbf{NCS-DAPS}& \cellcolor{gray!20}{\textbf{25.97}}\,/\,{\textbf{55.28}}\,/\,{\textbf{0.240}}&    \cellcolor{gray!20}25.75\,/\,{\textbf{42.57}}\,/\,0.318    & \cellcolor{gray!20}28.64\,/\,25.44\,/\,0.149\\
        & DDNM             & \textcolor{blue}{\textbf{29.25}}\,/\,\textcolor{blue}{\textbf{46.20}}\,/\,\textcolor{blue}{\textbf{0.103}} & \textcolor{blue}{\textbf{31.41}}\,/\,\textcolor{blue}{\textbf{42.40}}\,/\,\textcolor{blue}{\textbf{0.080}} & \textcolor{blue}{\textbf{32.51}}\,/\,{40.80}\,/\,\textcolor{blue}{\textbf{0.071}} \\
        & RED-Diff         & 15.17\,/\,266.7\,/\,0.769 & 20.84\,/\,103.2\,/\,0.212 & 20.63\,/\,86.90\,/\,0.177 \\
    \midrule
    \multirow{7}{*}{SR 4×}
        & DPS              & 12.12\,/\,\textbf{255.6}\,/\,0.688 & 14.92\,/\,248.3\,/\,0.587 & 20.40\,/\,144.6\,/\,0.365 \\
        & \cellcolor{gray!20}\textbf{NCS-DPS} & \cellcolor{gray!20}\textbf{17.60}\,/\,260.9\,/\,\textbf{0.595} & \cellcolor{gray!20}\textbf{22.62\,/\,112.4\,/\,0.328} & \cellcolor{gray!20}\textbf{23.78\,/\,\textcolor{blue}{44.12}\,/\,\textcolor{blue}{0.195}} \\
        & MPGD             & 16.41\,/\,191.0\,/\,0.420 & 18.60\,/\,96.65\,/\,0.327 & 10.43\,/\,292.1\,/\,1.109 \\
        & \cellcolor{gray!20}\textbf{NCS-MPGD}& \cellcolor{gray!20}\textbf{19.83\,/\,165.5\,/\,0.417} & \cellcolor{gray!20}\textbf{22.35\,/\,{76.57}\,/\,\textcolor{blue}{0.223}} & \cellcolor{gray!20}\textbf{16.94\,/\,92.43\,/\,0.704} \\
        & DAPS             & {23.25}\,/\,135.9\,/\,{0.368} &  {{23.02}}\,/\,{257.4}\,/\,{{0.358}}  & {{25.21}}\,/\,{105.0}\,/\,{0.301}\\
        & DDNM             & \textcolor{blue}{\textbf{24.87}}\,/\,\textcolor{blue}{\textbf{112.5}}\,/\,\textcolor{blue}{\textbf{0.364}} & 25.41\,/\,118.4\,/\,0.340 & {25.71}\,/\,123.6\,/\,0.322 \\
        & RED-Diff         & 15.46\,/\,232.4\,/\,0.900 & \textcolor{blue}{\textbf{26.01}}\,/\,\textcolor{blue}{\textbf{75.60}}\,/\,0.306 & \textcolor{blue}{\textbf{26.11}}\,/\,{79.10}\,/\,0.282 \\
        \midrule
        \multirow{7}{*}{Gaussian Deblur}
            & DPS              & 12.02\,/\,265.5\,/\,0.675 & 17.32\,/\,213.8\,/\,0.493 & 21.01\,/\,105.2\,/\,0.298 \\
            & \cellcolor{gray!20}\textbf{NCS-DPS} & \cellcolor{gray!20}\textbf{13.18\,/\,247.3\,/\,0.656} & \cellcolor{gray!20}\textbf{22.29\,/\,87.41\,/\,0.298} & \cellcolor{gray!20}\textbf{23.87}\,/\,\textcolor{blue}{\textbf{72.03}}\,/\,\textcolor{blue}{\textbf{0.215}} \\
            & MPGD             & 16.53\,/\,196.2\,/\,0.413 & 14.72\,/\,261.9\,/\,\textbf{\textcolor{blue}{0.253}} & 10.69\,/\,317.4\,/\,1.079 \\
            & \cellcolor{gray!20}\textbf{NCS-MPGD}& \cellcolor{gray!20}\textbf{22.06}\,/\,\textbf{\textcolor{blue}{114.6}}\,/\,\textbf{0.376} & \cellcolor{gray!20}\textbf{22.73}\,/\,\textbf{\textcolor{blue}{93.45}}\,/\,{0.258} & \cellcolor{gray!20}\textbf{16.97}\,/\,\textbf{206.9}\,/\,\textbf{0.720} \\
            & DAPS             & {{23.42}}\,/\,135.2\,/\,\textcolor{blue}{\textbf{0.370}} & \textcolor{blue}{\textbf{26.62}}\,/\,{120.6}\,/\,{0.267} & \textcolor{blue}{\textbf{24.89}}\,/\,{102.6}\,/\,{0.275} \\
            & DDNM             & \textcolor{blue}{\textbf{24.15}}\,/\,{129.0}\,/\,0.476 & {24.35}\,/\,137.6\,/\,0.483 & {24.41}\,/\,138.3\,/\,0.470 \\
            & RED-Diff         & 15.26\,/\,313.2\,/\,1.067 & 22.59\,/\,195.7\,/\,0.566 & 23.33\,/\,171.7\,/\,0.448 \\
        \midrule
        \multirow{7}{*}{Motion Deblur}
            & DPS              & 12.05\,/\,265.5\,/\,0.676 & 17.39\,/\,218.4\,/\,0.502 & 22.75\,/\,87.47\,/\,0.268 \\
            & \cellcolor{gray!20}\textbf{NCS-DPS} & \cellcolor{gray!20}\textbf{13.39\,/\,250.8\,/\,0.648} & \cellcolor{gray!20}\textbf{24.13\,/\,77.85\,/\,0.255} & \cellcolor{gray!20}\textbf{27.25\,/\,\textcolor{blue}{34.19}\,/\,\textcolor{blue}{0.103}} \\
            & MPGD             & 16.99\,/\,165.3\,/\,0.375 & 14.81\,/\,242.0\,/\,0.253 & 13.99\,/\,227.4\,/\,0.886 \\
            & \cellcolor{gray!20}\textbf{NCS-MPGD}& \cellcolor{gray!20}\textbf{22.83\,/\,90.49}\,/\,\textcolor{blue}{\textbf{0.283}} & \cellcolor{gray!20}\textbf{23.90}\,/\,\textbf{\textcolor{blue}{62.78\,/\,0.169}} & \cellcolor{gray!20}\textbf{17.46\,/\,157.6\,/\,0.648} \\
            & DAPS             & \textbf{\textcolor{blue}{24.54}}\,/\,\textcolor{blue}{\textbf{80.48}}\,/\,{0.297} & \textcolor{blue}{\textbf{25.21}}\,/\,{68.35}\,/\,{0.272} & {{27.93}}\,/\,{43.42}\,/\,{0.181} \\
            & RED-Diff         & 15.48\,/\,289.6\,/\,0.970 & 24.96\,/\,135.5\,/\,0.365 & \textcolor{blue}{\textbf{28.85}}\,/\,61.40\,/\,0.185 \\
    \bottomrule
    \end{tabular}
    \end{sc}
    \end{footnotesize}
    \end{table*}
    
        \begin{table*}[!t]
        \centering
        \caption{Comprehensive comparison with standard deviations across Scientific Inverse Problems benchmarks.}
        \label{tab:all-results-std}
        \begin{footnotesize}
        \resizebox{\textwidth}{!}{%
        \begin{tabular}{l|ccc|cc|ccc}
        \toprule
        \multirow{2}{*}{Method} & \multicolumn{3}{c|}{Blackhole} & \multicolumn{2}{c|}{Inverse Scattering} & \multicolumn{3}{c}{MRI Knee} \\
        \cmidrule(lr){2-4} \cmidrule(lr){5-6} \cmidrule(lr){7-9}
         & PSNR $\uparrow$ & $\chi^2_{\text{cp}}$ $\downarrow$ & $\chi^2_{\text{logca}}$ $\downarrow$ & PSNR $\uparrow$ & SSIM $\uparrow$ & PSNR $\uparrow$ & SSIM $\uparrow$ & Misfit $\downarrow$ \\
        \midrule
        DPS      & 26.77{\scriptsize $\pm$4.02} & 12.09{\scriptsize $\pm$44.6} & 4.23{\scriptsize $\pm$9.49} & 29.96{\scriptsize $\pm$4.44} & .798{\scriptsize $\pm$.155} & 27.01{\scriptsize $\pm$1.45} & .592{\scriptsize $\pm$.109} & 34.49{\scriptsize $\pm$25.0} \\
        \cellcolor{gray!20}\textbf{NCS-DPS}  & \cellcolor{gray!20}\textcolor{blue}{\textbf{27.70}}{\scriptsize $\pm$3.72} & \cellcolor{gray!20}\textbf{6.18}{\scriptsize $\pm$34.8} & \cellcolor{gray!20}\textbf{2.66}{\scriptsize $\pm$11.0} & \cellcolor{gray!20}\textcolor{blue}{\textbf{30.42}}{\scriptsize $\pm$5.10} & \cellcolor{gray!20}\textbf{.899}{\scriptsize $\pm$.027} & \cellcolor{gray!20}\textbf{29.41}{\scriptsize $\pm$1.23} & \cellcolor{gray!20}\textbf{.686}{\scriptsize $\pm$.088} & \cellcolor{gray!20}\textbf{32.67}{\scriptsize $\pm$24.6} \\
        DAPS     & 25.40{\scriptsize $\pm$3.48} & \textcolor{blue}{\textbf{1.45}}{\scriptsize $\pm$0.59} & \textcolor{blue}{\textbf{1.15}}{\scriptsize $\pm$0.20} & 28.50{\scriptsize $\pm$4.57} & .889{\scriptsize $\pm$.029} & 20.89{\scriptsize $\pm$6.13} & .343{\scriptsize $\pm$.223} & \textcolor{blue}{\textbf{29.77}}{\scriptsize $\pm$23.9} \\
        REDDiff  & 23.81{\scriptsize $\pm$3.95} & 1.93{\scriptsize $\pm$1.71} & 2.04{\scriptsize $\pm$2.86} & 29.90{\scriptsize $\pm$4.93} & \textcolor{blue}{\textbf{.953}}{\scriptsize $\pm$.040} & \textcolor{blue}{\textbf{30.75}}{\scriptsize $\pm$1.58} & \textcolor{blue}{\textbf{.745}}{\scriptsize $\pm$.108} & 30.55{\scriptsize $\pm$24.5} \\
        \bottomrule
        \end{tabular}%
        }
        \end{footnotesize}
        \end{table*}

For scientific inverse problems, we evaluate three tasks, including black hole imaging, linear inverse scattering, and compressed sensing MRI, using the same parameters as those reported in Appendix~B of~\citet{zheng2025inversebench}. Under the default setting $K = \sqrt{n}$, the NCS-DPS method consistently improves upon DPS and outperforms state-of-the-art methods with hyperparameter search on several tasks.

For nonlinear tasks, the improvements brought by NCS are primarily observed in the low-step regime. We report the corresponding experimental results in Appendix~\ref{app:nonlinear}.

\subsection{Compression Experiments} 

We conducted experiments to evaluate the compression performance of our NCS-based approach. We compared our method against several state-of-the-art compression algorithms including PSC-P, BPG, and HiFiC \citep{Albalawi2015BPGHardware, elata2024zero, muckley2023improving, mentzer2020high} on the Kodak24 dataset \citep{franzen1999kodak} and the ImageNet dataset, using the same experimental setup as in DDCM \citet{ohayon2025compressed}. The results demonstrate that our method maintains high fidelity while achieving significant storage savings compared to baseline approaches.




\section{Conclusion}
In this work, we propose NCS, a framework for approximating the measurement score in DMs via an optimal linear combination of noise vectors. We derive a closed-form solution to the alignment optimization and show that NCS unifies and generalizes existing inverse problem solvers, achieving strong performance with fewer diffusion steps and improved stability. We further demonstrate that NCS substantially accelerates DDCM while maintaining comparable reconstruction quality. Several open questions remain, such as understanding the limited improvement of NCS-DAPS, which we plan to investigate in future work.

\bibliographystyle{plainnat}
\bibliography{references}

\newpage
\appendix

\section{Additional Visual Results for Inverse Problems}
\begin{figure}[H]
    \centering
    \includegraphics[width=0.95\textwidth]{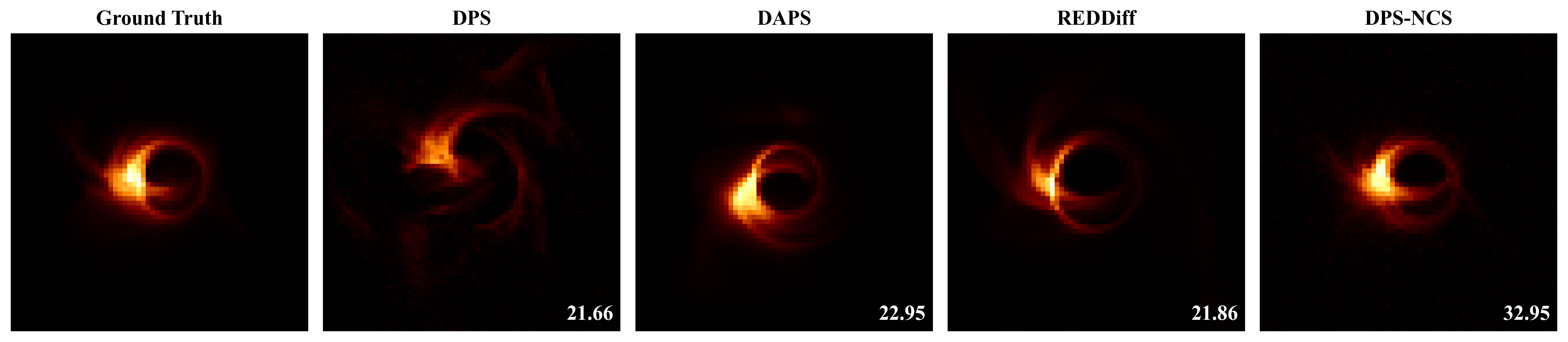}\\[2pt]
    \includegraphics[width=0.95\textwidth]{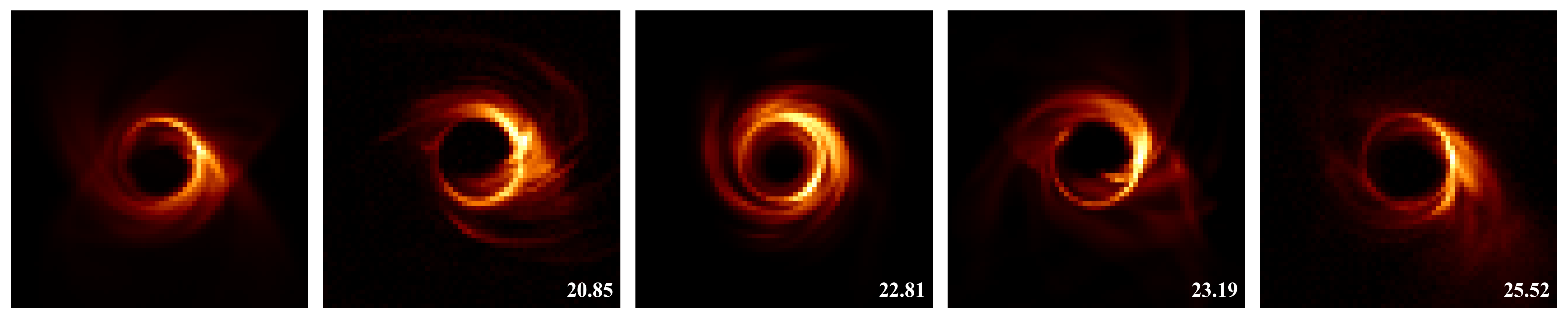}\\[2pt]
    \includegraphics[width=0.95\textwidth]{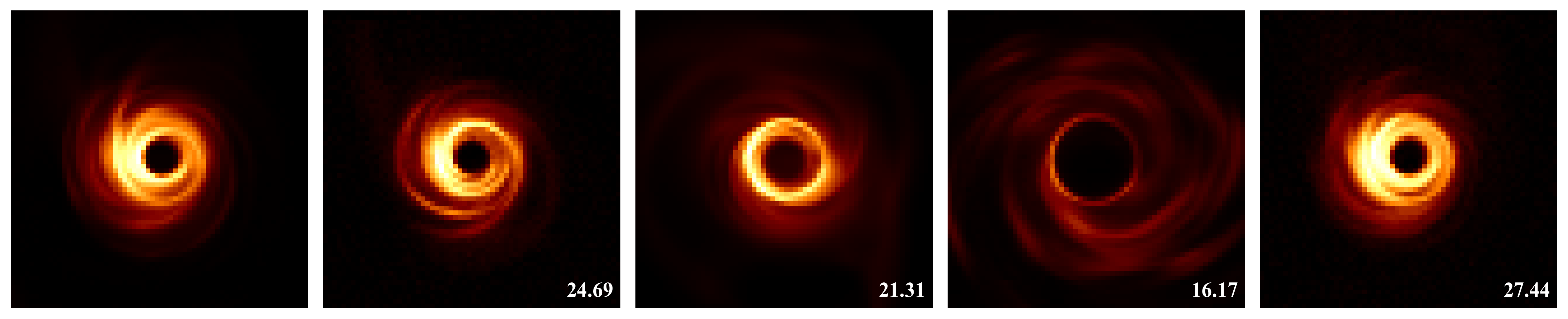}\\[2pt]
    \includegraphics[width=0.95\textwidth]{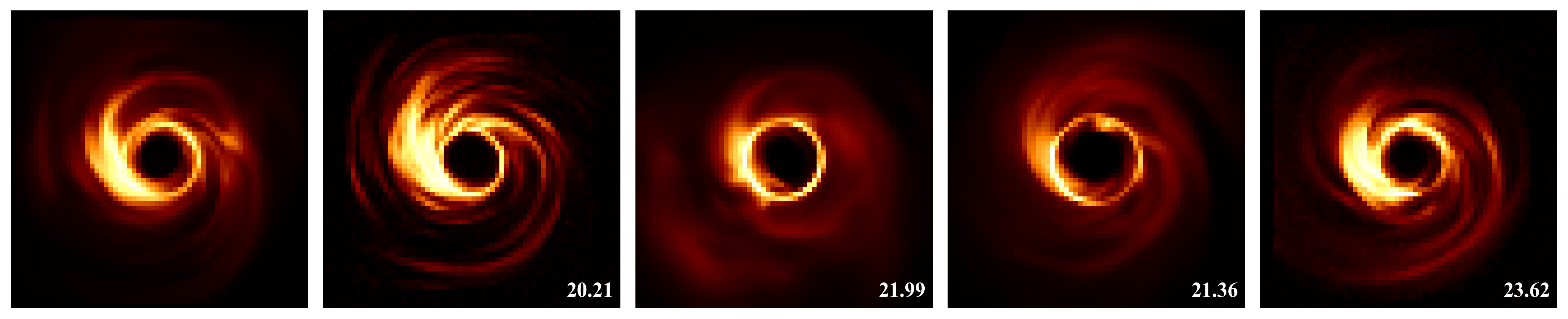}\\[2pt]
    \includegraphics[width=0.95\textwidth]{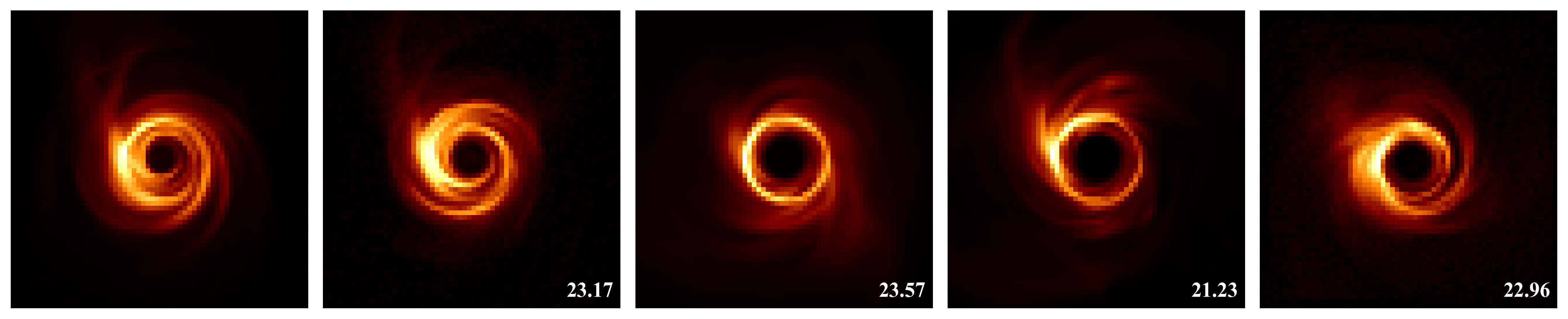}
    \caption{Additional visual results for \textbf{Black Hole Imaging}. Each row shows a different sample comparing ground truth and reconstructions from DPS, DAPS, REDDiff, and NCS-DPS (Ours).}
    \label{fig:app_blackhole}
\end{figure}

\clearpage
\begin{figure}[H]
    \centering
    \includegraphics[width=0.95\textwidth]{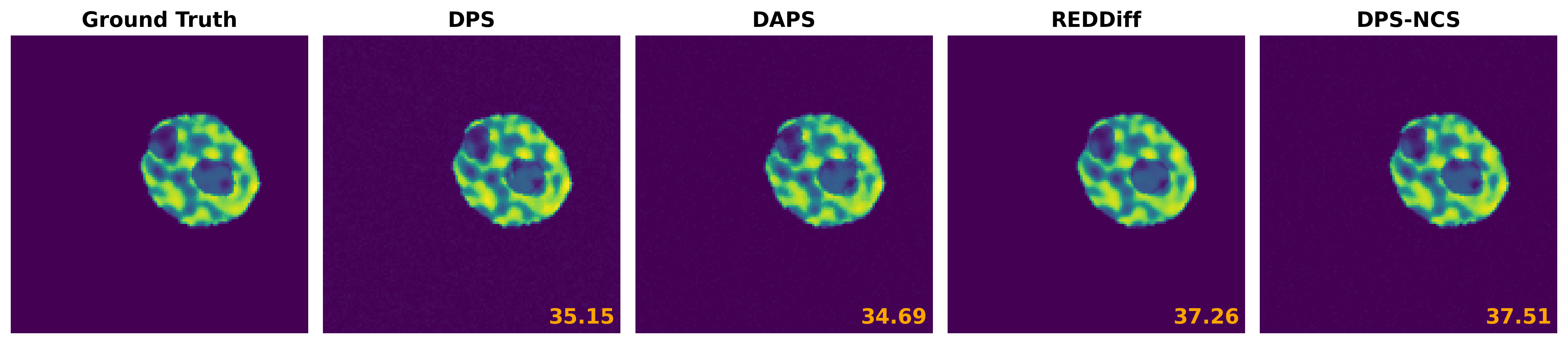}\\[2pt]
    \includegraphics[width=0.95\textwidth]{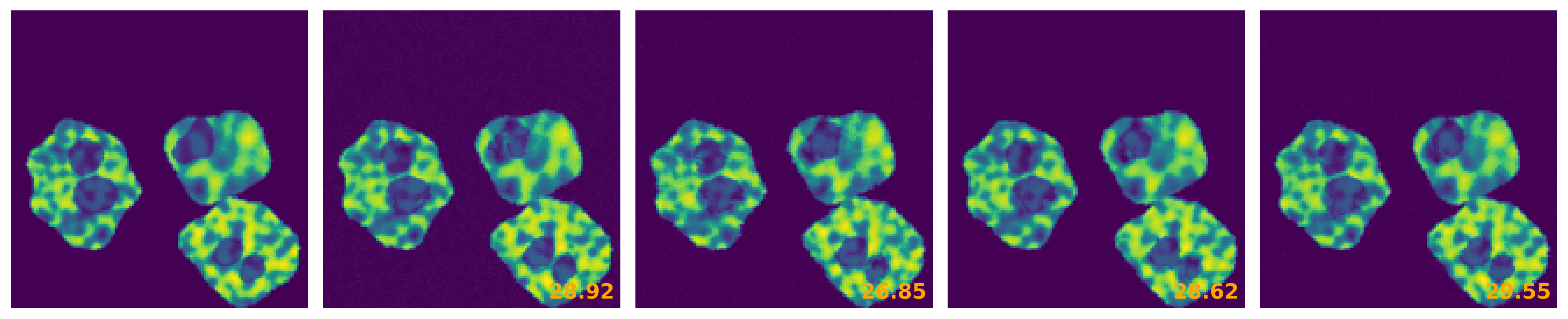}\\[2pt]
    \includegraphics[width=0.95\textwidth]{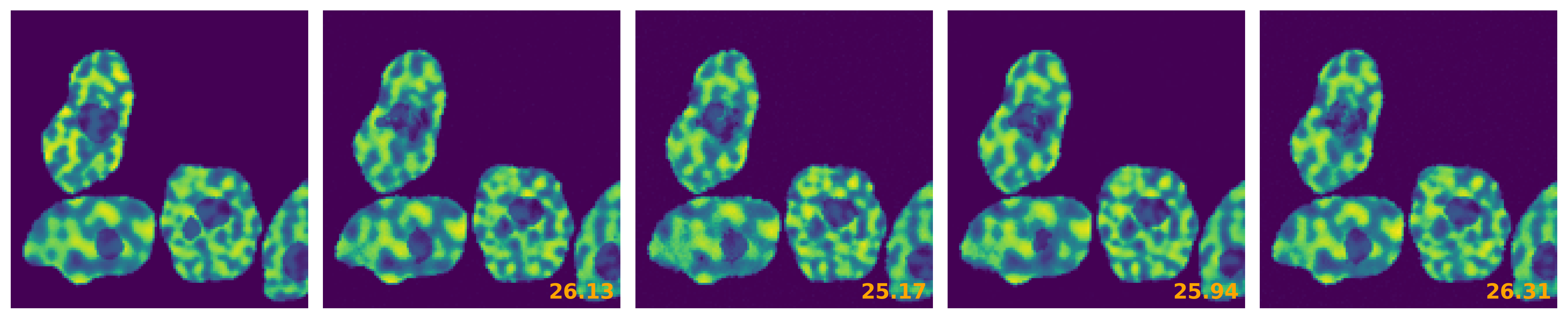}\\[2pt]
    \includegraphics[width=0.95\textwidth]{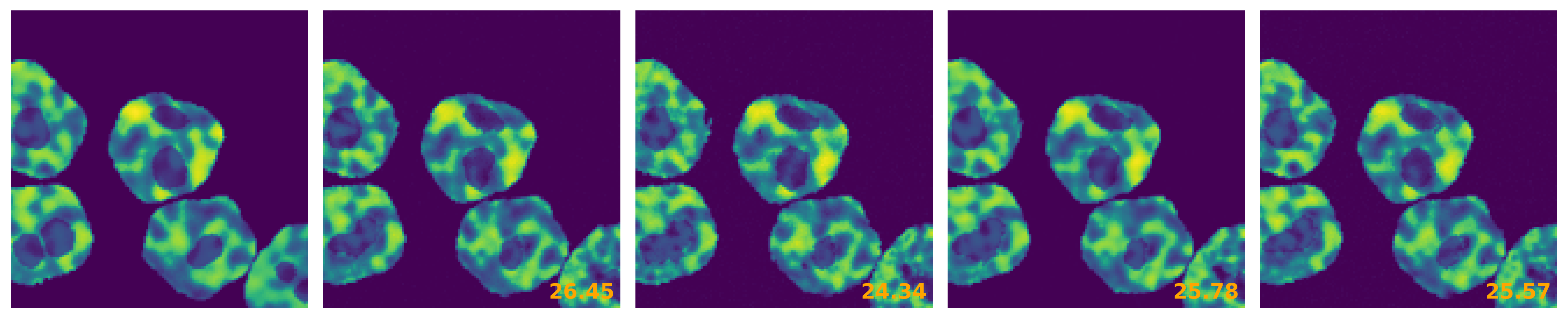}\\[2pt]
    \includegraphics[width=0.95\textwidth]{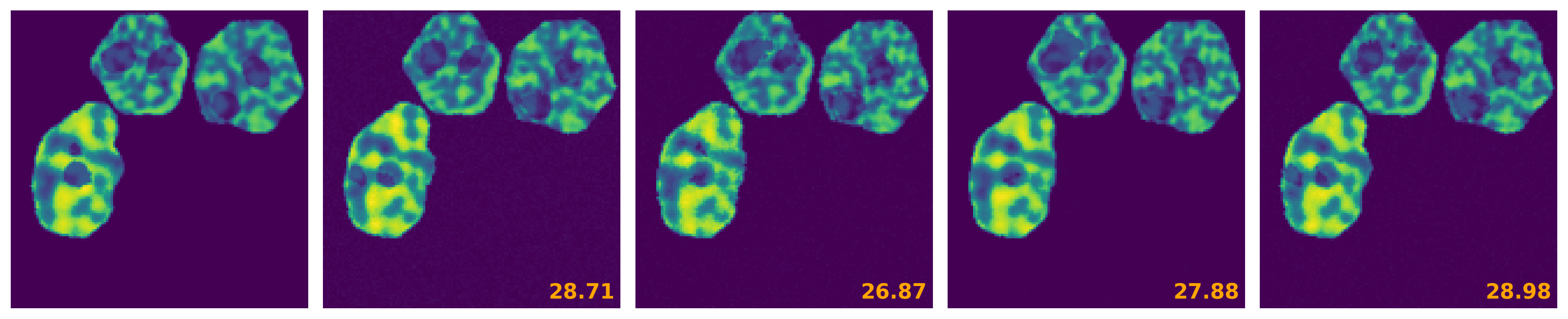}
    \caption{Additional visual results for \textbf{Inverse Scattering}. Each row shows a different sample comparing ground truth and reconstructions from DPS, DAPS, REDDiff, and NCS-DPS (Ours).}
    \label{fig:app_inv_scatter}
\end{figure}

\begin{figure}[H]
    \centering
    \includegraphics[width=0.95\textwidth]{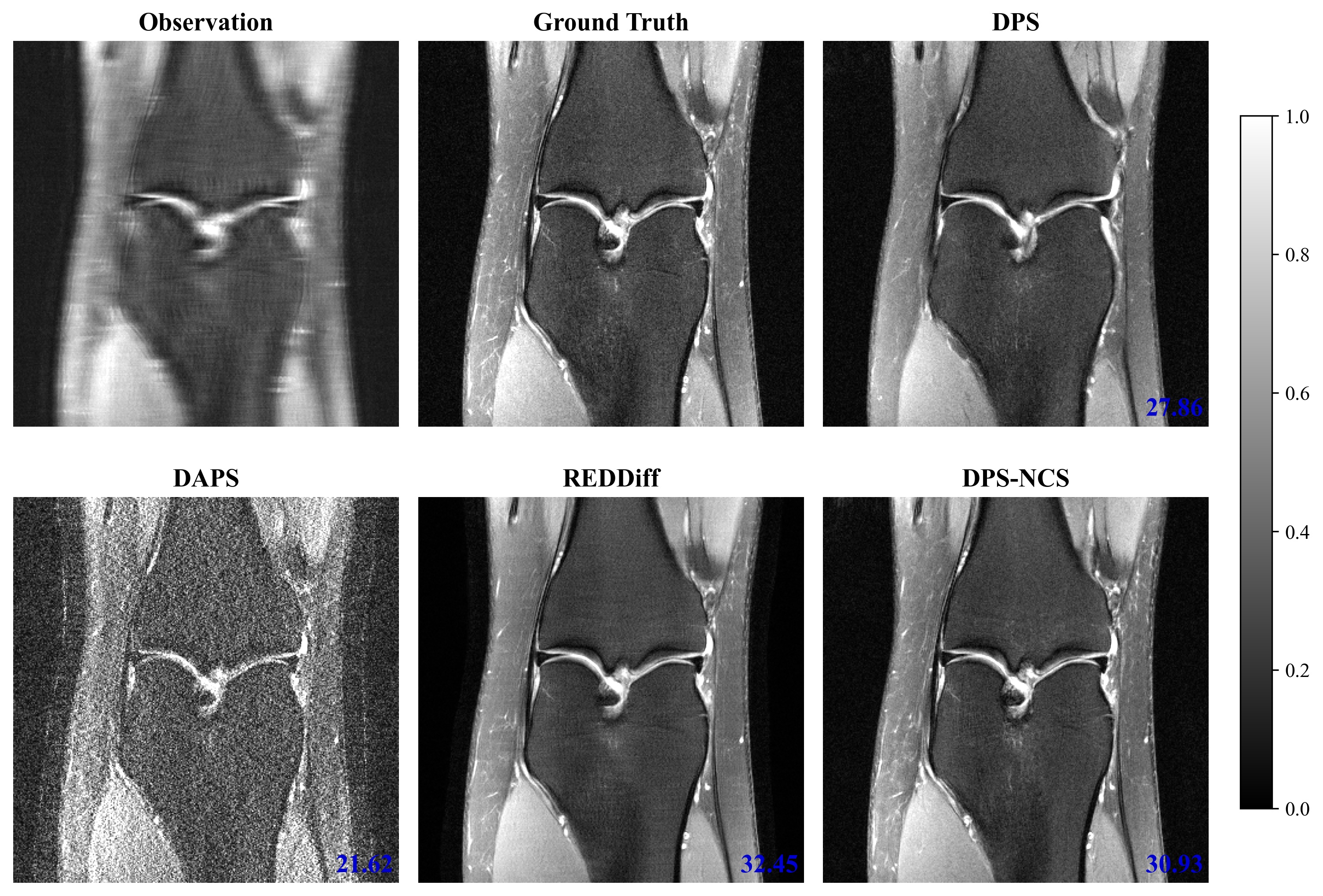}\\[4pt]
    \includegraphics[width=0.95\textwidth]{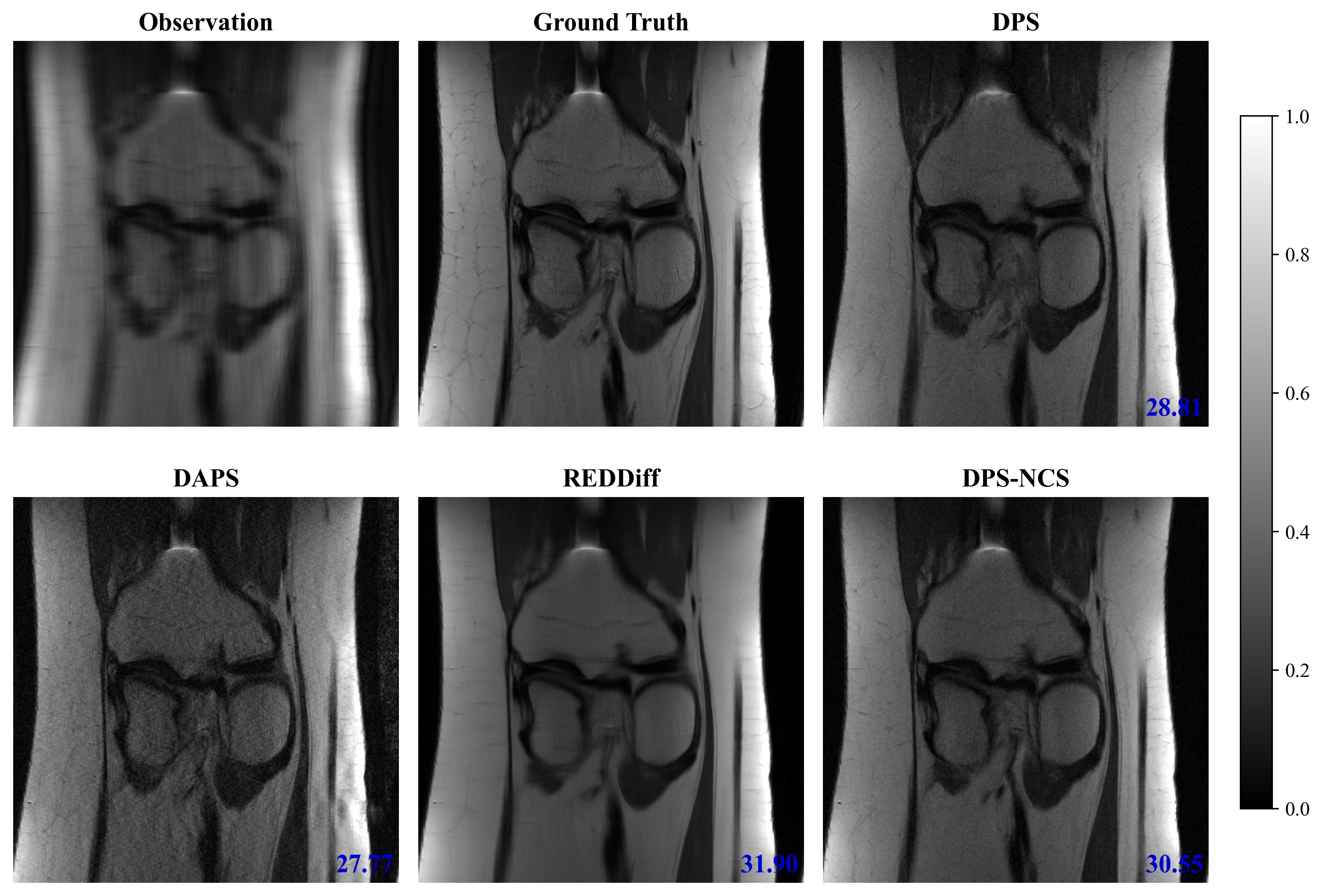}
    \caption{Additional visual results for \textbf{Compressed Sensing MRI}. Each row shows a different sample comparing ground truth and reconstructions from DPS, DAPS, REDDiff, and NCS-DPS (Ours).}
    \label{fig:app_mri}
\end{figure}

\clearpage
\begin{figure}[p]
    \centering
    \begin{tabular}{cccccc}
    \multicolumn{6}{c}{\textbf{Gaussian Deblur - 1000 Steps}} \\[2pt]
    \scriptsize Input & \scriptsize Label & \scriptsize DAPS & \scriptsize DDNM & \scriptsize REDDiff & \scriptsize NCS-DPS \\
    \includegraphics[width=0.13\textwidth]{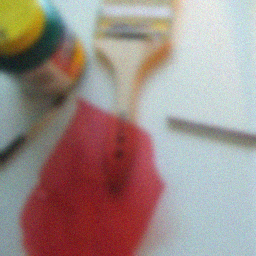} &
    \includegraphics[width=0.13\textwidth]{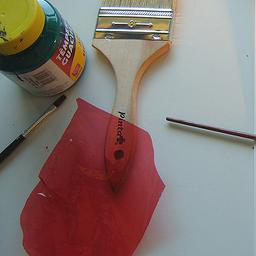} &
    \includegraphics[width=0.13\textwidth]{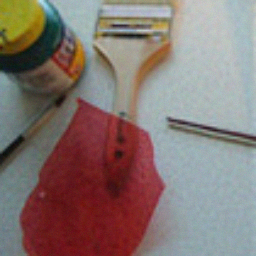} &
    \includegraphics[width=0.13\textwidth]{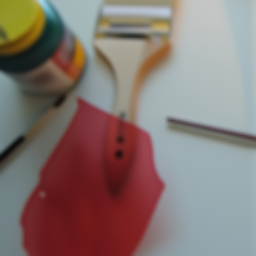} &
    \includegraphics[width=0.13\textwidth]{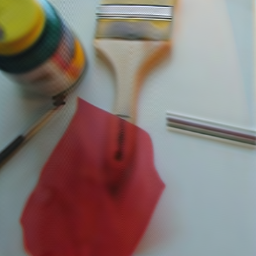} &
    \includegraphics[width=0.13\textwidth]{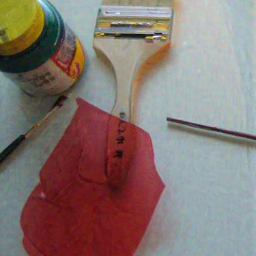} \\
    \includegraphics[width=0.13\textwidth]{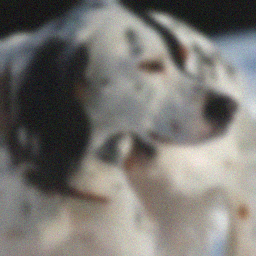} &
    \includegraphics[width=0.13\textwidth]{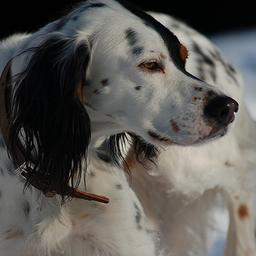} &
    \includegraphics[width=0.13\textwidth]{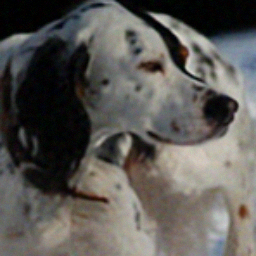} &
    \includegraphics[width=0.13\textwidth]{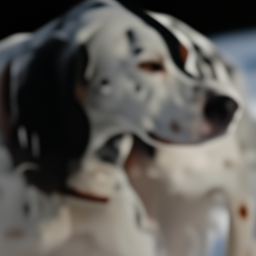} &
    \includegraphics[width=0.13\textwidth]{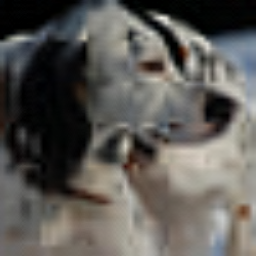} &
    \includegraphics[width=0.13\textwidth]{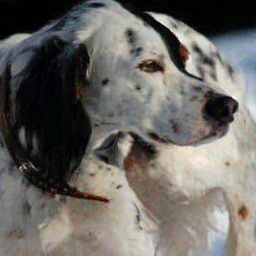} \\
    \includegraphics[width=0.13\textwidth]{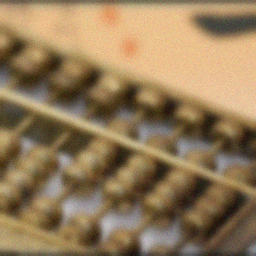} &
    \includegraphics[width=0.13\textwidth]{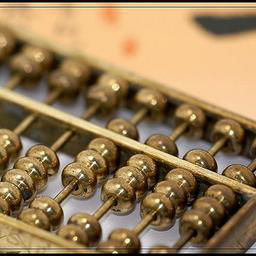} &
    \includegraphics[width=0.13\textwidth]{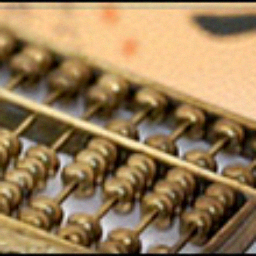} &
    \includegraphics[width=0.13\textwidth]{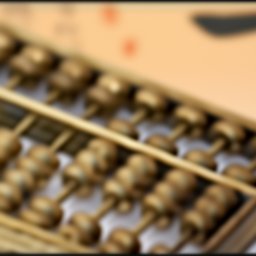} &
    \includegraphics[width=0.13\textwidth]{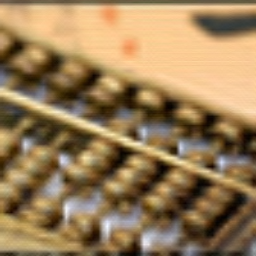} &
    \includegraphics[width=0.13\textwidth]{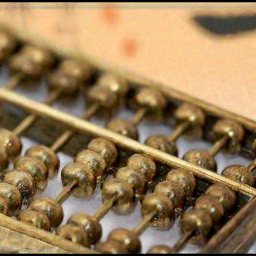} \\[6pt]
    \multicolumn{6}{c}{\textbf{Super Resolution 4$\times$ - 1000 Steps}} \\[2pt]
    \scriptsize Input & \scriptsize Label & \scriptsize DAPS & \scriptsize DDNM & \scriptsize REDDiff & \scriptsize NCS-DPS \\
    \includegraphics[width=0.13\textwidth]{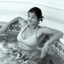} &
    \includegraphics[width=0.13\textwidth]{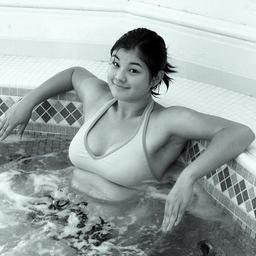} &
    \includegraphics[width=0.13\textwidth]{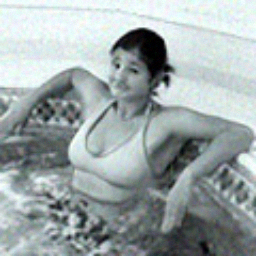} &
    \includegraphics[width=0.13\textwidth]{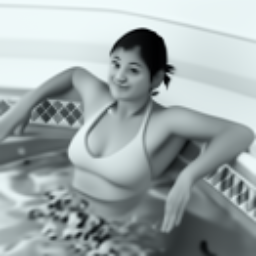} &
    \includegraphics[width=0.13\textwidth]{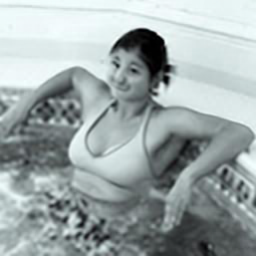} &
    \includegraphics[width=0.13\textwidth]{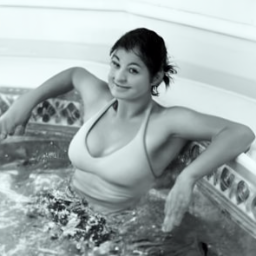} \\
    \includegraphics[width=0.13\textwidth]{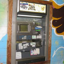} &
    \includegraphics[width=0.13\textwidth]{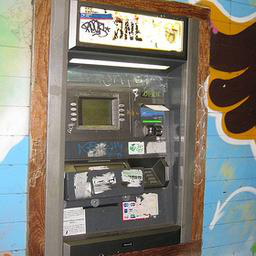} &
    \includegraphics[width=0.13\textwidth]{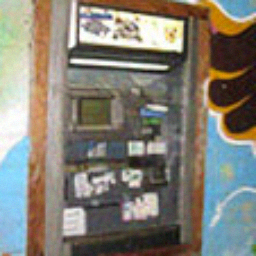} &
    \includegraphics[width=0.13\textwidth]{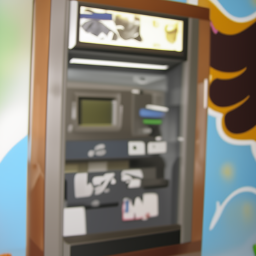} &
    \includegraphics[width=0.13\textwidth]{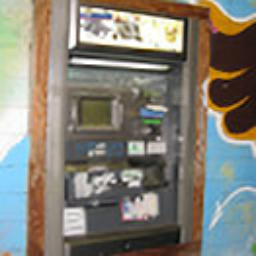} &
    \includegraphics[width=0.13\textwidth]{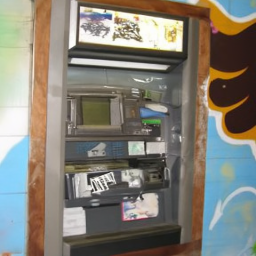} \\
    \includegraphics[width=0.13\textwidth]{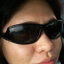} &
    \includegraphics[width=0.13\textwidth]{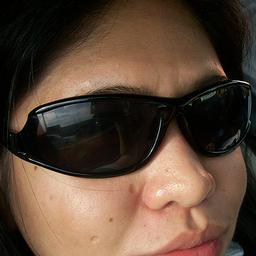} &
    \includegraphics[width=0.13\textwidth]{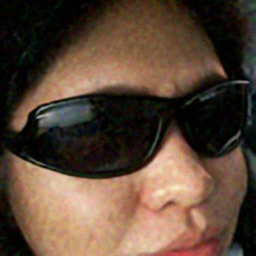} &
    \includegraphics[width=0.13\textwidth]{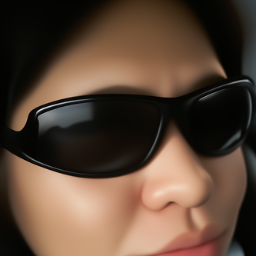} &
    \includegraphics[width=0.13\textwidth]{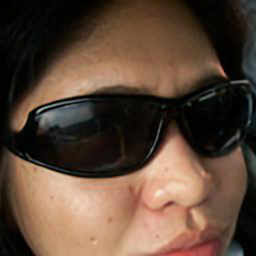} &
    \includegraphics[width=0.13\textwidth]{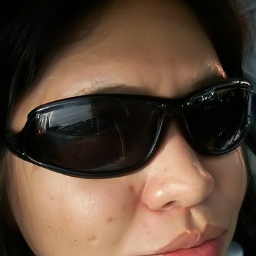} \\[6pt]
    \multicolumn{6}{c}{\textbf{Super Resolution 8$\times$ - 1000 Steps}} \\[2pt]
    \scriptsize Input & \scriptsize Label & \scriptsize DAPS & \scriptsize DDNM & \scriptsize REDDiff & \scriptsize NCS-DPS \\
    \includegraphics[width=0.13\textwidth]{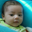} &
    \includegraphics[width=0.13\textwidth]{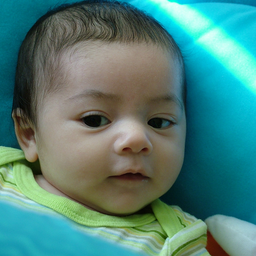} &
    \includegraphics[width=0.13\textwidth]{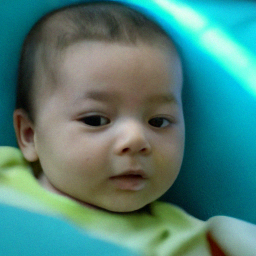} &
    \includegraphics[width=0.13\textwidth]{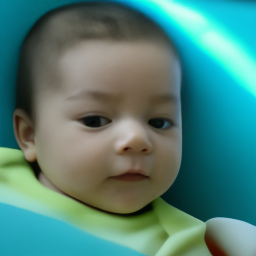} &
    \includegraphics[width=0.13\textwidth]{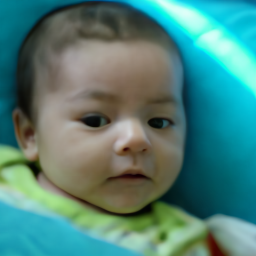} &
    \includegraphics[width=0.13\textwidth]{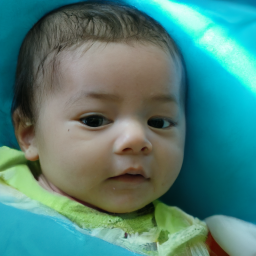} \\
    \includegraphics[width=0.13\textwidth]{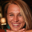} &
    \includegraphics[width=0.13\textwidth]{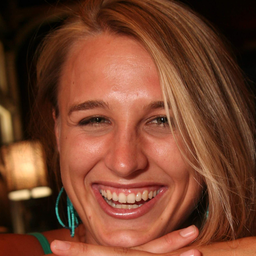} &
    \includegraphics[width=0.13\textwidth]{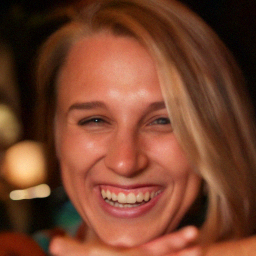} &
    \includegraphics[width=0.13\textwidth]{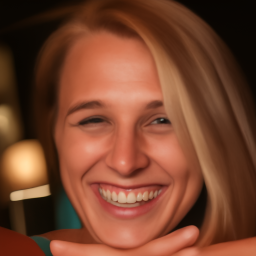} &
    \includegraphics[width=0.13\textwidth]{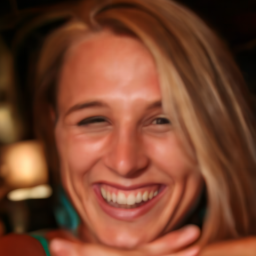} &
    \includegraphics[width=0.13\textwidth]{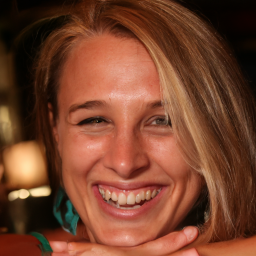} \\
    \includegraphics[width=0.13\textwidth]{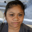} &
    \includegraphics[width=0.13\textwidth]{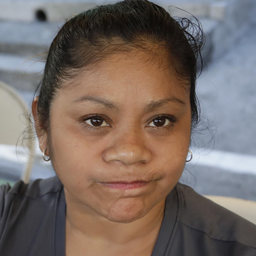} &
    \includegraphics[width=0.13\textwidth]{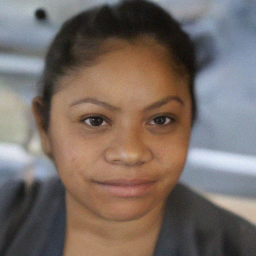} &
    \includegraphics[width=0.13\textwidth]{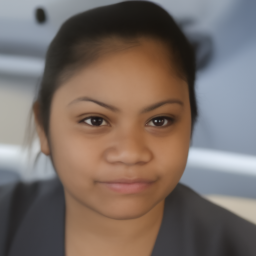} &
    \includegraphics[width=0.13\textwidth]{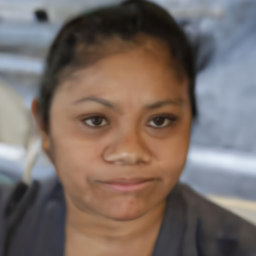} &
    \includegraphics[width=0.13\textwidth]{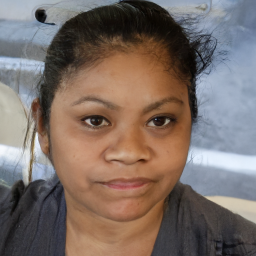} \\
    \end{tabular}
    \caption{Visual comparison on Gaussian deblur and super-resolution tasks with 1000 sampling steps. Each section shows three samples comparing input, ground truth, and reconstructions from DAPS, DDNM, REDDiff, and NCS-DPS (Ours).}
    \label{fig:app_visual_1000}
\end{figure}

\clearpage
\begin{figure}[p]
    \centering
    \begin{tabular}{cccccc}
    \multicolumn{6}{c}{\textbf{Gaussian Blur - Timestep 20}} \\
    Input & Label & DPS & NCS-DPS & MPGD & NCS-MPGD \\
    \includegraphics[width=0.13\textwidth]{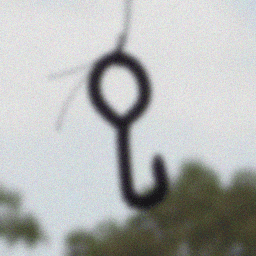} &
    \includegraphics[width=0.13\textwidth]{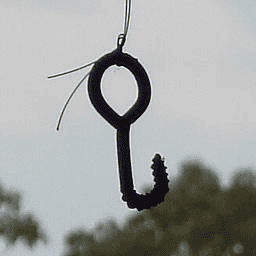} &
    \includegraphics[width=0.13\textwidth]{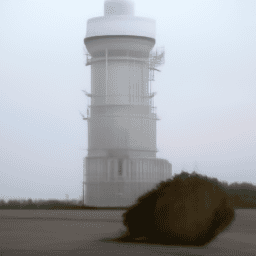} &
    \includegraphics[width=0.13\textwidth]{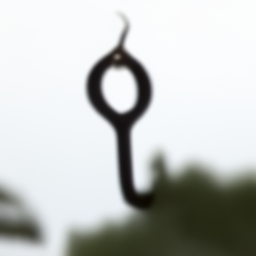} &
    \includegraphics[width=0.13\textwidth]{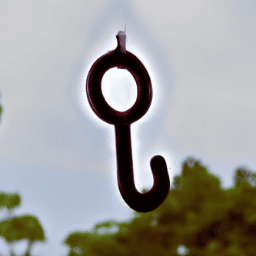} &
    \includegraphics[width=0.13\textwidth]{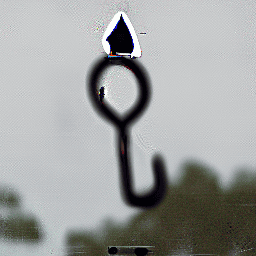} \\
    
    \includegraphics[width=0.13\textwidth]{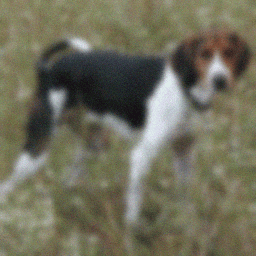} &
    \includegraphics[width=0.13\textwidth]{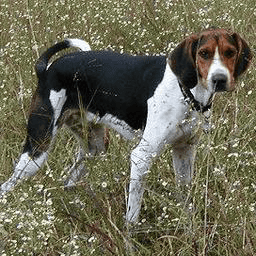} &
    \includegraphics[width=0.13\textwidth]{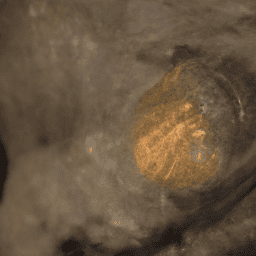} &
    \includegraphics[width=0.13\textwidth]{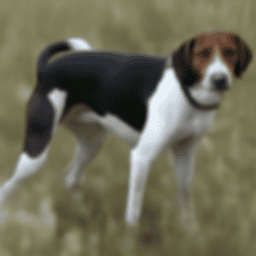} &
    \includegraphics[width=0.13\textwidth]{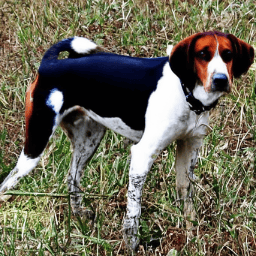} &
    \includegraphics[width=0.13\textwidth]{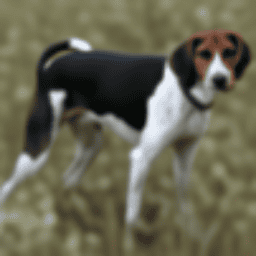} \\
    
    \includegraphics[width=0.13\textwidth]{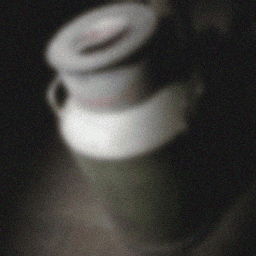} &
    \includegraphics[width=0.13\textwidth]{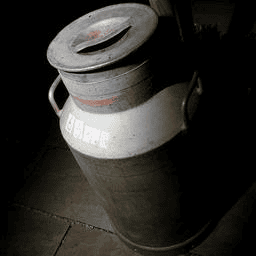} &
    \includegraphics[width=0.13\textwidth]{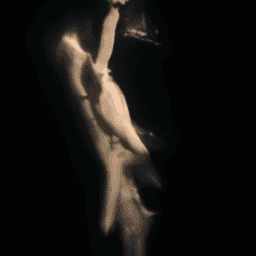} &
    \includegraphics[width=0.13\textwidth]{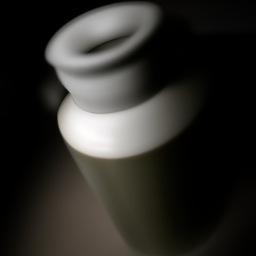} &
    \includegraphics[width=0.13\textwidth]{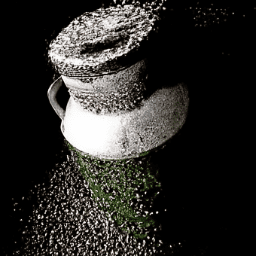} &
    \includegraphics[width=0.13\textwidth]{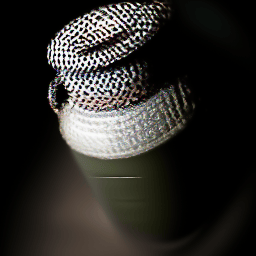} \\
    
    \multicolumn{6}{c}{\textbf{Gaussian Blur - Timestep 100}} \\
    \includegraphics[width=0.13\textwidth]{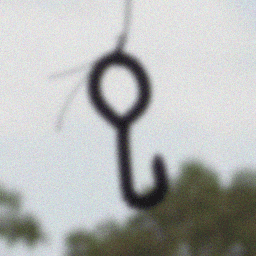} &
    \includegraphics[width=0.13\textwidth]{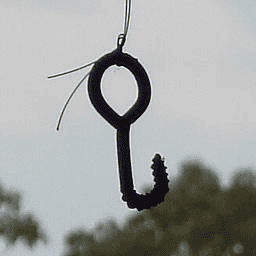} &
    \includegraphics[width=0.13\textwidth]{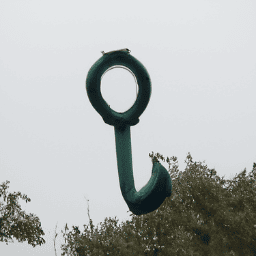} &
    \includegraphics[width=0.13\textwidth]{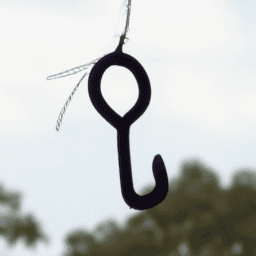} &
    \includegraphics[width=0.13\textwidth]{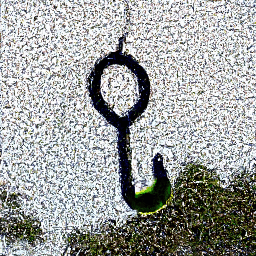} &
    \includegraphics[width=0.13\textwidth]{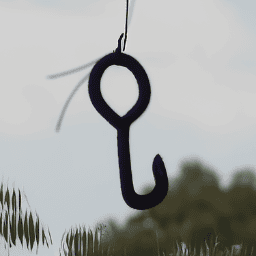} \\
    
    \includegraphics[width=0.13\textwidth]{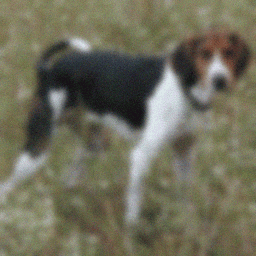} &
    \includegraphics[width=0.13\textwidth]{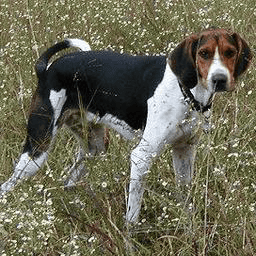} &
    \includegraphics[width=0.13\textwidth]{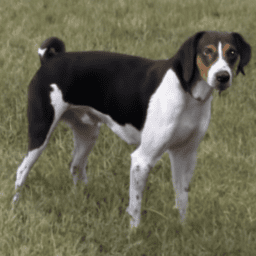} &
    \includegraphics[width=0.13\textwidth]{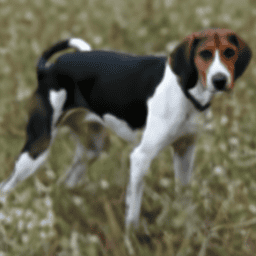} &
    \includegraphics[width=0.13\textwidth]{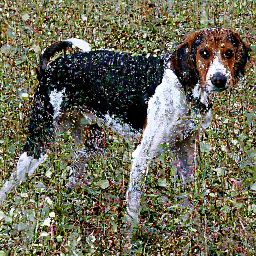} &
    \includegraphics[width=0.13\textwidth]{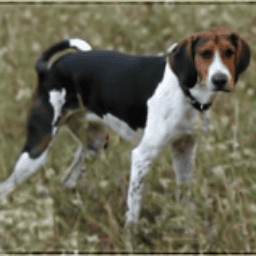} \\
    
    \includegraphics[width=0.13\textwidth]{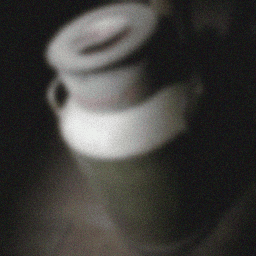} &
    \includegraphics[width=0.13\textwidth]{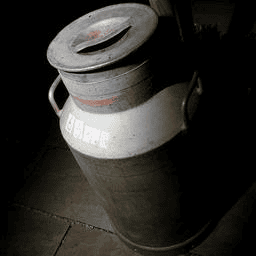} &
    \includegraphics[width=0.13\textwidth]{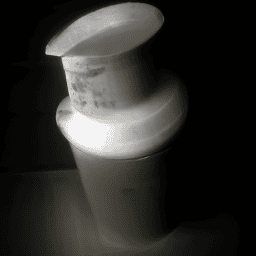} &
    \includegraphics[width=0.13\textwidth]{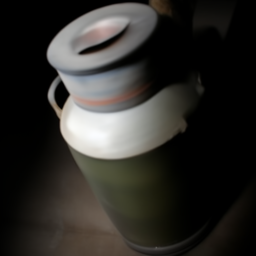} &
    \includegraphics[width=0.13\textwidth]{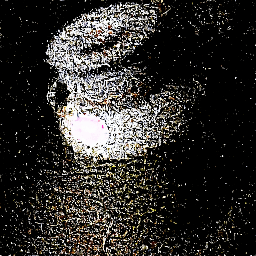} &
    \includegraphics[width=0.13\textwidth]{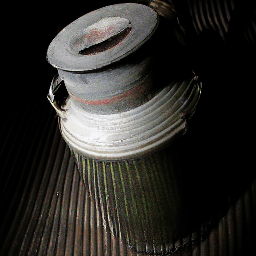} \\
    
    \multicolumn{6}{c}{\textbf{Gaussian Blur - Timestep 1000}} \\
    \includegraphics[width=0.13\textwidth]{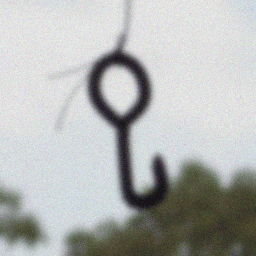} &
    \includegraphics[width=0.13\textwidth]{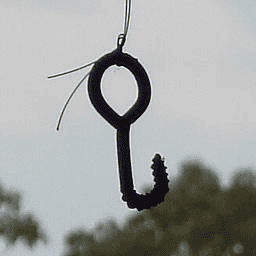} &
    \includegraphics[width=0.13\textwidth]{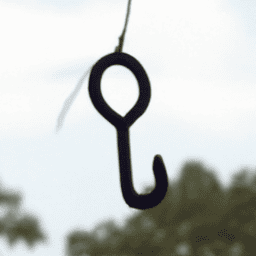} &
    \includegraphics[width=0.13\textwidth]{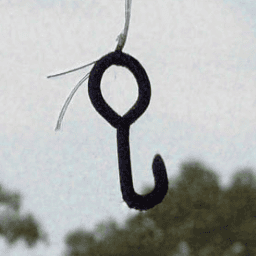} &
    \includegraphics[width=0.13\textwidth]{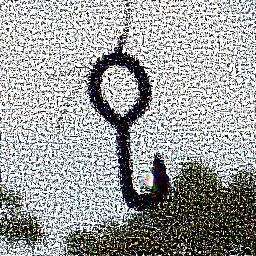} &
    \includegraphics[width=0.13\textwidth]{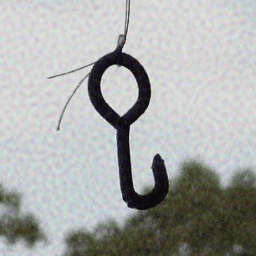} \\
    
    \includegraphics[width=0.13\textwidth]{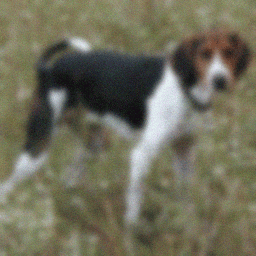} &
    \includegraphics[width=0.13\textwidth]{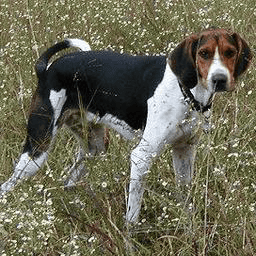} &
    \includegraphics[width=0.13\textwidth]{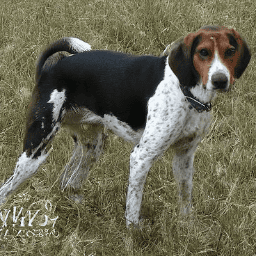} &
    \includegraphics[width=0.13\textwidth]{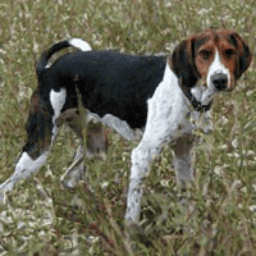} &
    \includegraphics[width=0.13\textwidth]{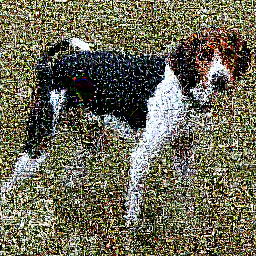} &
    \includegraphics[width=0.13\textwidth]{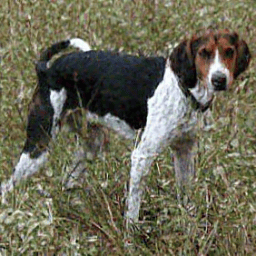} \\
    
    \includegraphics[width=0.13\textwidth]{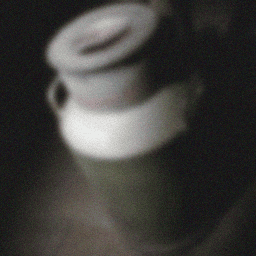} &
    \includegraphics[width=0.13\textwidth]{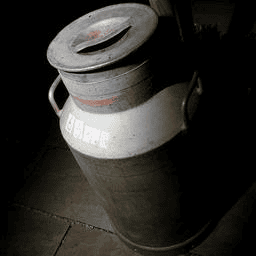} &
    \includegraphics[width=0.13\textwidth]{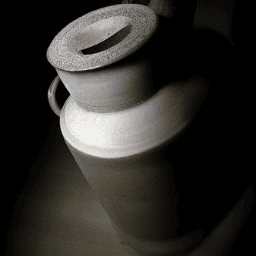} &
    \includegraphics[width=0.13\textwidth]{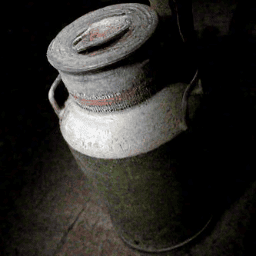} &
    \includegraphics[width=0.13\textwidth]{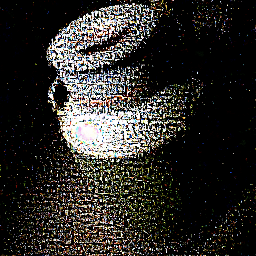} &
    \includegraphics[width=0.13\textwidth]{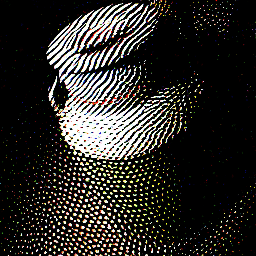} \\
    \end{tabular}
    \caption{Visual comparison on Gaussian Blur task. Each row shows results for one image across different methods. The three sections correspond to timesteps 20, 100, and 1000 respectively.}
    \label{fig:gaussian_blur_results}
\end{figure}
\clearpage
\begin{figure}[p]
    \centering
    \begin{tabular}{cccccc}
    \multicolumn{6}{c}{\textbf{Inpainting Random - Timestep 20}} \\
    Input & Label & DPS & NCS-DPS & MPGD & NCS-MPGD \\
    \includegraphics[width=0.13\textwidth]{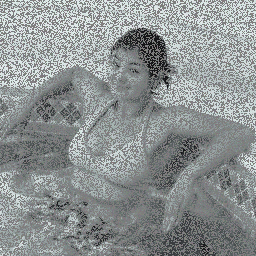} &
    \includegraphics[width=0.13\textwidth]{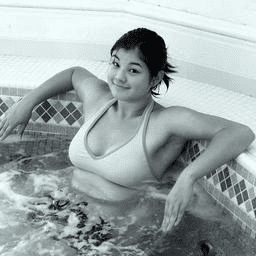} &
    \includegraphics[width=0.13\textwidth]{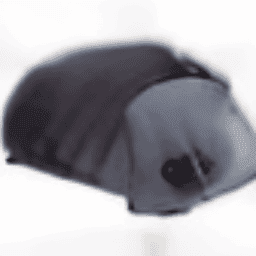} &
    \includegraphics[width=0.13\textwidth]{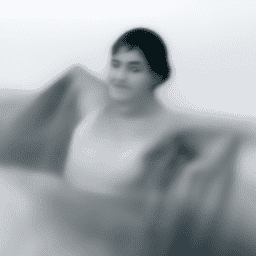} &
    \includegraphics[width=0.13\textwidth]{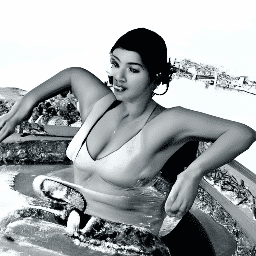} &
    \includegraphics[width=0.13\textwidth]{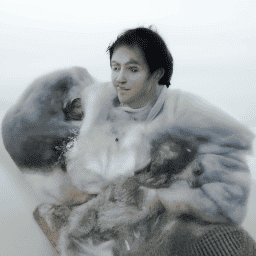} \\
    
    \includegraphics[width=0.13\textwidth]{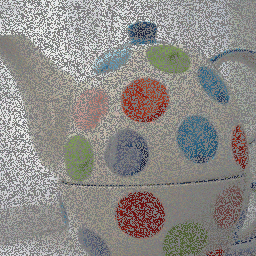} &
    \includegraphics[width=0.13\textwidth]{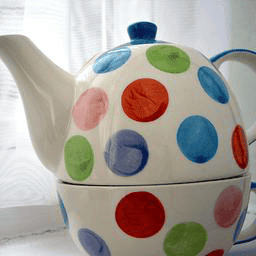} &
    \includegraphics[width=0.13\textwidth]{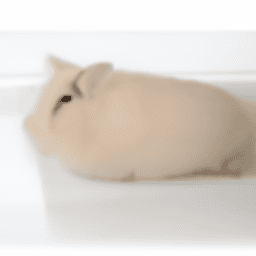} &
    \includegraphics[width=0.13\textwidth]{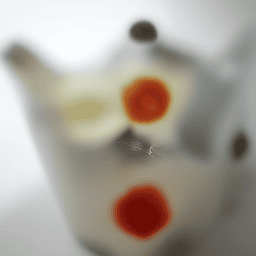} &
    \includegraphics[width=0.13\textwidth]{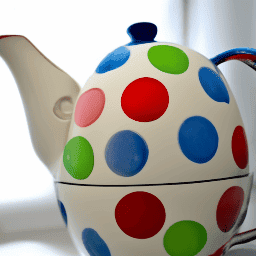} &
    \includegraphics[width=0.13\textwidth]{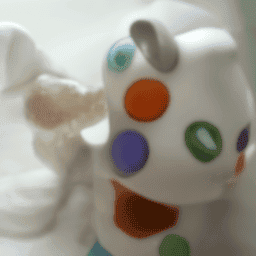} \\
    
    \includegraphics[width=0.13\textwidth]{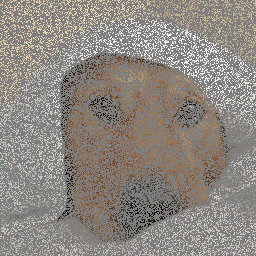} &
    \includegraphics[width=0.13\textwidth]{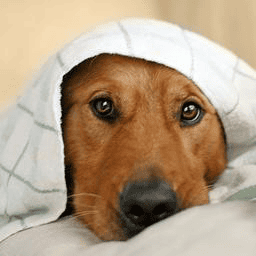} &
    \includegraphics[width=0.13\textwidth]{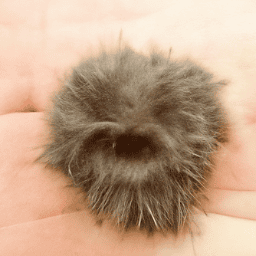} &
    \includegraphics[width=0.13\textwidth]{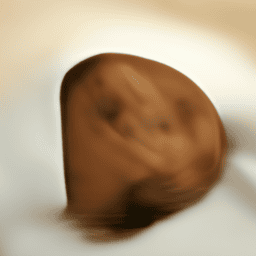} &
    \includegraphics[width=0.13\textwidth]{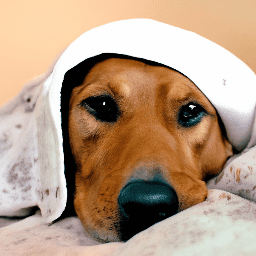} &
    \includegraphics[width=0.13\textwidth]{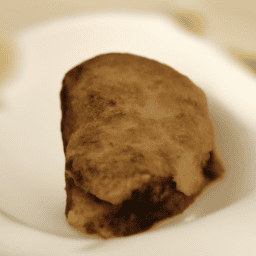} \\
    
    \multicolumn{6}{c}{\textbf{Inpainting Random - Timestep 100}} \\
    \includegraphics[width=0.13\textwidth]{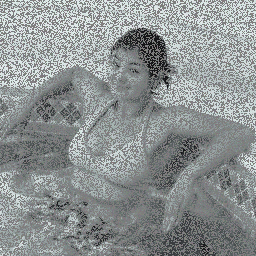} &
    \includegraphics[width=0.13\textwidth]{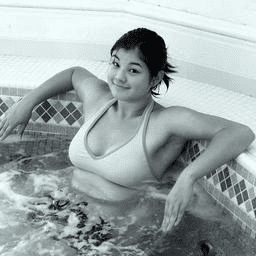} &
    \includegraphics[width=0.13\textwidth]{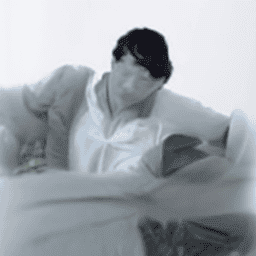} &
    \includegraphics[width=0.13\textwidth]{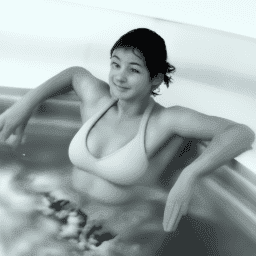} &
    \includegraphics[width=0.13\textwidth]{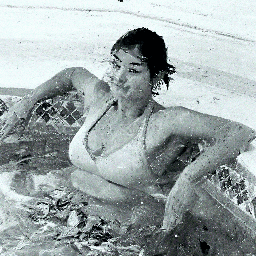} &
    \includegraphics[width=0.13\textwidth]{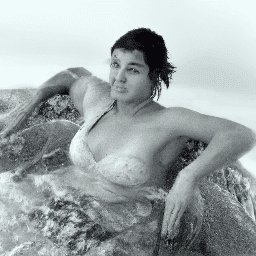} \\
    
    \includegraphics[width=0.13\textwidth]{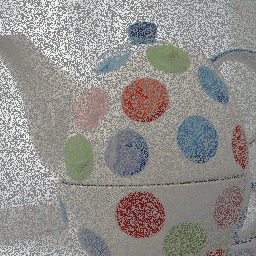} &
    \includegraphics[width=0.13\textwidth]{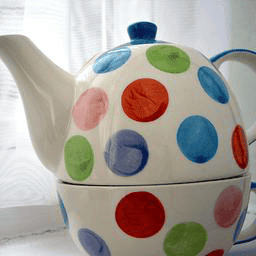} &
    \includegraphics[width=0.13\textwidth]{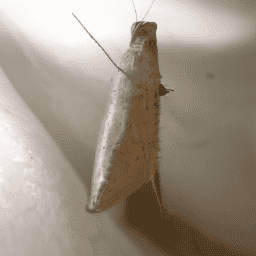} &
    \includegraphics[width=0.13\textwidth]{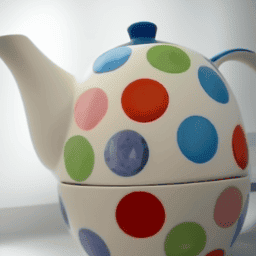} &
    \includegraphics[width=0.13\textwidth]{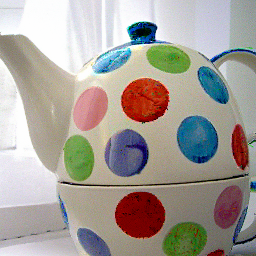} &
    \includegraphics[width=0.13\textwidth]{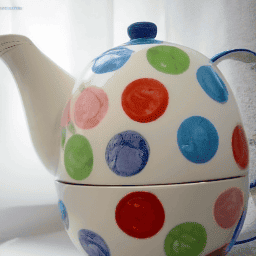} \\
    
    \includegraphics[width=0.13\textwidth]{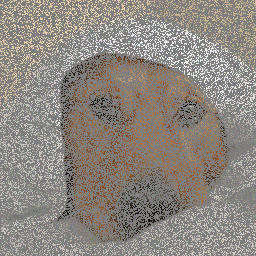} &
    \includegraphics[width=0.13\textwidth]{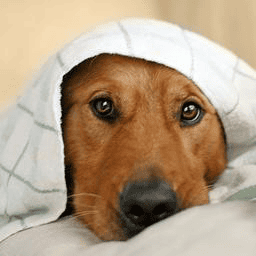} &
    \includegraphics[width=0.13\textwidth]{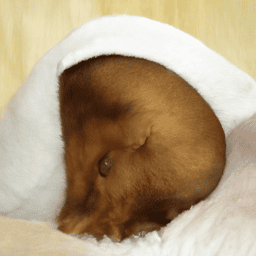} &
    \includegraphics[width=0.13\textwidth]{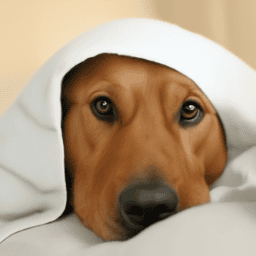} &
    \includegraphics[width=0.13\textwidth]{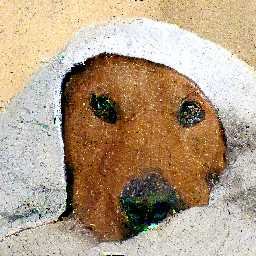} &
    \includegraphics[width=0.13\textwidth]{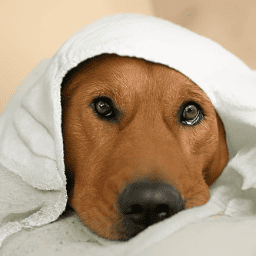} \\
    
    \multicolumn{6}{c}{\textbf{Inpainting Random - Timestep 1000}} \\
    \includegraphics[width=0.13\textwidth]{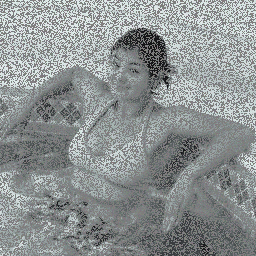} &
    \includegraphics[width=0.13\textwidth]{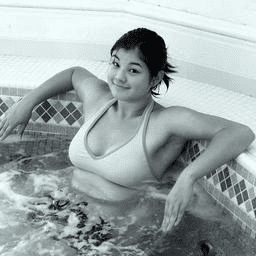} &
    \includegraphics[width=0.13\textwidth]{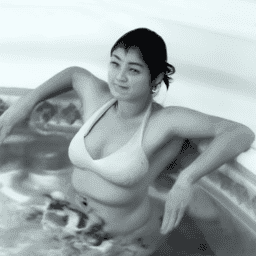} &
    \includegraphics[width=0.13\textwidth]{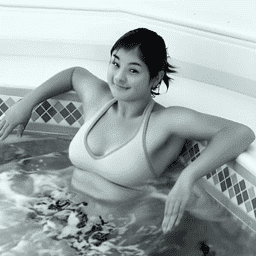} &
    \includegraphics[width=0.13\textwidth]{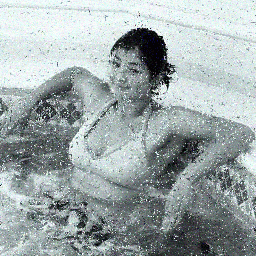} &
    \includegraphics[width=0.13\textwidth]{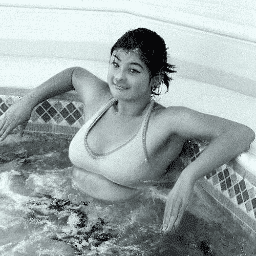} \\
    
    \includegraphics[width=0.13\textwidth]{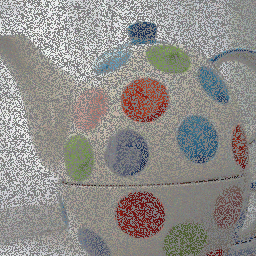} &
    \includegraphics[width=0.13\textwidth]{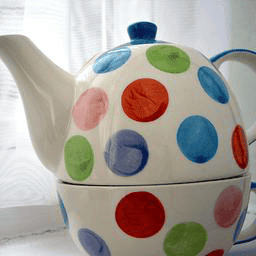} &
    \includegraphics[width=0.13\textwidth]{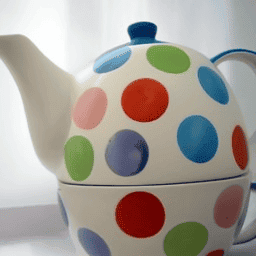} &
    \includegraphics[width=0.13\textwidth]{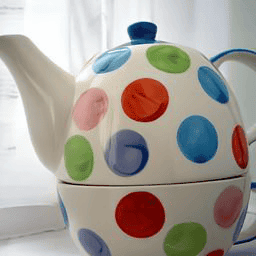} &
    \includegraphics[width=0.13\textwidth]{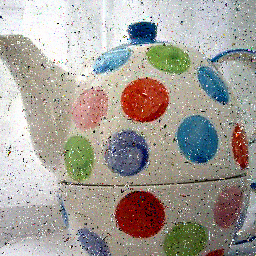} &
    \includegraphics[width=0.13\textwidth]{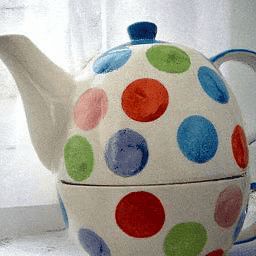} \\
    
    \includegraphics[width=0.13\textwidth]{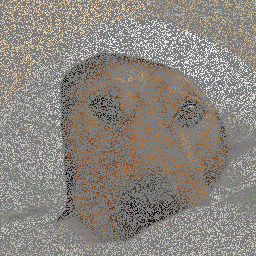} &
    \includegraphics[width=0.13\textwidth]{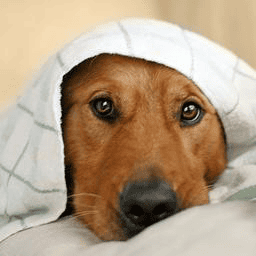} &
    \includegraphics[width=0.13\textwidth]{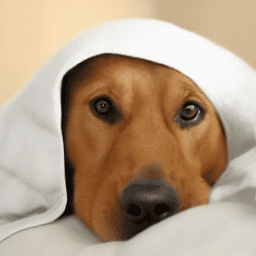} &
    \includegraphics[width=0.13\textwidth]{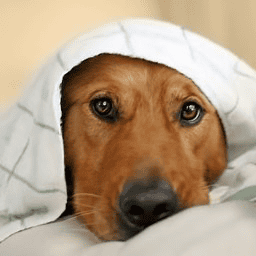} &
    \includegraphics[width=0.13\textwidth]{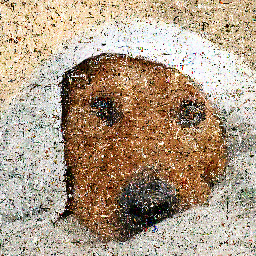} &
    \includegraphics[width=0.13\textwidth]{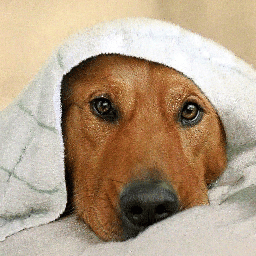} \\
    \end{tabular}
    \caption{Visual comparison on Inpainting Random task. Each row shows results for one image across different methods. The three sections correspond to timesteps 20, 100, and 1000 respectively.}
    \label{fig:inpainting_random_results}
\end{figure}
\clearpage
\begin{figure}[p]
    \centering
    \begin{tabular}{cccccc}
    \multicolumn{6}{c}{\textbf{Inpainting Box - Timestep 20}} \\
    Input & Label & DPS & NCS-DPS & MPGD & NCS-MPGD \\
    \includegraphics[width=0.13\textwidth]{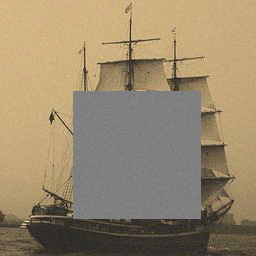} &
    \includegraphics[width=0.13\textwidth]{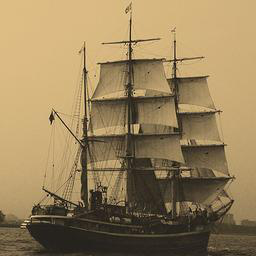} &
    \includegraphics[width=0.13\textwidth]{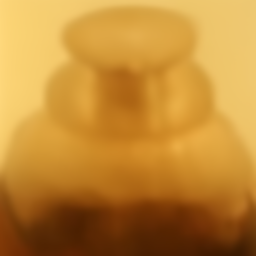} &
    \includegraphics[width=0.13\textwidth]{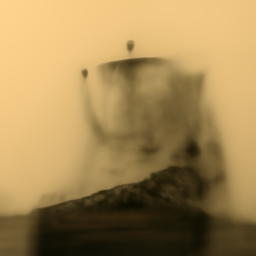} &
    \includegraphics[width=0.13\textwidth]{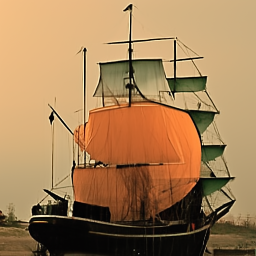} &
    \includegraphics[width=0.13\textwidth]{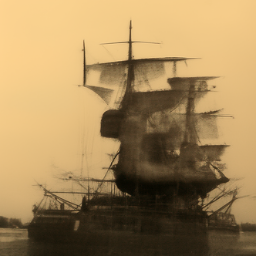} \\
    
    \includegraphics[width=0.13\textwidth]{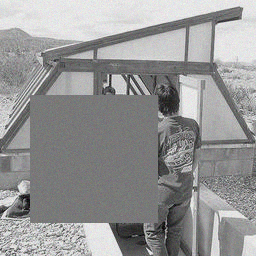} &
    \includegraphics[width=0.13\textwidth]{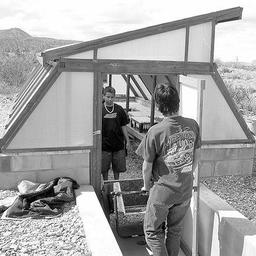} &
    \includegraphics[width=0.13\textwidth]{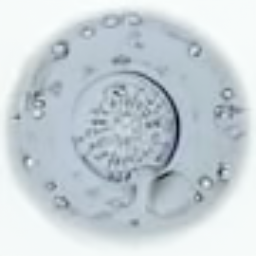} &
    \includegraphics[width=0.13\textwidth]{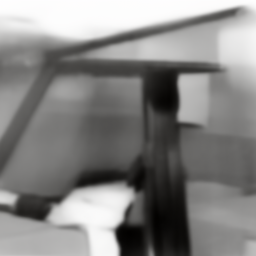} &
    \includegraphics[width=0.13\textwidth]{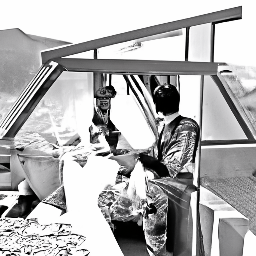} &
    \includegraphics[width=0.13\textwidth]{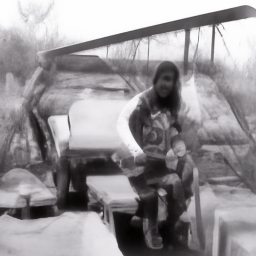} \\
    
    \includegraphics[width=0.13\textwidth]{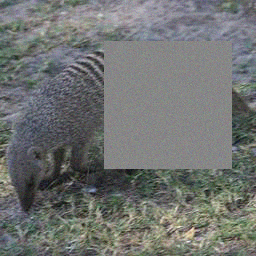} &
    \includegraphics[width=0.13\textwidth]{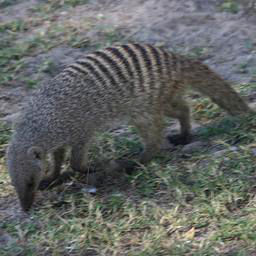} &
    \includegraphics[width=0.13\textwidth]{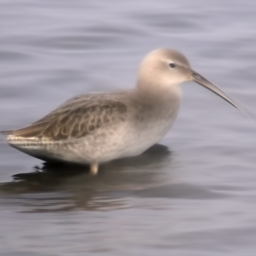} &
    \includegraphics[width=0.13\textwidth]{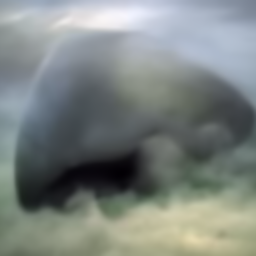} &
    \includegraphics[width=0.13\textwidth]{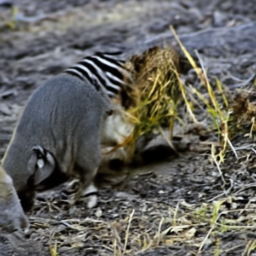} &
    \includegraphics[width=0.13\textwidth]{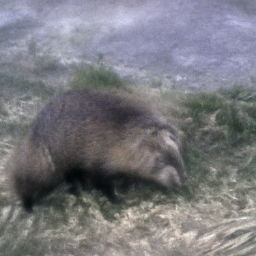} \\
    
    \multicolumn{6}{c}{\textbf{Inpainting Box - Timestep 100}} \\
    \includegraphics[width=0.13\textwidth]{fig/results_selected/inpainting_box/input/00012.png} &
    \includegraphics[width=0.13\textwidth]{fig/results_selected/inpainting_box/label/00012.png} &
    \includegraphics[width=0.13\textwidth]{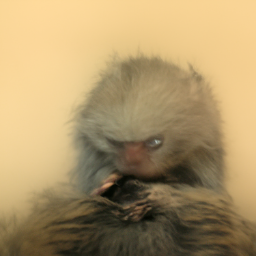} &
    \includegraphics[width=0.13\textwidth]{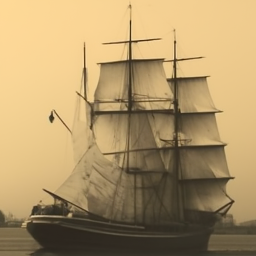} &
    \includegraphics[width=0.13\textwidth]{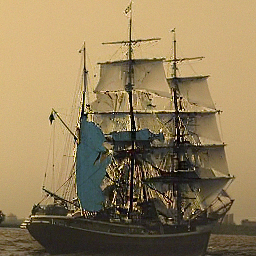} &
    \includegraphics[width=0.13\textwidth]{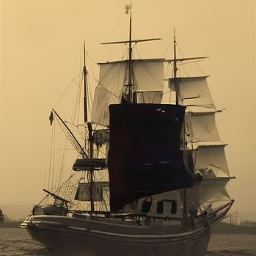} \\
    
    \includegraphics[width=0.13\textwidth]{fig/results_selected/inpainting_box/input/00017.png} &
    \includegraphics[width=0.13\textwidth]{fig/results_selected/inpainting_box/label/00017.png} &
    \includegraphics[width=0.13\textwidth]{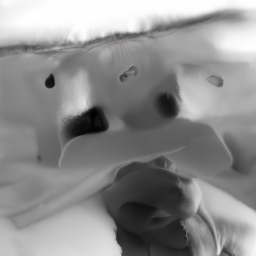} &
    \includegraphics[width=0.13\textwidth]{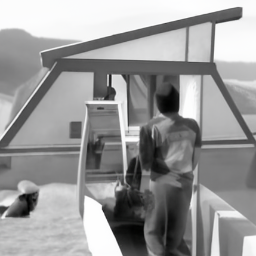} &
    \includegraphics[width=0.13\textwidth]{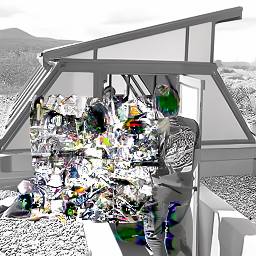} &
    \includegraphics[width=0.13\textwidth]{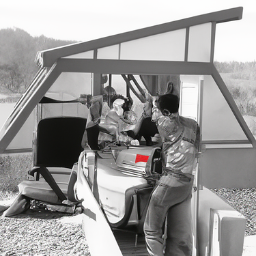} \\
    
    \includegraphics[width=0.13\textwidth]{fig/results_selected/inpainting_box/input/00049.png} &
    \includegraphics[width=0.13\textwidth]{fig/results_selected/inpainting_box/label/00049.png} &
    \includegraphics[width=0.13\textwidth]{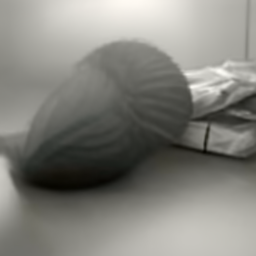} &
    \includegraphics[width=0.13\textwidth]{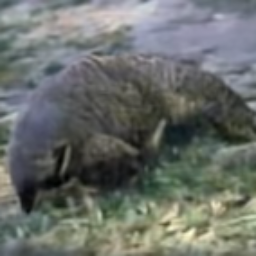} &
    \includegraphics[width=0.13\textwidth]{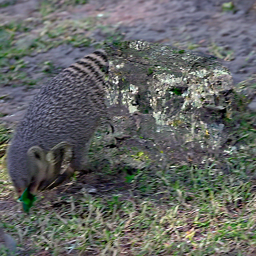} &
    \includegraphics[width=0.13\textwidth]{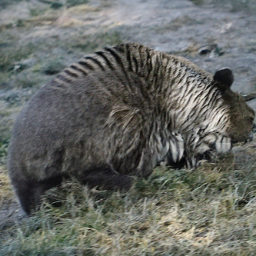} \\
    
    \multicolumn{6}{c}{\textbf{Inpainting Box - Timestep 1000}} \\
    \includegraphics[width=0.13\textwidth]{fig/results_selected/inpainting_box/input/00012.png} &
    \includegraphics[width=0.13\textwidth]{fig/results_selected/inpainting_box/label/00012.png} &
    \includegraphics[width=0.13\textwidth]{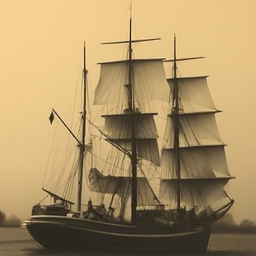} &
    \includegraphics[width=0.13\textwidth]{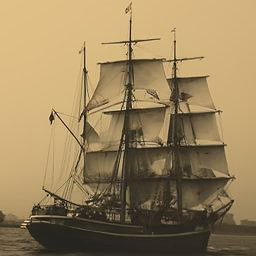} &
    \includegraphics[width=0.13\textwidth]{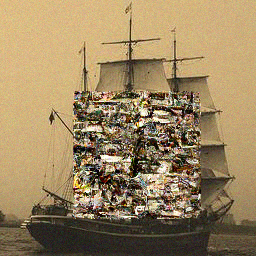} &
    \includegraphics[width=0.13\textwidth]{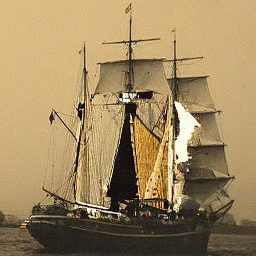} \\
    
    \includegraphics[width=0.13\textwidth]{fig/results_selected/inpainting_box/input/00017.png} &
    \includegraphics[width=0.13\textwidth]{fig/results_selected/inpainting_box/label/00017.png} &
    \includegraphics[width=0.13\textwidth]{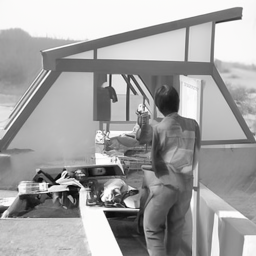} &
    \includegraphics[width=0.13\textwidth]{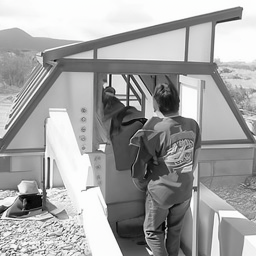} &
    \includegraphics[width=0.13\textwidth]{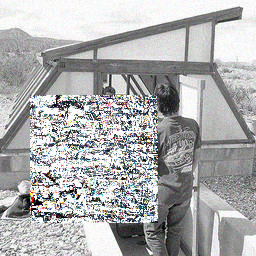} &
    \includegraphics[width=0.13\textwidth]{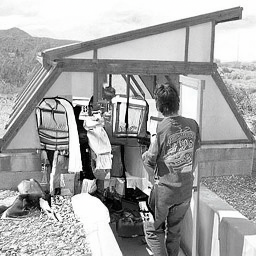} \\
    
    \includegraphics[width=0.13\textwidth]{fig/results_selected/inpainting_box/input/00049.png} &
    \includegraphics[width=0.13\textwidth]{fig/results_selected/inpainting_box/label/00049.png} &
    \includegraphics[width=0.13\textwidth]{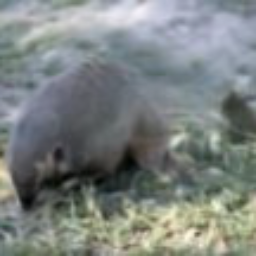} &
    \includegraphics[width=0.13\textwidth]{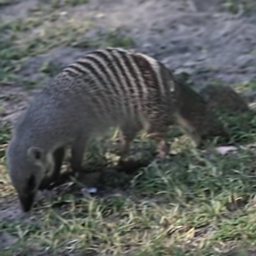} &
    \includegraphics[width=0.13\textwidth]{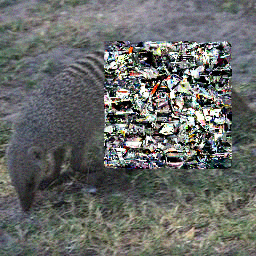} &
    \includegraphics[width=0.13\textwidth]{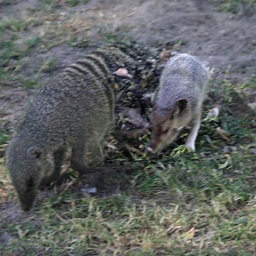} \\
    \end{tabular}
    \caption{Visual comparison on Inpainting Box task. Each row shows results for one image across different methods. The three sections correspond to timesteps 20, 100, and 1000 respectively.}
    \label{fig:inpainting_box_results}
\end{figure}
\clearpage
\begin{figure}[p]
    \centering
    \begin{tabular}{cccccc}
    \multicolumn{6}{c}{\textbf{Motion Blur - Timestep 20}} \\
    Input & Label & DPS & NCS-DPS & MPGD & NCS-MPGD \\
    \includegraphics[width=0.13\textwidth]{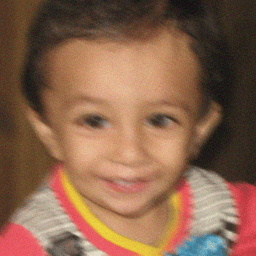} &
    \includegraphics[width=0.13\textwidth]{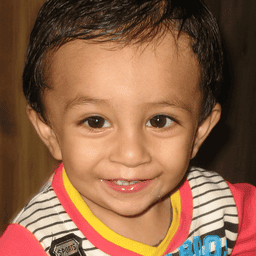} &
    \includegraphics[width=0.13\textwidth]{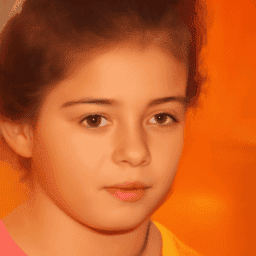} &
    \includegraphics[width=0.13\textwidth]{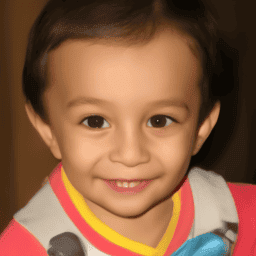} &
    \includegraphics[width=0.13\textwidth]{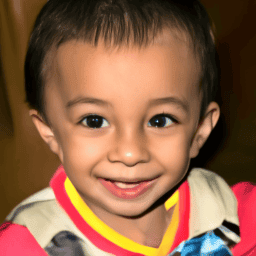} &
    \includegraphics[width=0.13\textwidth]{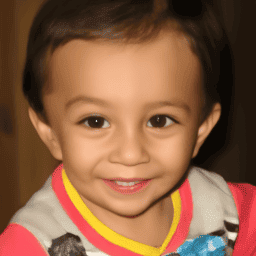} \\
    
    \includegraphics[width=0.13\textwidth]{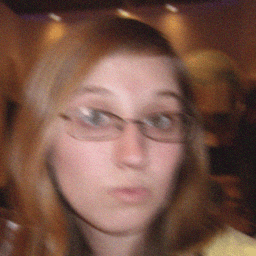} &
    \includegraphics[width=0.13\textwidth]{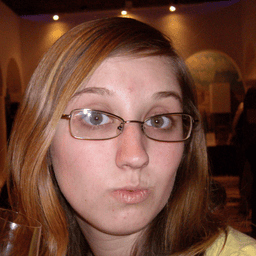} &
    \includegraphics[width=0.13\textwidth]{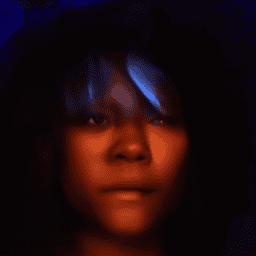} &
    \includegraphics[width=0.13\textwidth]{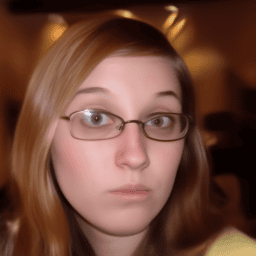} &
    \includegraphics[width=0.13\textwidth]{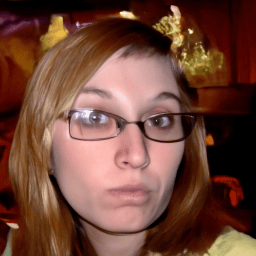} &
    \includegraphics[width=0.13\textwidth]{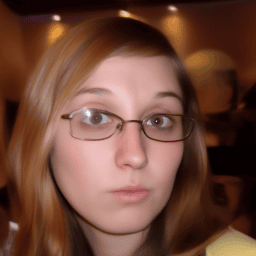} \\
    
    \includegraphics[width=0.13\textwidth]{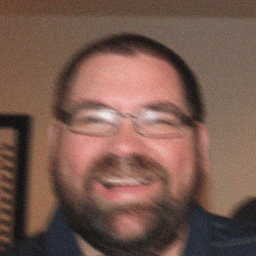} &
    \includegraphics[width=0.13\textwidth]{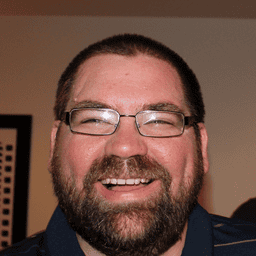} &
    \includegraphics[width=0.13\textwidth]{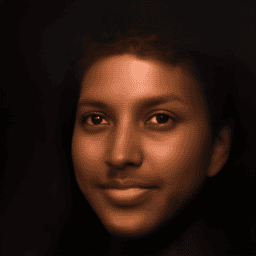} &
    \includegraphics[width=0.13\textwidth]{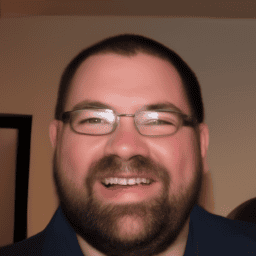} &
    \includegraphics[width=0.13\textwidth]{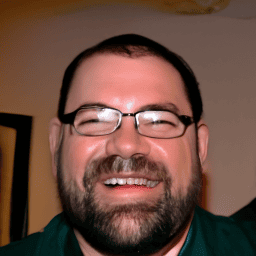} &
    \includegraphics[width=0.13\textwidth]{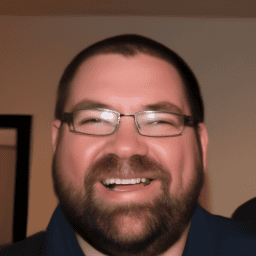} \\
    
    \multicolumn{6}{c}{\textbf{Motion Blur - Timestep 100}} \\
    \includegraphics[width=0.13\textwidth]{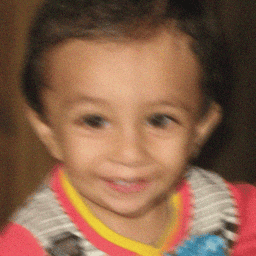} &
    \includegraphics[width=0.13\textwidth]{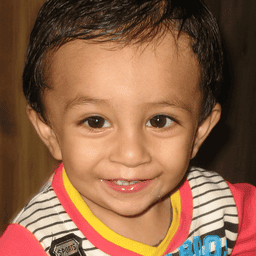} &
    \includegraphics[width=0.13\textwidth]{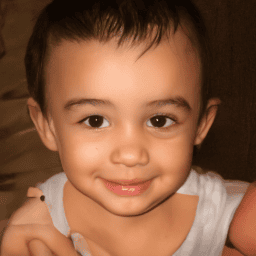} &
    \includegraphics[width=0.13\textwidth]{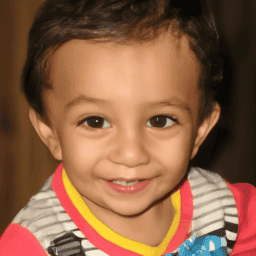} &
    \includegraphics[width=0.13\textwidth]{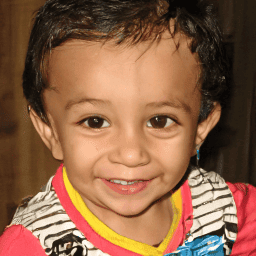} &
    \includegraphics[width=0.13\textwidth]{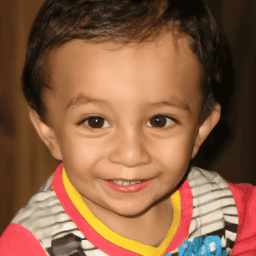} \\
    
    \includegraphics[width=0.13\textwidth]{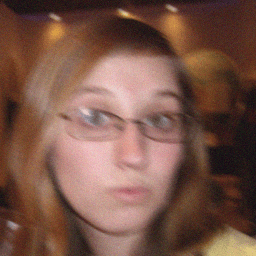} &
    \includegraphics[width=0.13\textwidth]{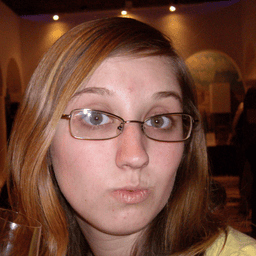} &
    \includegraphics[width=0.13\textwidth]{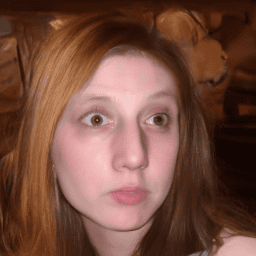} &
    \includegraphics[width=0.13\textwidth]{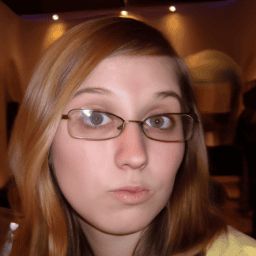} &
    \includegraphics[width=0.13\textwidth]{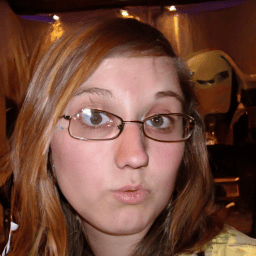} &
    \includegraphics[width=0.13\textwidth]{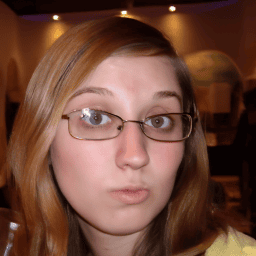} \\
    
    \includegraphics[width=0.13\textwidth]{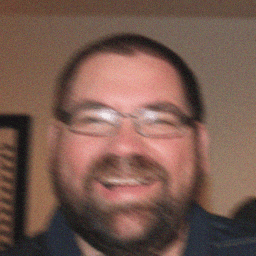} &
    \includegraphics[width=0.13\textwidth]{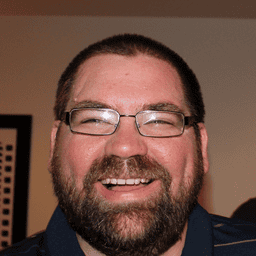} &
    \includegraphics[width=0.13\textwidth]{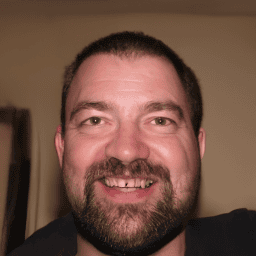} &
    \includegraphics[width=0.13\textwidth]{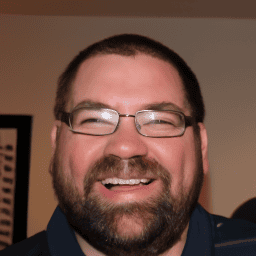} &
    \includegraphics[width=0.13\textwidth]{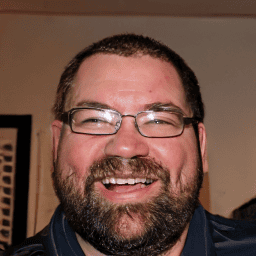} &
    \includegraphics[width=0.13\textwidth]{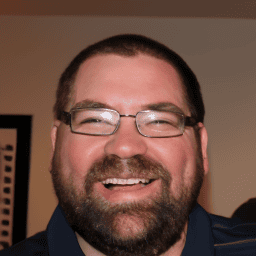} \\
    
    \multicolumn{6}{c}{\textbf{Motion Blur - Timestep 1000}} \\
    \includegraphics[width=0.13\textwidth]{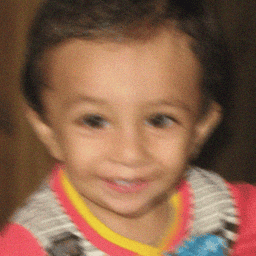} &
    \includegraphics[width=0.13\textwidth]{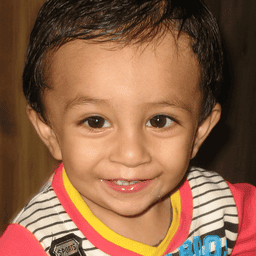} &
    \includegraphics[width=0.13\textwidth]{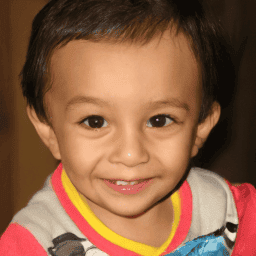} &
    \includegraphics[width=0.13\textwidth]{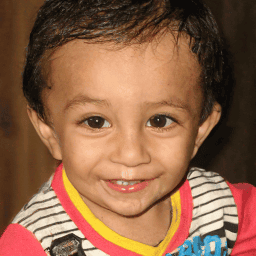} &
    \includegraphics[width=0.13\textwidth]{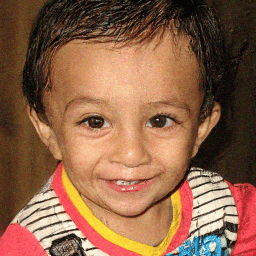} &
    \includegraphics[width=0.13\textwidth]{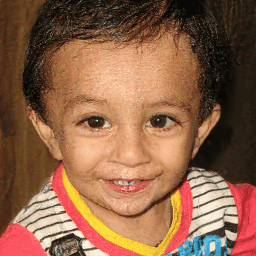} \\
    
    \includegraphics[width=0.13\textwidth]{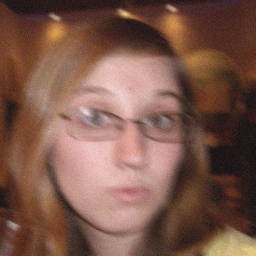} &
    \includegraphics[width=0.13\textwidth]{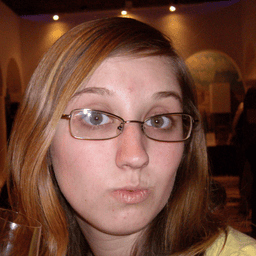} &
    \includegraphics[width=0.13\textwidth]{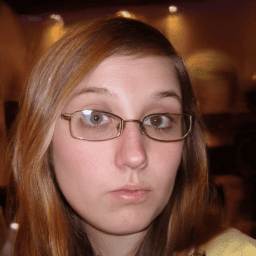} &
    \includegraphics[width=0.13\textwidth]{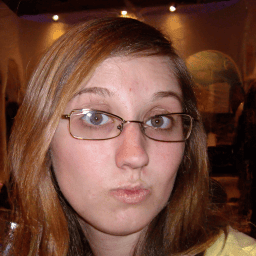} &
    \includegraphics[width=0.13\textwidth]{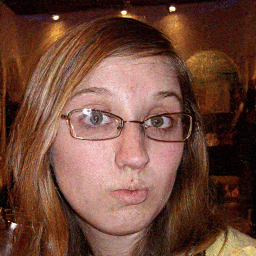} &
    \includegraphics[width=0.13\textwidth]{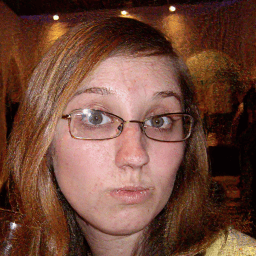} \\
    
    \includegraphics[width=0.13\textwidth]{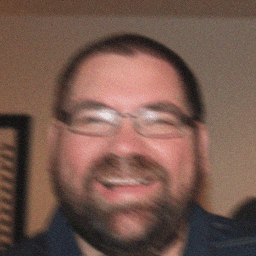} &
    \includegraphics[width=0.13\textwidth]{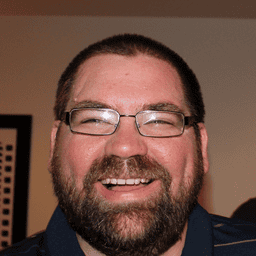} &
    \includegraphics[width=0.13\textwidth]{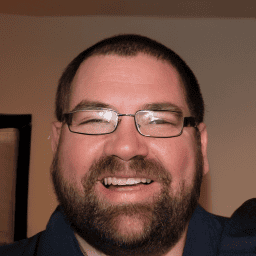} &
    \includegraphics[width=0.13\textwidth]{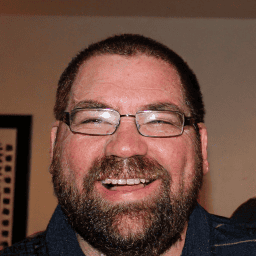} &
    \includegraphics[width=0.13\textwidth]{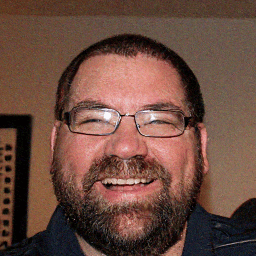} &
    \includegraphics[width=0.13\textwidth]{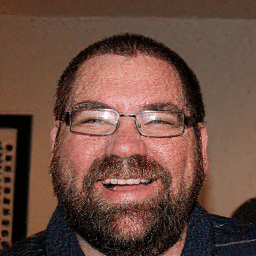} \\
    \end{tabular}
    \caption{Visual comparison on Motion Blur task. Each row shows results for one image across different methods. The three sections correspond to timesteps 20, 100, and 1000 respectively.}
    \label{fig:motion_blur_results}
\end{figure}
\clearpage
\begin{figure}[p]
    \centering
    \begin{tabular}{cccccc}
    \multicolumn{6}{c}{\textbf{Super Resolution x4 - Timestep 20}} \\
    Input & Label & DPS & NCS-DPS & MPGD & NCS-MPGD \\
    \includegraphics[width=0.13\textwidth]{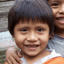} &
    \includegraphics[width=0.13\textwidth]{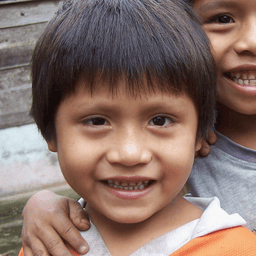} &
    \includegraphics[width=0.13\textwidth]{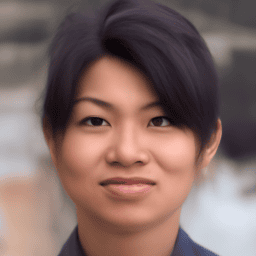} &
    \includegraphics[width=0.13\textwidth]{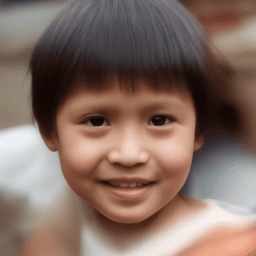} &
    \includegraphics[width=0.13\textwidth]{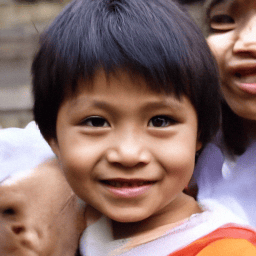} &
    \includegraphics[width=0.13\textwidth]{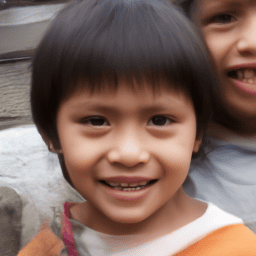} \\
    
    \includegraphics[width=0.13\textwidth]{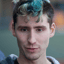} &
    \includegraphics[width=0.13\textwidth]{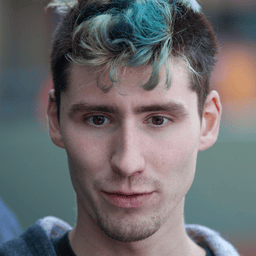} &
    \includegraphics[width=0.13\textwidth]{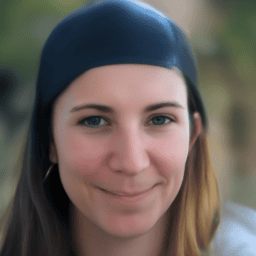} &
    \includegraphics[width=0.13\textwidth]{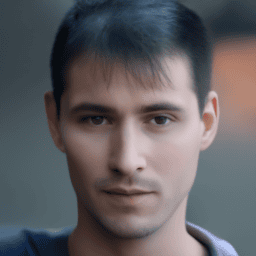} &
    \includegraphics[width=0.13\textwidth]{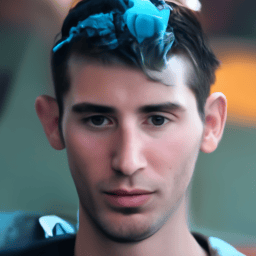} &
    \includegraphics[width=0.13\textwidth]{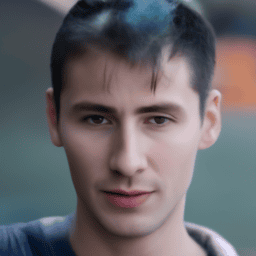} \\
    
    \includegraphics[width=0.13\textwidth]{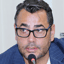} &
    \includegraphics[width=0.13\textwidth]{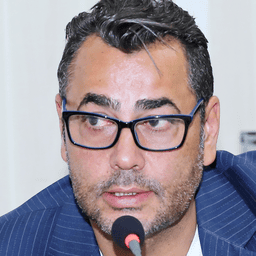} &
    \includegraphics[width=0.13\textwidth]{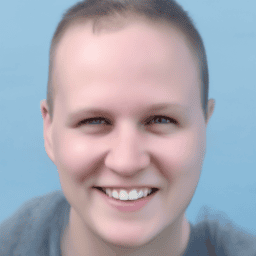} &
    \includegraphics[width=0.13\textwidth]{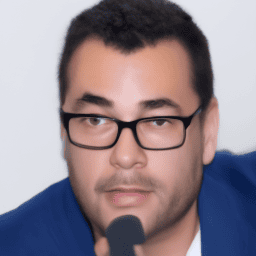} &
    \includegraphics[width=0.13\textwidth]{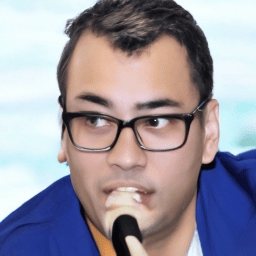} &
    \includegraphics[width=0.13\textwidth]{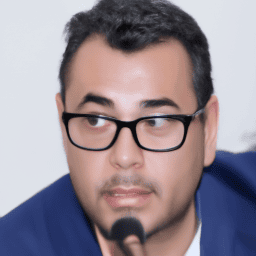} \\
    
    \multicolumn{6}{c}{\textbf{Super Resolution x4 - Timestep 100}} \\
    \includegraphics[width=0.13\textwidth]{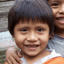} &
    \includegraphics[width=0.13\textwidth]{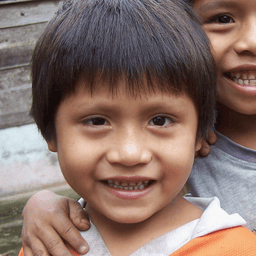} &
    \includegraphics[width=0.13\textwidth]{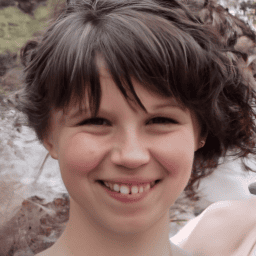} &
    \includegraphics[width=0.13\textwidth]{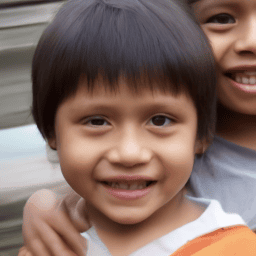} &
    \includegraphics[width=0.13\textwidth]{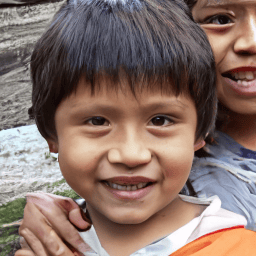} &
    \includegraphics[width=0.13\textwidth]{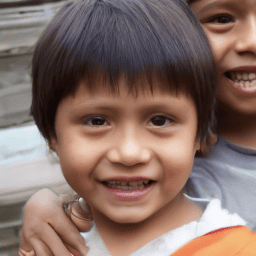} \\
    
    \includegraphics[width=0.13\textwidth]{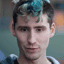} &
    \includegraphics[width=0.13\textwidth]{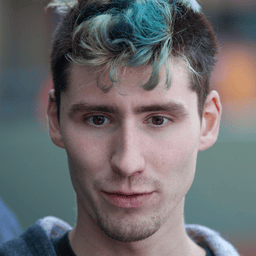} &
    \includegraphics[width=0.13\textwidth]{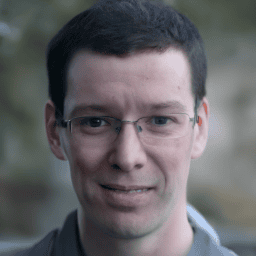} &
    \includegraphics[width=0.13\textwidth]{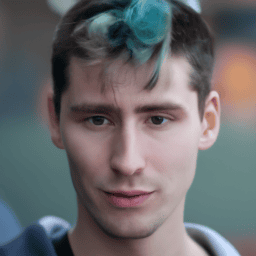} &
    \includegraphics[width=0.13\textwidth]{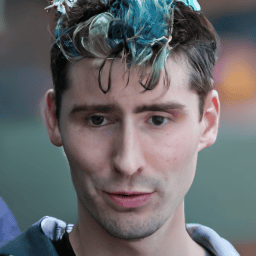} &
    \includegraphics[width=0.13\textwidth]{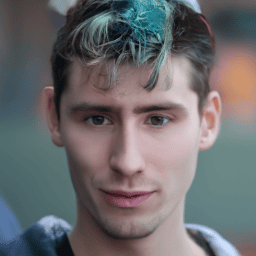} \\
    
    \includegraphics[width=0.13\textwidth]{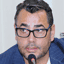} &
    \includegraphics[width=0.13\textwidth]{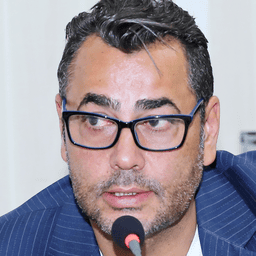} &
    \includegraphics[width=0.13\textwidth]{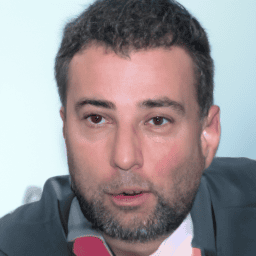} &
    \includegraphics[width=0.13\textwidth]{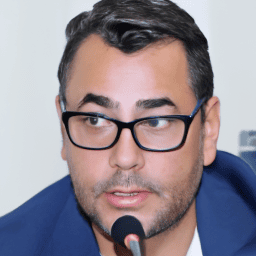} &
    \includegraphics[width=0.13\textwidth]{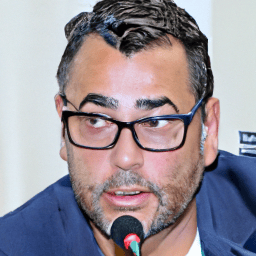} &
    \includegraphics[width=0.13\textwidth]{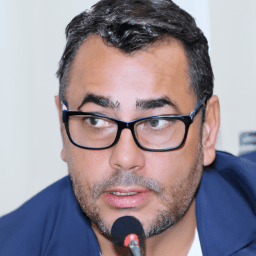} \\
    
    \multicolumn{6}{c}{\textbf{Super Resolution x4 - Timestep 1000}} \\
    \includegraphics[width=0.13\textwidth]{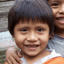} &
    \includegraphics[width=0.13\textwidth]{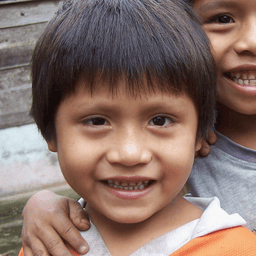} &
    \includegraphics[width=0.13\textwidth]{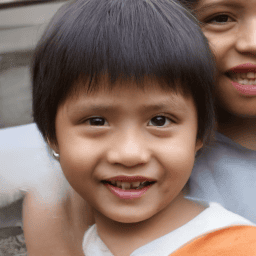} &
    \includegraphics[width=0.13\textwidth]{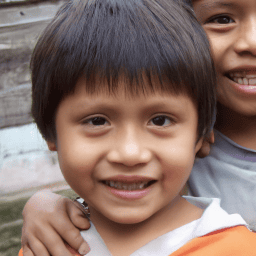} &
    \includegraphics[width=0.13\textwidth]{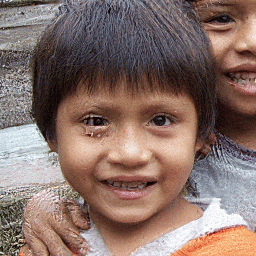} &
    \includegraphics[width=0.13\textwidth]{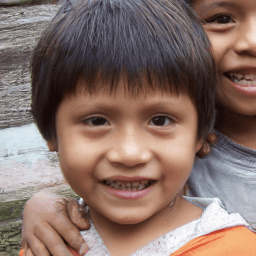} \\
    
    \includegraphics[width=0.13\textwidth]{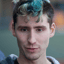} &
    \includegraphics[width=0.13\textwidth]{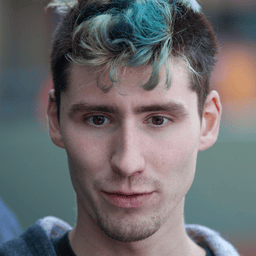} &
    \includegraphics[width=0.13\textwidth]{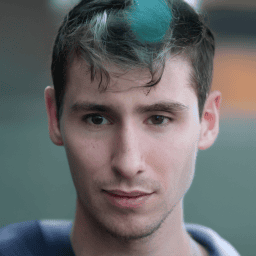} &
    \includegraphics[width=0.13\textwidth]{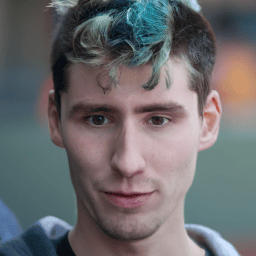} &
    \includegraphics[width=0.13\textwidth]{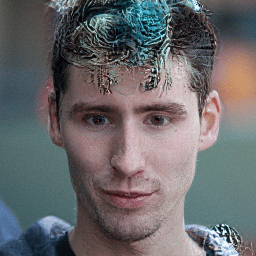} &
    \includegraphics[width=0.13\textwidth]{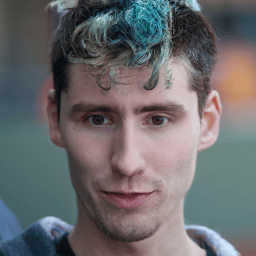} \\
    
    \includegraphics[width=0.13\textwidth]{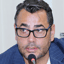} &
    \includegraphics[width=0.13\textwidth]{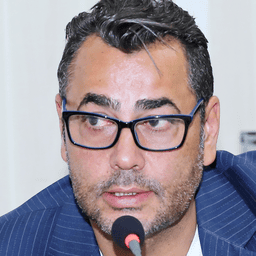} &
    \includegraphics[width=0.13\textwidth]{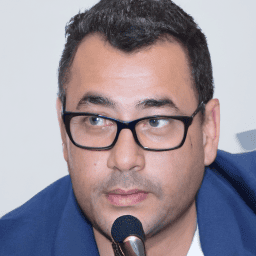} &
    \includegraphics[width=0.13\textwidth]{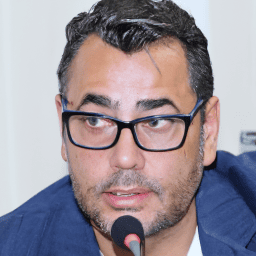} &
    \includegraphics[width=0.13\textwidth]{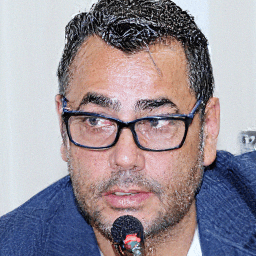} &
    \includegraphics[width=0.13\textwidth]{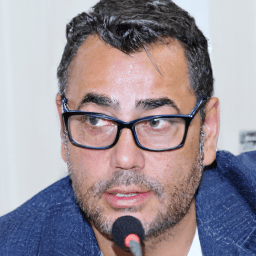} \\
    \end{tabular}
    \caption{Visual comparison on Super Resolution x4 task. Each row shows results for one image across different methods. The three sections correspond to timesteps 20, 100, and 1000 respectively.}
    \label{fig:super_resolution_x4_results}
\end{figure}

\clearpage


\section{Definitions of NCS-DPS, NCS-MPGD, and NCS-$\Pi$GDM}
\label{app:ncs_definitions}
\begin{definition}[NCS-DPS]
    \citet{chung2023diffusion} approximates the conditional score using the gradient of a likelihood loss defined between the observation $\vec{y}$ and the estimated signal $\vec{\tilde{x}}_{0|t}$. Specifically,
    \begin{align}
        \nabla_{\vec{x}_t} \log p(\vec{y} \mid \vec{x}_t) 
        &= \nabla_{\vec{x}_t} \log \mathcal{N}(\vec{y}; \mat{A}\vec{\tilde{x}}_{0|t}, \sigma_t^2\mathbf{I}) \nonumber \\
        &= \frac{1}{\sigma_t^2} \left(\frac{\partial \vec{\tilde{x}}_{0|t}}{\partial \vec{x}_t}\right)^\top \mat{A}^\top \left(\vec{y} - \mat{A}\vec{\tilde{x}}_{0|t}\right).
    \end{align}
    Its NCS counterpart is obtained by aligning the synthesized noise vector with this gradient direction:
    \begin{equation}
        \vec{\gamma}^* = \operatorname*{argmax}_{\vec{\gamma} \in \mathbb{R}^K,\ \|\vec{\gamma}\|_2 = 1} 
        \left\langle 
            \left(\frac{\partial \vec{\tilde{x}}_{0|t}}{\partial \vec{x}_t}\right)^\top \mat{A}^\top \left(\vec{y} - \mat{A} \vec{\tilde{x}}_{0|t}\right),\ 
            \sum_{i=1}^K \gamma_i \vec{\epsilon}^i_{t} 
        \right\rangle.
    \end{equation}
\end{definition}

\begin{definition}[NCS-MPGD]
    \citet{he2024manifold} proposes performing updates directly on the estimated signal $\vec{\tilde{x}}_{0|t}$ rather than the latent variable $\vec{x}_t$. By isolating the additional term introduced in its update rule, we obtain the following approximation of the measurement score:
    \begin{align}
        \nabla_{\vec{x}_t} \log p(\vec{y} \mid \vec{x}_t) 
        &= -\lambda_t \sqrt{\bar{\alpha}_t} \nabla_{\vec{\tilde{x}}_{0|t}} \|\vec{y} - \mat{A} \vec{\tilde{x}}_{0|t}\|_2^2 \nonumber \\
        &= 2\lambda_t \sqrt{\bar{\alpha}_t} \mat{A}^\top (\vec{y} - \mat{A} \vec{\tilde{x}}_{0|t}),
    \end{align}
    where $\lambda_t$ is a time-dependent step size. The corresponding NCS formulation is given by:
    \begin{equation}
        \label{eq:ncs_mpgd}
        \vec{\gamma}^* = \operatorname*{argmax}_{\vec{\gamma} \in \mathbb{R}^K,\ \|\vec{\gamma}\|_2 = 1} 
        \left\langle  
            \mat{A}^\top (\vec{y} - \mat{A} \vec{\tilde{x}}_{0|t}),\ 
            \sum_{i=1}^K \gamma_i \vec{\epsilon}^i_{t} 
        \right\rangle.
    \end{equation}
    \end{definition}

Notably, if $\vec{\tilde{x}}_{0|t}$ is replaced with $\vec{x}_t$, the formulation reduces to Score-Based Annealed Langevin Dynamics (ALD)~\citep{jalal2021robust}. For a comprehensive analysis of the connections between these methods, refer to the survey in~\citep{daras2024survey}.

\begin{definition}[NCS-$\Pi$GDM]
    \citet{song2023pseudoinverse} introduces Pseudoinverse-guided DMs ($\Pi$GDM), a problem-agnostic approach that directly estimates conditional scores from the measurement model without additional training. $\Pi$GDM can address inverse problems with noisy, non-linear, or even non-differentiable measurements. The method enforces data consistency by applying the range--null space rectification:
    \begin{equation}
        \hat{\vec{x}}_{0|t}
        =
        \mat{A}^{\dagger}\vec{y}
        +(\mat{I}-\mat{A}^{\dagger}\mat{A})\vec{\tilde{x}}_{0|t}
        =
        \vec{\tilde{x}}_{0|t}+\mat{A}^{\dagger}\!\left(\vec{y}-\mat{A}\vec{\tilde{x}}_{0|t}\right),
    \end{equation}
    where $\mat{A}^{\dagger}$ denotes the pseudo-inverse of $\mat{A}$.
    The induced range-space correction term is therefore
    $\Delta_t := \mat{A}^{\dagger}(\vec{y}-\mat{A}\vec{\tilde{x}}_{0|t})$.
    The corresponding NCS formulation is given by:
    \begin{equation}
        \label{eq:ncs_pigdm}
        \vec{\gamma}^* = \operatorname*{argmax}_{\vec{\gamma} \in \mathbb{R}^K,\ \|\vec{\gamma}\|_2 = 1} 
        \left\langle 
            \mat{A}^{\dagger}\!\left(\vec{y} - \mat{A} \vec{\tilde{x}}_{0|t}\right),
            \sum_{i=1}^K \gamma_i \vec{\epsilon}^i_{t} 
        \right\rangle .
    \end{equation}
\end{definition}

\begin{definition}[NCS-DAPS]
    \citet{zhang2025improving} proposes Decoupled Annealing Posterior Sampling (DAPS), which decouples the reverse diffusion and data consistency steps. At each annealing step with noise level $\sigma_t$, DAPS performs: (1) reverse diffusion from $\vec{x}_t$ to obtain an estimate $\vec{\tilde{x}}_{0|t}$, (2) MCMC sampling to enforce data consistency and obtain $\vec{x}_{0|t}^{y}$, and (3) forward diffusion by adding noise:
    \begin{equation}
        \vec{x}_{t+1} = \vec{x}_{0|t}^{y} + \sigma_{t+1} \vec{\epsilon}.
    \end{equation}
    The corresponding NCS formulation replaces the forward diffusion noise with a synthesized noise that aligns with the overall update direction from $\vec{x}_t$ to $\vec{x}_{0|t}^{y}$:
    \begin{equation}
        \label{eq:ncs_daps}
        \vec{\gamma}^* = \operatorname*{argmax}_{\vec{\gamma} \in \mathbb{R}^K,\ \|\vec{\gamma}\|_2 = 1} 
        \left\langle 
            \vec{x}_{0|t}^{y} - \vec{x}_t,
            \sum_{i=1}^K \gamma_i \vec{\epsilon}^i_{t} 
        \right\rangle.
    \end{equation}
\end{definition}

\section{Proof of Gaussianity and Optimal Weights}
\label{app:ncs_gaussianity}

We first show that, under a unit-norm constraint on the combination weights, the synthesized noise remains standard normal, and then derive the closed-form optimizer for~\eqref{eq:ncs_optimization}.

\begin{lemma}[Gaussianity of unit-norm combinations]\label{lem:ncs_gaussianity}
Let $\{\vec{\epsilon}_t^i\}_{i=1}^K$ be mutually independent with $\vec{\epsilon}_t^i \sim \mathcal{N}(\vec{0}, \mathbf{I})$. For any $\vec{\gamma} = (\gamma_1,\ldots,\gamma_K) \in \mathbb{R}^K$ with $\|\vec{\gamma}\|_2 = 1$ that is deterministic (or independent of $\{\vec{\epsilon}_t^i\}_{i=1}^K$), the linear combination
\[
\vec{\epsilon}_t^* \;=\; \sum_{i=1}^K \gamma_i \vec{\epsilon}_t^i
\]
satisfies $\vec{\epsilon}_t^* \sim \mathcal{N}(\vec{0}, \mathbf{I})$.
\end{lemma}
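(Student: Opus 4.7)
The plan is to proceed in the classical way by computing moments and then identifying the distribution via its characteristic function, since the lemma is really a statement about linear combinations of independent Gaussians preserving Gaussianity under a variance-normalization constraint. I would first dispose of the easy structural facts: linearity of expectation gives $\mathbb{E}[\vec{\epsilon}_t^*] = \sum_i \gamma_i \mathbb{E}[\vec{\epsilon}_i] = \vec{0}$, and independence of the $\vec{\epsilon}_i$ together with $\mathrm{Cov}(\vec{\epsilon}_i) = \mathbf{I}$ gives $\mathrm{Cov}(\vec{\epsilon}_t^*) = \sum_{i=1}^K \gamma_i^2 \, \mathbf{I} = \|\vec{\gamma}\|_2^2 \, \mathbf{I} = \mathbf{I}$, where the last equality uses the unit-norm constraint on $\vec{\gamma}$. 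This pins down the correct mean and covariance, so the remaining work is to verify that the distribution is actually Gaussian rather than merely having matching first two moments.

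For that step I would compute the characteristic function. Fixing any $\vec{u} \in \mathbb{R}^n$ and using independence of the $\vec{\epsilon}_i$,
\begin{equation*}
\mathbb{E}\!\left[e^{i \vec{u}^\top \vec{\epsilon}_t^*}\right]
= \prod_{i=1}^K \mathbb{E}\!\left[e^{i \gamma_i \vec{u}^\top \vec{\epsilon}_i}\right]
= \prod_{i=1}^K \exp\!\left(-\tfrac{1}{2}\gamma_i^2 \|\vec{u}\|_2^2\right)
= \exp\!\left(-\tfrac{1}{2}\|\vec{\gamma}\|_2^2 \|\vec{u}\|_2^2\right)
= \exp\!\left(-\tfrac{1}{2}\|\vec{u}\|_2^2\right),
\end{equation*}
which is exactly the characteristic function of $\mathcal{N}(\vec{0}, \mathbf{I})$. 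By uniqueness of characteristic functions, $\vec{\epsilon}_t^* \sim \mathcal{N}(\vec{0}, \mathbf{I})$, as claimed.

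The only subtle point, and the one I would flag as the main care-step rather than a genuine obstacle, is handling the case where $\vec{\gamma}$ is not deterministic but merely independent of $\{\vec{\epsilon}_i\}_{i=1}^K$ (the lemma allows both). In that case the preceding computation should be read conditionally on $\vec{\gamma}$: for almost every realization of $\vec{\gamma}$ with $\|\vec{\gamma}\|_2 = 1$, the conditional characteristic function of $\vec{\epsilon}_t^*$ evaluates to $\exp(-\tfrac{1}{2}\|\vec{u}\|_2^2)$, which does not depend on $\vec{\gamma}$. Taking expectation over $\vec{\gamma}$ then yields the same unconditional characteristic function, and the conclusion follows. Crucially, this argument breaks if $\vec{\gamma}$ is allowed to depend on the $\vec{\epsilon}_i$ themselves (as happens for the data-driven optimizer $\vec{\gamma}^*$ of Theorem~2); that subtlety is worth an explicit remark in the proof, since it explains why the lemma's hypothesis of independence (or determinism) of $\vec{\gamma}$ cannot be dropped even though the unit-norm constraint alone delivers the right covariance.
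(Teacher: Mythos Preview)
Your proof is correct and, for the mean and covariance computations, identical to the paper's argument. The one genuine methodological difference is in how Gaussianity itself is established: the paper simply observes that $(\vec{\epsilon}_1,\ldots,\vec{\epsilon}_K)$ is jointly Gaussian and that $\vec{\epsilon}_t^*$ is a linear image of it, hence Gaussian; you instead compute the characteristic function explicitly. Both are standard and equally valid; your route is more self-contained (no appeal to a closure-under-linear-maps fact), while the paper's is a one-liner. Your additional treatment of the random-but-independent $\vec{\gamma}$ case via conditioning, and your explicit warning that the argument fails for the data-driven $\vec{\gamma}^*$ of Theorem~\ref{thm:ncs_optimal}, are more careful than the paper's own proof, which does not address these points inside the proof body (the paper only alludes to the independence requirement in a one-sentence conclusion afterward).
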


\begin{proof}
By linearity of expectation,
$\mathbb{E}[\vec{\epsilon}_t^*] = \sum_{i=1}^K \gamma_i \,\mathbb{E}[\vec{\epsilon}_t^i] = \vec{0}$.
For the covariance, independence and isotropy give
\[
\operatorname{Cov}(\vec{\epsilon}_t^*)
= \sum_{i=1}^K \sum_{j=1}^K \gamma_i \gamma_j \,\mathbb{E}\!\big[\vec{\epsilon}_t^i (\vec{\epsilon}_t^j)^\top\big]
= \sum_{i=1}^K \gamma_i^2 \,\mathbf{I}
= \|\vec{\gamma}\|_2^2 \,\mathbf{I}
= \mathbf{I}.
\]
Since $(\vec{\epsilon}_t^1,\ldots,\vec{\epsilon}_t^K)$ is jointly Gaussian and $\vec{\epsilon}_t^*$ is a linear transformation of it, $\vec{\epsilon}_t^*$ is Gaussian with mean $\vec{0}$ and covariance $\mathbf{I}$; hence $\vec{\epsilon}_t^* \sim \mathcal{N}(\vec{0}, \mathbf{I})$.
\end{proof}

\paragraph{Conclusion.}
Combining Lemma~\ref{lem:ncs_gaussianity} with Theorem~\ref{thm:ncs_optimal}, we obtain that the synthesized noise is standard normal whenever the synthesis codebook is independent of the weight computation, i.e., $\vec{\epsilon}_t^* \sim \mathcal{N}(\vec{0}, \mathbf{I})$.

In specific optimization scenarios, as the number of noise sources increases linearly, the inner product also increases nearly linearly according to log(K). This approach achieves significantly higher efficiency than selecting a single noise source within K (in DDCM). Furthermore, the magnitude of the optimal noise remains constant over a considerable range (approximately equal to the average magnitude of the noise sources). Specific variations can be observed in Fig.~\ref{fig:inner_product_and_norm_vs_k}.

\section{NCS Definition}
\label{app:optimal}
\subsection{Proof of Theorem~\ref{thm:ncs_optimal}}

\begin{proof}
Let $\vec{c} = \nabla_{\vec{x}_t} \log p(\vec{y} \mid \vec{x}_t) \in \mathbb{R}^d$ be the approximate measurement score, and let
$\mat{E}_t = [\vec{\epsilon}_t^1, \ldots, \vec{\epsilon}_t^K] \in \mathbb{R}^{d \times K}$ stack $K$ standard Gaussian noise vectors as columns.
For any $\vec{\gamma} \in \mathbb{R}^K$ with $\|\vec{\gamma}\|_2 = 1$, we have
\[
\langle \vec{c}, \mat{E}_t \vec{\gamma} \rangle
= \vec{c}^\top \mat{E}_t \vec{\gamma}
= (\mat{E}_t^\top \vec{c})^\top \vec{\gamma}.
\]
Define $\vec{s} := \mat{E}_t^\top \vec{c} \in \mathbb{R}^K$. The optimization problem in Theorem~\ref{thm:ncs} becomes
\[
\vec{\gamma}^*
= \arg\max_{\|\vec{\gamma}\|_2 = 1} \langle \vec{s}, \vec{\gamma} \rangle.
\]
By the Cauchy--Schwarz inequality,
\[
\langle \vec{s}, \vec{\gamma} \rangle
\le \|\vec{s}\|_2 \, \|\vec{\gamma}\|_2
= \|\vec{s}\|_2,
\]
with equality if and only if $\vec{\gamma}$ is aligned with $\vec{s}$, i.e., $\vec{\gamma} = \lambda \vec{s}$ for some scalar $\lambda$.
Enforcing $\|\vec{\gamma}\|_2 = 1$ yields $|\lambda| = 1/\|\vec{s}\|_2$, and hence
\[
\vec{\gamma}^*
= \frac{\vec{s}}{\|\vec{s}\|_2}
= \frac{\mat{E}_t^\top \vec{c}}{\|\mat{E}_t^\top \vec{c}\|_2}.
\]
\end{proof}

\subsection{Cosine-based Objective and Relation to Theorem~\ref{thm:ncs_optimal}}

Theorem~\ref{thm:ncs} and Theorem~\ref{thm:ncs_optimal} formulate Noise Combination Sampling (NCS) by maximizing the inner product
\[
\langle \vec{c}, \mat{E}_t \vec{\gamma} \rangle
\quad\text{subject to}\quad \|\vec{\gamma}\|_2 = 1,
\]
which leads to the closed-form solution
\(\vec{\gamma}^* \propto \mat{E}_t^\top \vec{c}\) via Cauchy--Schwarz (Theorem~\ref{thm:ncs_optimal}).

A natural alternative is to maximize the cosine similarity between the synthesized noise and the measurement score direction:
\begin{equation}
    \label{eq:ncs_cosine_objective}
    \vec{\gamma}^{\star}
    \;=\;
    \operatorname*{argmax}_{\|\vec{\gamma}\|_2 = 1}
    \cos\!\bigl(\vec{c}, \mat{E}_t \vec{\gamma}\bigr)
    \;=\;
    \operatorname*{argmax}_{\|\vec{\gamma}\|_2 = 1}
    \frac{\vec{c}^\top \mat{E}_t \vec{\gamma}}{\|\vec{c}\|_2\,\|\mat{E}_t \vec{\gamma}\|_2}.
\end{equation}
Since the sign of $\vec{\gamma}$ can always be flipped, maximizing the cosine is equivalent to maximizing its square:
\[
\max_{\|\vec{\gamma}\|_2 = 1}
\cos^2\!\bigl(\vec{c}, \mat{E}_t \vec{\gamma}\bigr)
=
\max_{\|\vec{\gamma}\|_2 = 1}
\frac{(\vec{c}^\top \mat{E}_t \vec{\gamma})^2}{\|\mat{E}_t \vec{\gamma}\|_2^2}.
\]

Let $\vec{s} := \mat{E}_t^\top \vec{c} \in \mathbb{R}^K$ and $\mat{B} := \mat{E}_t^\top \mat{E}_t \in \mathbb{R}^{K\times K}$. Then
\[
(\vec{c}^\top \mat{E}_t \vec{\gamma})^2
= (\vec{s}^\top \vec{\gamma})^2
= \vec{\gamma}^\top (\vec{s}\vec{s}^\top) \vec{\gamma},
\quad
\|\mat{E}_t \vec{\gamma}\|_2^2
= \vec{\gamma}^\top \mat{B}\,\vec{\gamma},
\]
and the cosine objective becomes a generalized Rayleigh quotient:
\begin{equation}
    \label{eq:generalized_rayleigh}
    \max_{\|\vec{\gamma}\|_2 = 1}
    \frac{\vec{\gamma}^\top (\vec{s}\vec{s}^\top) \vec{\gamma}}
         {\vec{\gamma}^\top \mat{B}\,\vec{\gamma}}.
\end{equation}

\begin{proposition}[Cosine-optimal noise combination]
\label{prop:ncs_cos}
Let $\vec{c} \in \mathbb{R}^d$ and $\mat{E}_t \in \mathbb{R}^{d\times K}$ be as in Theorem~\ref{thm:ncs_optimal}, and assume that $\mat{B} = \mat{E}_t^\top \mat{E}_t$ is invertible. Then the maximizer of the cosine objective~\eqref{eq:ncs_cosine_objective} is given (up to normalization) by
\begin{equation}
    \label{eq:ncs_cos_solution}
    \vec{\gamma}^{\star}
    \;\propto\;
    \mat{B}^{-1} \vec{s}
    \;=\;
    (\mat{E}_t^\top \mat{E}_t)^{-1} \mat{E}_t^\top \vec{c},
\end{equation}
and the corresponding synthesized noise $\mat{E}_t \vec{\gamma}^{\star}$ is the orthogonal projection of $\vec{c}$ onto $\text{span}\{\vec{\epsilon}_t^1,\dots,\vec{\epsilon}_t^K\}$.
\end{proposition}

\begin{proof}
The maximization problem~\eqref{eq:generalized_rayleigh} is a generalized Rayleigh quotient with
$\mat{A} = \vec{s}\vec{s}^\top$ and $\mat{B} = \mat{E}_t^\top \mat{E}_t$. The maximizer is the leading generalized eigenvector of $(\mat{A},\mat{B})$, i.e., any nonzero solution of $\mat{A}\vec{\gamma} = \lambda \mat{B}\vec{\gamma}$. Since $\mat{A}$ has rank one,
\[
\vec{s}\vec{s}^\top \vec{\gamma} = \lambda \mat{B}\vec{\gamma}
\quad\Rightarrow\quad
\vec{s} \, (\vec{s}^\top \vec{\gamma}) = \lambda \mat{B}\vec{\gamma},
\]
so $\mat{B}\vec{\gamma}$ must be colinear with $\vec{s}$, that is $\mat{B}\vec{\gamma} \propto \vec{s}$, which yields~\eqref{eq:ncs_cos_solution} after normalization. The projection interpretation follows from
\(
\mat{E}_t \mat{B}^{-1} \mat{E}_t^\top
\)
being the orthogonal projector onto $\text{span}\{\vec{\epsilon}_t^i\}_{i=1}^K$.
\end{proof}

\paragraph{Relation to Theorem~\ref{thm:ncs_optimal}.}
The inner-product objective in Theorem~\ref{thm:ncs_optimal} leads to
\[
\vec{\gamma}^*_{\text{inner}}
\;\propto\;
\vec{s}
=
\mat{E}_t^\top \vec{c},
\]
while the cosine-optimal solution from Proposition~\ref{prop:ncs_cos} is
\[
\vec{\gamma}^{\star}_{\cos}
\;\propto\;
\mat{B}^{-1} \vec{s}
=
(\mat{E}_t^\top \mat{E}_t)^{-1} \mat{E}_t^\top \vec{c}.
\]
The two coincide exactly when $\mat{B}$ is a scalar multiple of the identity:
\[
\mat{E}_t^\top \mat{E}_t = \alpha \mat{I}_K
\quad\Rightarrow\quad
\vec{\gamma}^{\star}_{\cos} \propto \vec{\gamma}^*_{\text{inner}}.
\]
In our setting, the columns of $\mat{E}_t$ are i.i.d.\ standard Gaussian vectors in $\mathbb{R}^d$, and in the high-dimensional regime $d \gg K$ the Gram matrix concentrates around a scalar multiple of the identity:
\[
\mat{E}_t^\top \mat{E}_t \approx d\,\mat{I}_K,
\]
so that
\(
(\mat{E}_t^\top \mat{E}_t)^{-1} \approx \tfrac{1}{d}\mat{I}_K
\)
and
\(
\vec{\gamma}^{\star}_{\cos} \approx \vec{\gamma}^*_{\text{inner}}.
\)
Thus, Theorem~\ref{thm:ncs_optimal} can be viewed as a computationally convenient approximation to the cosine-optimal combination: maximizing the inner product with the score direction yields the same direction as maximizing the cosine similarity whenever the codebook noise vectors are approximately orthogonal, which is the typical case in high dimensions.

\section{Proof of Equivalence between NCS Optimization and NCS-MPGD}
\label{app:ncs_optimization}

\begin{theorem}[Equivalence of NCS Formulations]
    For the NCS optimization problem~\eqref{eq:ncs_optimization}, we propose the following formulation, in which we replace $\vec{x}_0$ in \citet{ohayon2025compressed} with the more accurate reconstruction $\vec{\tilde{x}}_{0|t}$.
    \begin{equation}
        \vec{\gamma}^* = \operatorname*{argmax}_{\vec{\gamma} \in \mathbb{R}^K, \|\vec{\gamma}\|_2 = 1} \left\langle \vec{y} - \mat{A}\tilde{\vec{x}}_{0 \mid t}, \mat{A}\sum_{i=1}^K \gamma_i \vec{\epsilon}^i_{t} \right\rangle,
    \end{equation}
    It is equivalent to the NCS-MPGD formulation:
    \begin{equation}
        \vec{\gamma}^* = \operatorname*{argmax}_{\vec{\gamma} \in \mathbb{R}^K,\, \|\vec{\gamma}\|_2 = 1} \left\langle - \mat{A}^\top (\vec{y} - \mat{A} \vec{\tilde{x}}_{0|t}),\ \sum_{i=1}^K \gamma_i \vec{\epsilon}^i_{t} \right\rangle.
    \end{equation}
    That is, both optimization problems have the same optimal solution $\vec{\gamma}^*$.
\end{theorem}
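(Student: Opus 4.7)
The plan is to reduce both variational problems to the canonical inner-product maximization already solved by Theorem~\ref{thm:ncs_optimal}, and then read off the optimizer in both cases. No genuinely new machinery is needed; the content is adjointness plus one invocation of the earlier theorem.

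First, I would apply the adjoint identity $\langle \vec{u},\mat{A}\vec{v}\rangle = \langle \mat{A}^\top \vec{u},\vec{v}\rangle$ with $\vec{u}=\vec{y}-\mat{A}\tilde{\vec{x}}_{0\mid t}$ and $\vec{v}=\sum_{i=1}^K \gamma_i \vec{\epsilon}_i$ to rewrite the DDCM-style objective as
\[
\Big\langle \vec{y}-\mat{A}\tilde{\vec{x}}_{0\mid t},\ \mat{A}\sum_{i=1}^K \gamma_i \vec{\epsilon}_i\Big\rangle
\;=\; \Big\langle \mat{A}^\top(\vec{y}-\mat{A}\tilde{\vec{x}}_{0\mid t}),\ \sum_{i=1}^K \gamma_i \vec{\epsilon}_i\Big\rangle.
\]
This is exactly the inner product appearing in the NCS-MPGD objective, with common driving vector $\vec{c}:=\mat{A}^\top(\vec{y}-\mat{A}\tilde{\vec{x}}_{0\mid t})$.

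Second, both problems then have the identical canonical form $\max_{\|\vec{\gamma}\|_2=1}\langle \vec{c},\mat{E}_t\vec{\gamma}\rangle$, so I would invoke Theorem~\ref{thm:ncs_optimal} once to obtain the unique unit-norm maximizer
\[
\vec{\gamma}^* \;=\; \frac{\vec{c}^\top \mat{E}_t}{\|\vec{c}^\top \mat{E}_t\|_2}.
\]
Because $\vec{c}$ is the same vector in both formulations, the two argmaxes coincide, which is exactly the claimed equivalence.

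The only subtlety, and therefore the ``hard part'' in name only, is a sign-convention check: the NCS-MPGD direction as written in this appendix statement carries a leading minus sign, whereas the definition of NCS-MPGD in Section~\ref{subsec:ncs} uses the positive sign $\mat{A}^\top(\vec{y}-\mat{A}\tilde{\vec{x}}_{0\mid t})$. With a consistent (positive) convention the two optimizers are literally identical; with the displayed negative sign they differ by an overall factor of $-1$, which still corresponds to the same one-dimensional optimal subspace spanned by $\vec{c}^\top \mat{E}_t$ and, by the symmetry $\vec{\epsilon}\leftrightarrow-\vec{\epsilon}$ of isotropic Gaussians (cf.\ Lemma~\ref{lem:ncs_gaussianity}), yields a statistically equivalent noise sample. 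I would reconcile this once at the outset and then simply present the two steps above.
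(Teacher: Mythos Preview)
Your proposal is correct and follows essentially the same approach as the paper: both proofs use the adjoint identity $\langle \vec{u},\mat{A}\vec{v}\rangle=\langle \mat{A}^\top\vec{u},\vec{v}\rangle$ to reduce the DDCM-style objective to the canonical form of Theorem~\ref{thm:ncs_optimal}, and then appeal to the closed-form optimizer. Your treatment of the sign discrepancy is in fact cleaner than the paper's, which also notes that the two optimizers agree only up to a sign and resolves it by the same ``same direction / unit-norm constraint'' reasoning you give.
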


\begin{proof}
    We prove this equivalence by showing that the two objective functions are identical for any feasible $\vec{\gamma}$.
    
    Let $\vec{c} = \vec{y} - \mat{A}\tilde{\vec{x}}_{0 \mid t}$ and $\vec{\varepsilon} = \sum_{i=1}^K \gamma_i \vec{\epsilon}^i_{t}$. The standard NCS objective can be written as:
    \begin{align}
        \left\langle \vec{c}, \mat{A}\vec{\varepsilon} \right\rangle &= \vec{c}^\top (\mat{A}\vec{\varepsilon}) \\ \nonumber
        &= (\vec{c}^\top \mat{A}) \vec{\varepsilon} \\\nonumber
        &= \vec{\varepsilon}^\top (\vec{c}^\top \mat{A})^\top \\\nonumber
        &= \vec{\varepsilon}^\top \mat{A}^\top \vec{c} \\\nonumber
        &= \left\langle \mat{A}^\top \vec{c}, \vec{\varepsilon} \right\rangle.\nonumber
    \end{align}
    
    The NCS-MPGD objective is:
    \begin{align}
        \left\langle - \mat{A}^\top \vec{c}, \vec{\varepsilon} \right\rangle &= \left\langle - \mat{A}^\top \vec{c}, \sum_{i=1}^K \gamma_i \vec{\epsilon}^i_{t} \right\rangle \\
        &= \sum_{i=1}^K \gamma_i \left\langle - \mat{A}^\top \vec{c}, \vec{\epsilon}^i_{t} \right\rangle.
    \end{align}
    
    Since the two inner products are identical for every $\vec{\varepsilon}$ (and thus for every $\vec{\gamma}$), and the feasible set $\{\vec{\gamma} \in \mathbb{R}^K \mid \|\vec{\gamma}\|_2 = 1\}$ is the same for both problems, their maxima and argmax sets are identical.
    
    Furthermore, since each objective is a linear functional of $\vec{\gamma}$, the maximum over the unit sphere occurs when $\vec{\gamma}$ is aligned with the functional's direction. For the standard NCS formulation, this gives:
    \begin{equation}
        \vec{\gamma}^* = \frac{(\mat{A}^\top \vec{c})^\top \mat{E}_t}{\|(\mat{A}^\top \vec{c})^\top \mat{E}_t\|_2},
    \end{equation}
    where $\mat{E}_t = [\vec{\epsilon}_t^1, \ldots, \vec{\epsilon}_t^K]$ is the matrix of noise vectors.
    
    For the NCS-MPGD formulation, the optimal solution is:
    \begin{equation}
        \vec{\gamma}^* = \frac{(-\mat{A}^\top \vec{c})^\top \mat{E}_t}{\|(-\mat{A}^\top \vec{c})^\top \mat{E}_t\|_2} = \frac{-(\mat{A}^\top \vec{c})^\top \mat{E}_t}{\|(\mat{A}^\top \vec{c})^\top \mat{E}_t\|_2}.
    \end{equation}
    
    The negative sign in the NCS-MPGD formulation is due to the maximization of the negative inner product, which is equivalent to minimizing the positive inner product. However, since we are maximizing the absolute value of the alignment, both formulations yield the same optimal direction (up to a sign, which is normalized out by the unit norm constraint).
    
    Therefore, the two optimization problems are not merely equivalent, they are the same problem written in two different notations, and they achieve their maximum when the optimal $\vec{\gamma}^*$ is the same (up to normalization).
\end{proof}

\section{Quantization by NCS closed-form solution}
\label{app:ncs_quantization}

To utilize extra noise vectors to approximate the measurement score, \citet{ohayon2025compressed} proposes to use greedy search to find the noise vector that maximize the inner product between the measurement score and the noise vector, and to search the optimal quantization parameters on the selected noise vector and the next one. This process is computationally expensive, requiring hours to search for the optimal quantization parameters. If $C=10$, $m=10$, and $K=1024$, this requires approximately 10 hours. For notational simplicity, we omit the time subscript $t$ throughout this section.

\paragraph{Problem.}
Let $\{\vec{\epsilon}_i\}_{i=1}^N$ be given noise vectors and let $\vec{c}$ be the target.
Define $b_i := \langle \vec{\epsilon}_i, \vec{c}\rangle$ and (optionally) align signs so $b_i\ge 0$ by replacing $\vec{\epsilon}_i\leftarrow \operatorname{sgn}(b_i)\vec{\epsilon}_i$.
We form a mixture
\[
\vec{m}(\vec{\gamma}) \;=\; \sum_{i=1}^m \gamma_i\,\vec{\epsilon}_{(i)}, \qquad
\vec{\gamma}=(\gamma_1,\ldots,\gamma_m)^\top, \quad \sum_{i=1}^m \gamma_i^2 = 1,
\]
where $(i)$ indexes an ordered subset of $m$ atoms (e.g., the Top-$m$ by $|b_i|$).
The objective is to maximize alignment, i.e.
\[
\max_{\vec{\gamma}}\; \langle \vec{m}(\vec{\gamma}), \vec{c} \rangle
\;=\; \max_{\vec{\gamma}}\; \sum_{i=1}^m \gamma_i\, b_{(i)}
\quad \text{s.t.}\quad \sum_{i=1}^m \gamma_i^2=1.
\]

\paragraph{Closed-form (continuous) solution.}
Without quantization, the optimal coefficients on a fixed support are
\[
\vec{\gamma}^{\,*}
\;=\; \frac{\vec{b}_{S}}{\|\vec{b}_{S}\|_2},
\qquad \vec{b}_{S}=\big(b_{(1)},\ldots,b_{(m)}\big)^\top,
\]
by Cauchy–Schwarz. If the support $S$ is free, it is optimal to take the $m$ indices with largest $|b_i|$.

\paragraph{Definition (quantization via $\ell_2$ stick-breaking).}
We quantize by parameterizing $\vec{\gamma}$ through a stick-breaking map using
\[
u_i \in \mathcal{Q} \subset (0,1], \qquad i=1,\ldots,2^{C}-1,
\]
where $\mathcal{Q}$ is a finite grid with $2^{C-1} - 1$ elements (e.g., for $m = 3$ and $C = 2$, we have $\mathcal{Q} = \{0.33, 0.66, 1.0\}$).
The coefficients are then
\begin{align}
\gamma_1 &= \sqrt{u_1}, \\
\gamma_i &= \Big(\prod_{j=1}^{i-1}\sqrt{1-u_j}\Big)\sqrt{u_i}, \quad i=2,\ldots,m-1,\\
\gamma_m &= \prod_{j=1}^{m-1}\sqrt{1-u_j}.
\end{align}
By construction $\sum_{i=1}^m \gamma_i^2 = 1$ for any choices of $\{u_i\}$, so no final normalization is required.
The inverse map (from any feasible $\vec{\gamma}$ with $\sum \gamma_i^2=1$) is
\begin{align}
u_1 &= \gamma_1^2, \\
u_i &= \frac{\gamma_i^2}{1-\sum_{t=1}^{i-1}\gamma_t^2}, \quad i=2,\ldots,m-1.
\end{align}

\paragraph{Using the closed-form to obtain a quantized solution.}
We obtain a quantized solution directly from $\vec{\gamma}^{\,*}$ in two simple steps:
\begin{enumerate}
\item \textbf{Project $\vec{\gamma}^{\,*}$ into stick-breaking space.}
Compute $\{u_i^{*}\}_{i=1}^{m-1}$ from $\vec{\gamma}^{\,*}$ using the inverse map above.
(When $b_{(1)}\ge b_{(2)}\ge\cdots$, $\gamma_i^{*}\propto b_{(i)}$ is non-increasing, which matches the stick-breaking order.)

\item \textbf{Quantize and reconstruct.}
Independently quantize each stage by nearest-neighbor projection onto the grid,
\[
\hat{u}_i \;=\; \arg\min_{u\in\mathcal{Q}} \big|u-u_i^{*}\big|,
\qquad i=1,\ldots,m-1,
\]
then form $\hat{\vec{\gamma}}$ from $\{\hat{u}_i\}$ via the forward stick-breaking map.
\end{enumerate}
This yields $\hat{\vec{\gamma}}$ in $O(m)$ time and preserves $\sum_i \hat{\gamma}_i^2=1$ by construction.

\paragraph{Remark (stage-wise closed form and exact discrete refinement).}
If one optimizes stage-wise in the \emph{continuous} domain, the optimal fraction at stage $i$ has the closed form
\[
u_i^{\star}\;=\;\frac{b_{(i)}^{\,2}}{b_{(i)}^{\,2}+v_{i+1}^{\,2}},
\qquad
v_m=b_{(m)},\quad v_i = b_{(i)}\sqrt{u_i^{\star}}+v_{i+1}\sqrt{1-u_i^{\star}}.
\]
A discretized variant replaces $u_i^{\star}$ by the nearest grid point in $\mathcal{Q}$ at each stage (still $O(m)$).
For the \emph{exact} discrete optimum on $\mathcal{Q}$ one can use a 1D dynamic program:
\[
v_m=b_{(m)},\qquad
v_i=\max_{u\in\mathcal{Q}}\Big\{\,b_{(i)}\sqrt{u}+v_{i+1}\sqrt{1-u}\,\Big\},
\]
which selects $\hat{u}_i\in\mathcal{Q}$ per stage and remains $O\big(m\,|\mathcal{Q}|\big)$.

\section{Analysis of the Synthesized Noise Construction}
\label{app:pseudo-noise-analysis}
In this appendix, we analyze the distribution of the proposed synthesized noise vector $\vec{\epsilon}^{*}$ and justify our choice of the parameter $K$, i.e., the number of underlying noise samples used in the construction. Throughout, $d$ denotes the ambient dimension, and $\mathbf{v} \in \mathbb{R}^d$ is a fixed unit vector indicating the embedding direction. For notational simplicity, we omit the time subscript $t$ in this section, writing $\vec{\epsilon}^*$ and $\vec{\epsilon}_i$ instead of $\vec{\epsilon}_t^*$ and $\vec{\epsilon}_t^i$, since the analysis applies independently to each timestep.

\subsection{Decomposition of \texorpdfstring{$\vec{\epsilon}^{*}$}{epsilon*} into Signal and Orthogonal Gaussian Noise}

We first decompose each $\vec{\epsilon}_i$ along the direction $\mathbf{v}$ and its orthogonal complement:
\[
\vec{\epsilon}_i = s_i \mathbf{v} + \mathbf{w}_i, \quad i=1,\ldots,K,
\]
where
\[
s_i := \langle \vec{\epsilon}_i, \mathbf{v} \rangle \sim \mathcal{N}(0,1), \quad 
\mathbf{w}_i := \vec{\epsilon}_i - s_i \mathbf{v} \sim \mathcal{N}(0, I_d - \mathbf{v}\mathbf{v}^\top),
\]
and the families $\{ s_i \}_{i=1}^K$ and $\{ \mathbf{w}_i \}_{i=1}^K$ are independent, with all $s_i$ mutually independent and all $\mathbf{w}_i$ mutually independent. Since we have demonstrated the similarity between the two cosine and inner product problems in high dimensions when $K$ is much smaller than $d$, for simplicity of analysis, we can normalize $\vec{v}$ as $\vec{v}/|\vec{v}| \cdot \sqrt{d}$, where $\sqrt{d}$ is the expectation of the noise norm when $d$ is large.

Define
\[
T := \left( \sum_{i=1}^K s_i^2 \right)^{1/2}.
\]
Then $T$ follows a chi distribution with $K$ degrees of freedom, i.e., $T \sim \chi_K$, and the optimal coefficients become
\[
\gamma_i = \frac{ s_i }{ T }.
\]
Substituting into the expression for $\vec{\epsilon}^{*}$, we obtain
\begin{align*}
\vec{\epsilon}^{*} 
&= \sum_{i=1}^K \frac{s_i}{T} (s_i \mathbf{v} + \mathbf{w}_i)
= \sum_{i=1}^K \frac{s_i^2}{T} \mathbf{v} + \sum_{i=1}^K \frac{s_i}{T} \mathbf{w}_i \\
&= T \mathbf{v} + \mathbf{G},
\end{align*}
where
\[
\mathbf{G} := \sum_{i=1}^K \frac{s_i}{T} \mathbf{w}_i.
\]
We can characterize the joint distribution of $(T, \mathbf{G})$ as follows:

\begin{lemma}[Signal--noise decomposition]\label{lem:signal_noise_decomposition}
Let $T$ and $\mathbf{G}$ be defined as above. Then
\[
T \sim \chi_K, \quad \mathbf{G} \sim \mathcal{N}(0, I_d - \mathbf{v}\mathbf{v}^\top), \quad T \perp \mathbf{G},
\]
and consequently
\[
\vec{\epsilon}^{*} = T \mathbf{v} + \mathbf{G}.
\]
\end{lemma}

\begin{proof}
Conditional on $(s_1, \ldots, s_K)$, the vector $\mathbf{G}$ is a linear combination of jointly Gaussian vectors $\{ \mathbf{w}_i \}_{i=1}^K$, so $\mathbf{G} \mid s$ is Gaussian with mean zero. Its conditional covariance is
\[
\operatorname{Cov}(\mathbf{G} \mid s)
= \sum_{i=1}^K \left( \frac{s_i}{T} \right)^2 \operatorname{Cov}(\mathbf{w}_i)
= \left( \sum_{i=1}^K \frac{s_i^2}{T^2} \right) (I_d - \mathbf{v}\mathbf{v}^\top)
= I_d - \mathbf{v}\mathbf{v}^\top,
\]
which is independent of $s$. Thus, $\mathbf{G} \mid s \sim \mathcal{N}(0, I_d - \mathbf{v}\mathbf{v}^\top)$ for every realization of $s$. Integrating over $s$ shows that $\mathbf{G}$ itself is Gaussian with the same mean and covariance. 

Moreover, since the conditional law $\mathbf{G} \mid s$ does not depend on $s$, the random variables $\mathbf{G}$ and $s$ (and hence $\mathbf{G}$ and $T$) are independent. The identity $T \sim \chi_K$ follows from the standard fact that the sum of squares of $K$ independent $\mathcal{N}(0,1)$ variables is chi-square with $K$ degrees of freedom, and taking square roots yields a chi distribution.
\end{proof}

This lemma shows that $\vec{\epsilon}^{*}$ can be written as a ``signal-plus-orthogonal-Gaussian-noise'' decomposition:
\[
    \vec{\epsilon}^{*} = T \mathbf{v} + \mathbf{G},
\]
with a random amplitude $T$ along $\mathbf{v}$ and an exactly Gaussian component $\mathbf{G}$ in the orthogonal complement $\mathbf{v}^\perp$.

\subsection{Non-Gaussianity and Rank-1 Deviation from Isotropic Noise}

For an ideal isotropic Gaussian noise vector $\vec{\epsilon} \sim \mathcal{N}(0, I_d)$, the projection onto $\mathbf{v}$ satisfies
\[
\langle \vec{\epsilon}, \mathbf{v} \rangle \sim \mathcal{N}(0,1).
\]
In contrast, for the synthesized noise $\vec{\epsilon}^*$, Lemma~\ref{lem:signal_noise_decomposition} implies
\[
\langle \vec{\epsilon}^{*}, \mathbf{v} \rangle = \langle T \mathbf{v} + \mathbf{G}, \mathbf{v} \rangle = T \sim \chi_K,
\]
which is a chi distribution (non-symmetric and supported on $[0, \infty)$). Therefore, as soon as $K \geq 2$, the distribution of $\langle \vec{\epsilon}^*, \mathbf{v} \rangle$ is not Gaussian.

Using the decomposition $\vec{\epsilon}^{*} = T \mathbf{v} + \mathbf{G}$, we can compute the mean and covariance of $\vec{\epsilon}^{*}$. Since $\mathbb{E}[\mathbf{G}]=0$ and $T$ and $\mathbf{G}$ are independent,
\[
\mathbb{E}[\vec{\epsilon}^{*}] = \mathbb{E}[T]\, \mathbf{v},
\]
and
\[
\operatorname{Cov}(\vec{\epsilon}^{*}) = \operatorname{Var}(T)\, \mathbf{v}\mathbf{v}^\top + (I_d - \mathbf{v} \mathbf{v}^\top).
\]
Thus, compared to an ideal $\mathcal{N}(0, I_d)$, the synthesized noise exhibits:
\begin{itemize}
    \item a non-zero mean along $\mathbf{v}$, with magnitude $\mathbb{E}[T]$;
    \item a modified variance along $\mathbf{v}$ equal to $\operatorname{Var}(T)$, while the variance in $\mathbf{v}^{\perp}$ remains exactly $1$.
\end{itemize}

Since $\mathbf{G}$ is exactly Gaussian and lives in $\mathbf{v}^\perp$, all non-Gaussianity of $\vec{\epsilon}^{*}$ is confined to the one-dimensional subspace spanned by $\mathbf{v}$. In particular, $\vec{\epsilon}^{*}$ restricted to $\mathbf{v}^\perp$ is exactly $\mathcal{N}(0, I_d - \mathbf{v}\mathbf{v}^\top)$.

\subsection{Distribution of the Angle Between \texorpdfstring{$\vec{\epsilon}^{*}$}{epsilon*} and the Embedding Direction}

To quantify how strongly $\vec{\epsilon}^{*}$ is aligned with the embedding direction $\mathbf{v}$, we consider the angle $\theta$ between $\vec{\epsilon}^{*}$ and $\mathbf{v}$, defined by
\[
\cos \theta = \frac{ \langle \vec{\epsilon}^{*}, \mathbf{v} \rangle }{ \|\vec{\epsilon}^{*}\|\,\|\mathbf{v}\| } = \frac{ \langle \vec{\epsilon}^{*}, \mathbf{v} \rangle }{ \| \vec{\epsilon}^{*} \| },
\]
We have $\|\mathbf{v}\| \sim \sqrt{d}$. Using the decomposition from Lemma~\ref{lem:signal_noise_decomposition},
\[
\cos^2 \theta = \frac{ \langle \vec{\epsilon}^{*}, \mathbf{v} \rangle^2 }{ \| \vec{\epsilon}^{*} \|^2 } = \frac{ T^2 }{ T^2 + \| \mathbf{G} \|^2 }.
\]
We now characterize the distribution of $\cos^2 \theta$.

\begin{lemma}[Angle distribution]\label{lem:angle_distribution}
Let $\theta$ be the angle between $\vec{\epsilon}^{*}$ and $\mathbf{v}$. Then
\[
\cos^2 \theta \sim \text{Beta}\left( \tfrac{K}{2}, \tfrac{d-1}{2} \right),
\]
and, in particular,
\[
\mathbb{E}[ \cos^2 \theta ] = \frac{ K }{ K + d - 1 }.
\]
\end{lemma}

\begin{proof}
From Lemma~\ref{lem:signal_noise_decomposition}, $T^2 \sim \chi^2_K$, $\|\mathbf{G}\|^2 \sim \chi^2_{d-1}$, and $T^2$ and $\|\mathbf{G}\|^2$ are independent. It is a standard fact that if $X \sim \chi^2_{k_1}$, $Y \sim \chi^2_{k_2}$ are independent, then $\frac{X}{X+Y} \sim \text{Beta}(k_1/2, k_2/2)$. Applying this with $X = T^2$ and $Y = \|\mathbf{G}\|^2$ yields the result. The expectation of a Beta$(\alpha, \beta)$ random variable is $\alpha/(\alpha+\beta)$. Substituting $\alpha = K/2$, $\beta =(d-1)/2$ gives the second part.
\end{proof}

For comparison, if $\vec{\epsilon} \sim \mathcal{N}(0, I_d)$ is an ideal isotropic Gaussian noise vector, then its direction $\vec{\epsilon} / \| \vec{\epsilon} \|$ is uniformly distributed on the unit sphere, and the squared cosine with any fixed unit vector $\mathbf{v}$ satisfies
\[
\cos^2 \theta_\mathrm{Gauss} \sim \text{Beta}\left( \frac{1}{2}, \frac{d-1}{2} \right), \quad \mathbb{E}[ \cos^2 \theta_\mathrm{Gauss} ] = \frac{1}{d}.
\]
Hence, the synthesized noise amplifies the average squared alignment with $\mathbf{v}$ from $1/d$ (ideal isotropic Gaussian) to
\[
\mathbb{E}[ \cos^2 \theta_{\vec{\epsilon}^{*}} ] = \frac{ K }{ K + d - 1 }.
\]

A convenient way to summarize this effect is via the energy amplification factor
\[
\rho(K,d) := \frac{ \mathbb{E}[ \cos^2 \theta_{\vec{\epsilon}^{*}} ] }{ \mathbb{E}[ \cos^2 \theta_\mathrm{Gauss} ] }
= \frac{ K }{ K + d - 1 } \cdot d.
\]
When $K \ll d$, this factor behaves as $\rho(K,d) \approx K$; thus the expected energy in the embedding direction is amplified roughly by a factor of $K$ compared to an isotropic Gaussian.

\subsection{Choice of $K$ and the $K \approx \sqrt{d}$ Scaling}

The parameter $K$ (the number of codebook noises used in the combination) governs a fundamental trade-off between:
\begin{itemize}
    \item \textbf{Embedding strength:} Larger $K$ increases the energy that $\vec{\epsilon}^*$ carries along the embedding direction $\mathbf{v}$ and thus improves the effective directional SNR for decoding.
    \item \textbf{Closeness to isotropic Gaussian noise:} Larger $K$ also makes the distribution of $\vec{\epsilon}^*$ more strongly biased toward $\mathbf{v}$, and thus more easily distinguishable from an isotropic Gaussian, especially along the one-dimensional subspace $\mathrm{span}\{\mathbf{v}\}$.
\end{itemize}

Lemma~\ref{lem:angle_distribution} shows that
\[
\mathbb{E}\bigl[ \cos^2 \theta_{\vec{\epsilon}^{*}} \bigr] = \frac{ K }{ K + d - 1 }.
\]
For an isotropic Gaussian baseline $\boldsymbol{\varepsilon}\sim\mathcal{N}(0,I_d)$, we have
$\mathbb{E}[\cos^2\theta_{\boldsymbol{\varepsilon}}] \approx 1/d$.
Thus, the fraction of energy in the embedding direction under NCS is
\[
\frac{\mathbb{E}[ \cos^2 \theta_{\vec{\epsilon}^{*}} ]}{\mathbb{E}[\cos^2\theta_{\boldsymbol{\varepsilon}}]}
\approx \frac{K}{K+d-1}\cdot d,
\]
which quantifies how much more strongly $\vec{\epsilon}^*$ concentrates along $\mathbf{v}$ than an isotropic Gaussian.

More importantly for decoding, we consider the \emph{directional SNR} along $\mathbf{v}$, defined as the ratio between the expected energy in the $\mathbf{v}$-direction and the expected energy in its orthogonal complement:
\[
\mathrm{SNR}(K,d)
:= \frac{\mathbb{E}[\|\mathrm{Proj}_{\mathbf{v}}(\vec{\epsilon}^*)\|^2]}
        {\mathbb{E}[\|\mathrm{Proj}_{\mathbf{v}^\perp}(\vec{\epsilon}^*)\|^2]}
= \frac{K}{d-1}.
\]
(For comparison, an isotropic Gaussian satisfies
$\mathrm{SNR}_{\mathrm{iso}}(d) \approx 1/(d-1)$.)

We now justify choosing $K$ on the order of $\sqrt{d}$. Consider the scaling
\[
K = c \sqrt{d}
\]
for some constant $c>0$. Substituting this into Lemma~\ref{lem:angle_distribution},
\[
\mathbb{E}\bigl[ \cos^2 \theta_{\vec{\epsilon}^{*}} \bigr]
= \frac{ c \sqrt{d} }{ c \sqrt{d} + d - 1 }
= \frac{ c }{ \sqrt{d} + c - \frac{1}{\sqrt{d}} }
= O\!\left(\frac{1}{\sqrt{d}}\right)
\quad \text{as } d \to \infty.
\]
Thus, as the dimension grows, the fraction of energy in the embedding direction still vanishes, i.e.,
\[
\mathbb{E}[ \cos^2 \theta_{\vec{\epsilon}^{*}} ] \to 0 \quad \text{as } d \to \infty.
\]
From the perspective of generic low-dimensional projections and standard normality tests, the synthesized noise therefore remains hard to distinguish from an isotropic Gaussian.

At the same time, the directional SNR becomes
\[
\mathrm{SNR}(K,d)
= \frac{K}{d-1}
= \frac{c \sqrt{d}}{d-1}
= \Theta\!\left(\frac{1}{\sqrt{d}}\right).
\]
In other words, the absolute SNR along $\mathbf{v}$ still decays as $1/\sqrt{d}$ in high dimensions, but it is amplified by a factor of order $K \asymp \sqrt{d}$ compared to the isotropic Gaussian baseline, for which $\mathrm{SNR}_{\mathrm{iso}}(d) \approx 1/d$. This scaling is sufficient to significantly improve reliable decoding while keeping the synthesized noise visually close to standard Gaussian noise.

\begin{figure}[H]
    \centering
    \begin{tabular}{ccccc}
        \textbf{Measurement} & \textbf{DDCM:100} & \textbf{DDCM:1000} & \textbf{NCS-DPS:100} & \textbf{NCS-DPS:1000} \\
        \includegraphics[width=0.18\textwidth]{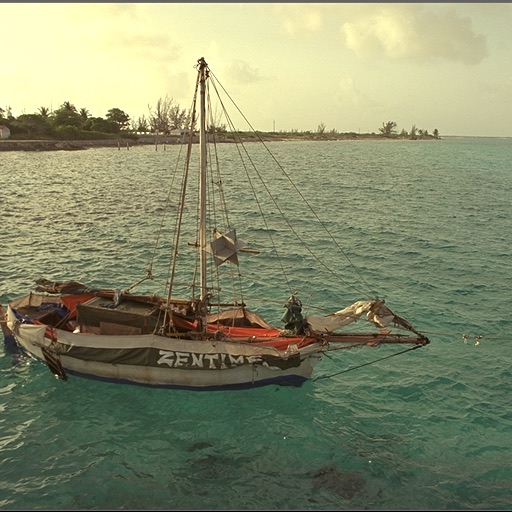} &
        \includegraphics[width=0.18\textwidth]{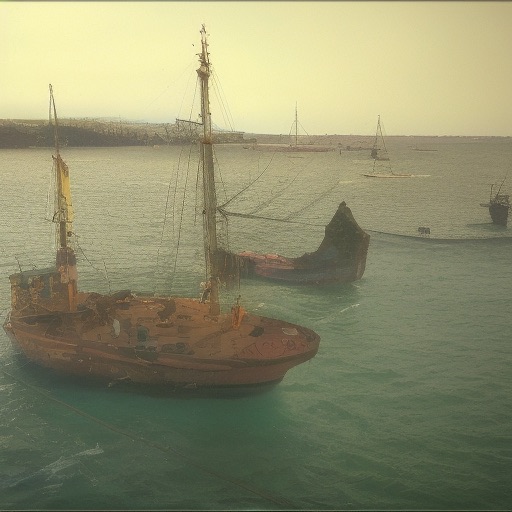} &
        \includegraphics[width=0.18\textwidth]{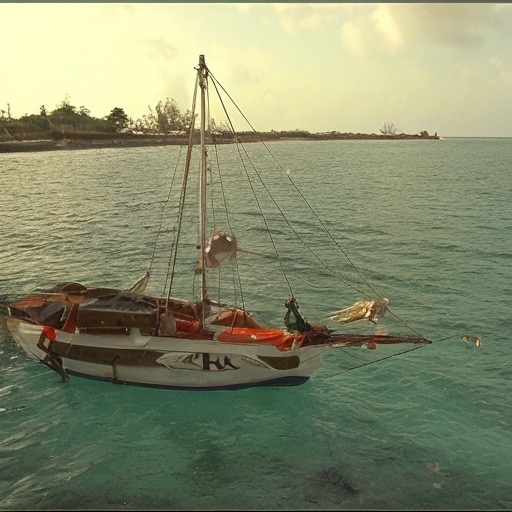} &
        \includegraphics[width=0.18\textwidth]{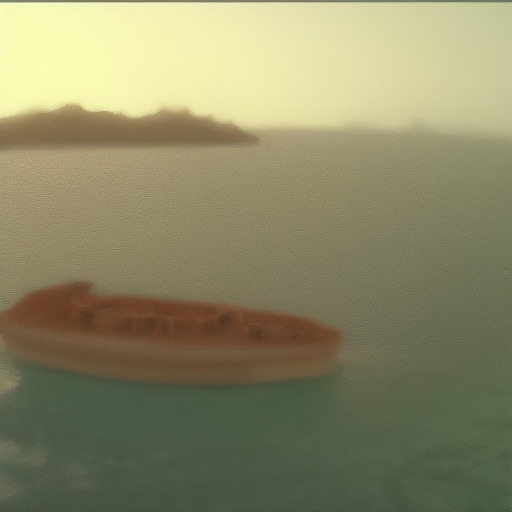} &
        \includegraphics[width=0.18\textwidth]{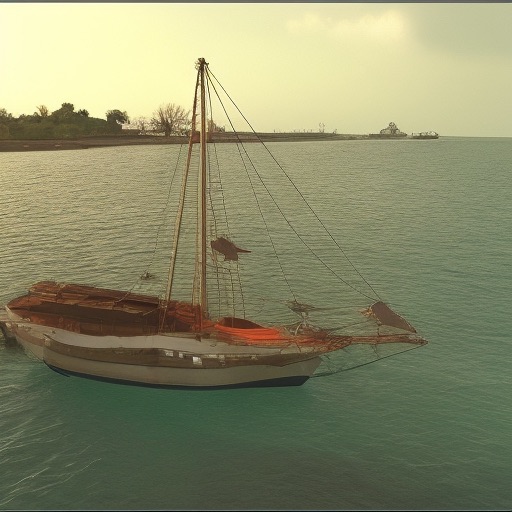} \\
        \includegraphics[width=0.18\textwidth]{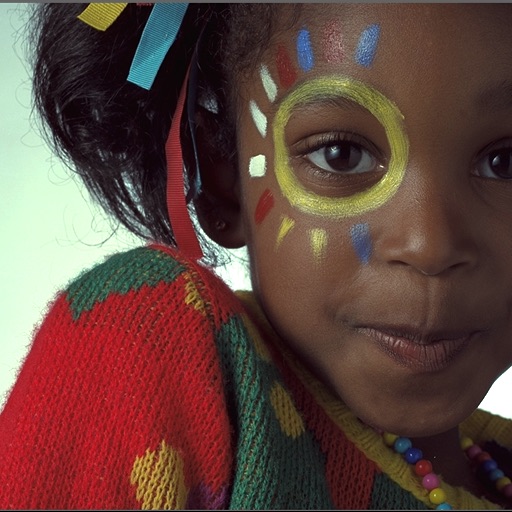} &
        \includegraphics[width=0.18\textwidth]{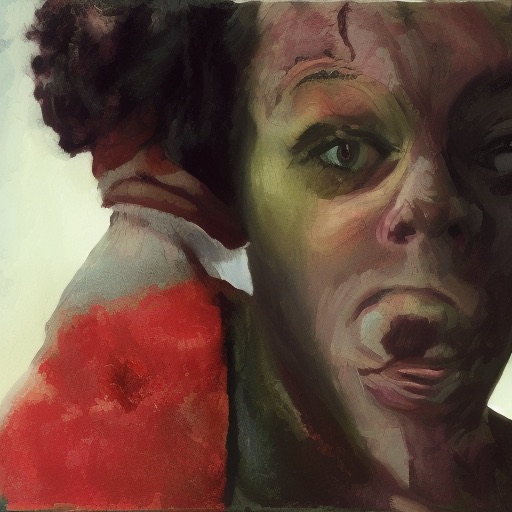} &
        \includegraphics[width=0.18\textwidth]{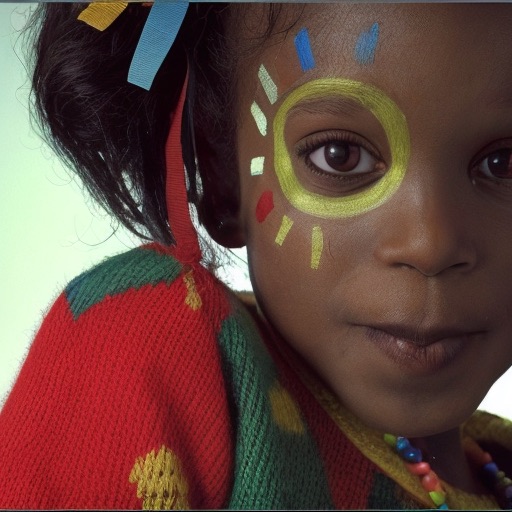} &
        \includegraphics[width=0.18\textwidth]{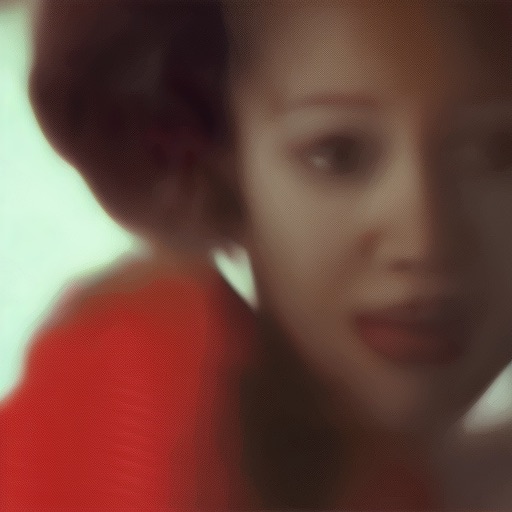} &
        \includegraphics[width=0.18\textwidth]{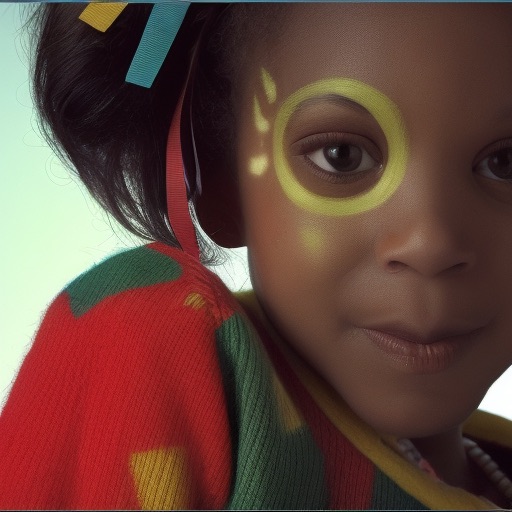} \\
        \includegraphics[width=0.18\textwidth]{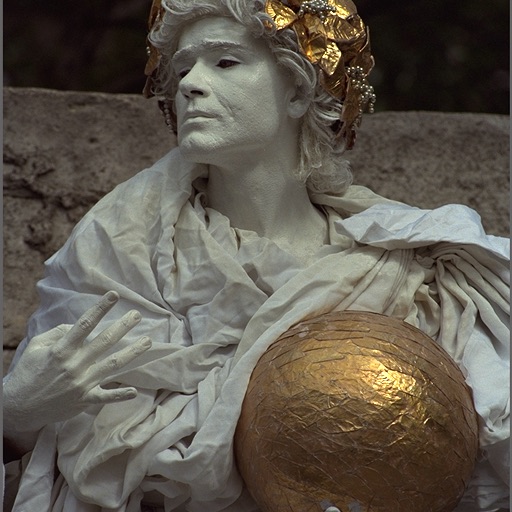} &
        \includegraphics[width=0.18\textwidth]{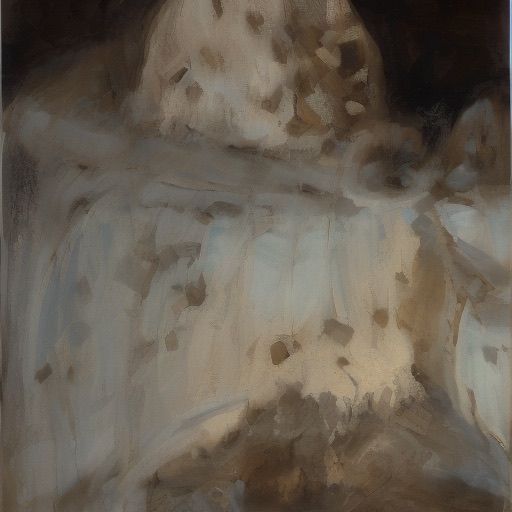} &
        \includegraphics[width=0.18\textwidth]{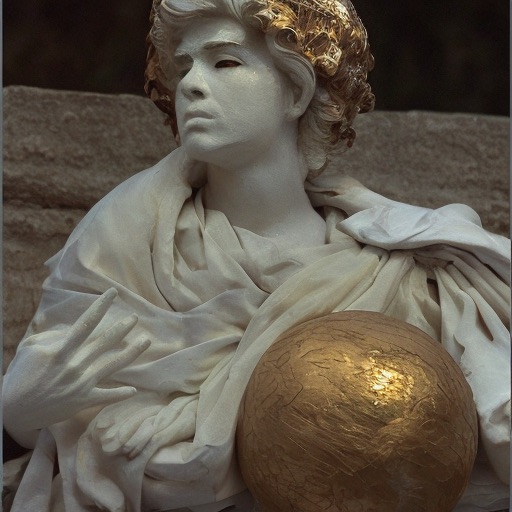} &
        \includegraphics[width=0.18\textwidth]{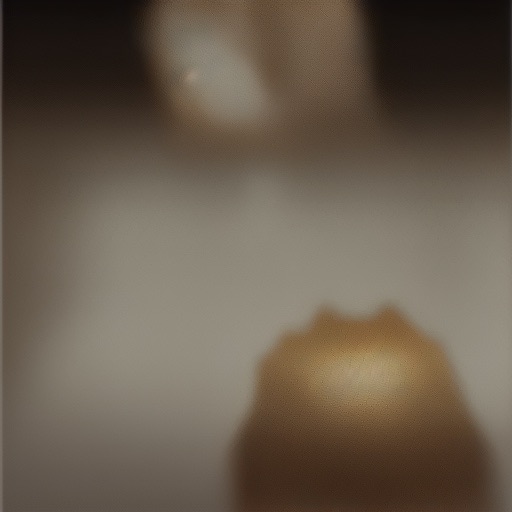} &
        \includegraphics[width=0.18\textwidth]{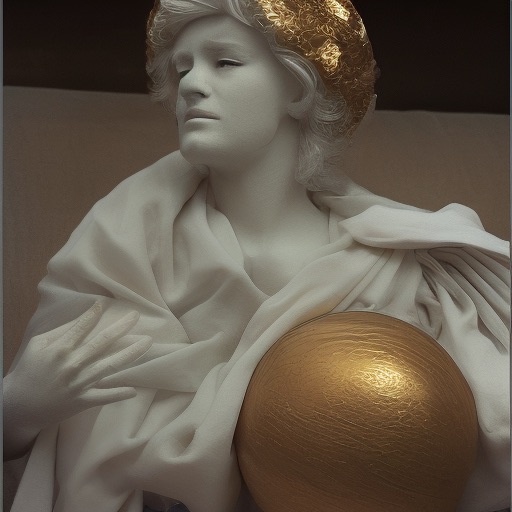} \\
    \end{tabular}
    \caption{Compression results on Kodak24 images. Each row corresponds to a different image. Columns: Original, DDCM (100 steps), DDCM (1000 steps), NCS-DPS (100 steps), NCS-DPS (1000 steps).}
    \label{fig:kodak24_samples}
\end{figure}

\section{Nonlinear Inverse Problems}
\label{app:nonlinear}

We evaluate NCS on two nonlinear inverse problems: nonlinear deblurring and phase retrieval. As shown in Table~\ref{tab:nonlinear_inverse_results}, for nonlinear deblurring, NCS yields noticeable improvements over the baseline methods, particularly in the low-step regime (e.g., 20 steps). However, as the number of sampling steps increases, the performance gap diminishes. For phase retrieval, NCS does not provide consistent improvements; in some cases, the NCS variants perform comparably or even worse than the baselines. Understanding the underlying causes of this behavior remains an interesting direction for future investigation.

\begin{table*}[!t]
    \centering
    \caption{Quantitative comparison on nonlinear inverse problems (FFHQ and ImageNet). Each cell shows PSNR\,/\,FID\,/\,LPIPS.}
    \label{tab:nonlinear_inverse_results}
    \begin{footnotesize}
    \begin{sc}
    \begin{tabular}{lllccc}
    \toprule
    & & & \multicolumn{3}{c}{PSNR(↑)\,/\,FID(↓)\,/\,LPIPS(↓)}\\
    \cmidrule(lr){4-6}
    Dataset & Task & Method & 20 & 100 & 1000 \\
    \midrule
    \multirow{14}{*}{FFHQ}
        & \multirow{7}{*}{\shortstack{Nonlinear\\Deblur}}
            & DPS              & 12.14\,/\,153.1\,/\,0.491 & 19.13\,/\,103.9\,/\,0.273 & 22.78\,/\,91.90\,/\,0.207 \\
        &   & \cellcolor{gray!20}\textbf{NCS-DPS} & \cellcolor{gray!20}\textbf{21.03}\,/\,\textbf{121.6}\,/\,\textcolor{blue}{\textbf{0.269}} & \cellcolor{gray!20}\textbf{23.03}\,/\,\textbf{97.62}\,/\,\textcolor{blue}{\textbf{0.202}} & \cellcolor{gray!20}\textbf{23.78}\,/\,\textbf{92.33}\,/\,\textbf{0.208} \\
        &   & MPGD             & \textbf{19.11}\,/\,\textbf{120.8}\,/\,\textbf{0.294} & \textbf{21.50}\,/\,127.4\,/\,\textbf{0.265} & 21.33\,/\,159.3\,/\,0.328 \\
        &   & \cellcolor{gray!20}\textbf{NCS-MPGD} & \cellcolor{gray!20}16.95\,/\,124.5\,/\,0.351 & \cellcolor{gray!20}18.95\,/\,\textbf{98.00}\,/\,0.265 & \cellcolor{gray!20}\textbf{22.80}\,/\,\textbf{95.14}\,/\,\textcolor{blue}{\textbf{0.196}} \\
        &   & DAPS             & 26.25\,/\,\textcolor{blue}{\textbf{76.81}}\,/\,0.314 & \textcolor{blue}{\textbf{27.23}}\,/\,59.52\,/\,\textbf{0.236} & \textcolor{blue}{\textbf{28.57}}\,/\,{{56.20}}\,/\,{\textbf{0.208}} \\
        &   & \cellcolor{gray!20}\textbf{NCS-DAPS} & \cellcolor{gray!20}\textcolor{blue}{\textbf{26.84}}\,/\,79.18\,/\,\textbf{0.279} & \cellcolor{gray!20}26.96\,/\,\textcolor{blue}{\textbf{55.68}}\,/\,0.239 & \cellcolor{gray!20}27.27\,/\,\textcolor{blue}{\textbf{52.39}}\,/\,0.223 \\
        &   & RED-Diff         & 13.12\,/\,140.2\,/\,1.330 & 20.68\,/\,146.1\,/\,0.582 & 22.81\,/\,138.9\,/\,0.401 \\
    \cmidrule(lr){2-6}
        & \multirow{7}{*}{\shortstack{Phase\\Retrieval}}
            & DPS              & 10.46\,/\,{145.8}\,/\,0.523 & 12.79\,/\,\textcolor{blue}{\textbf{118.4}}\,/\,0.434 & 15.07\,/\,{131.8}\,/\,0.373 \\
        &   & \cellcolor{gray!20}\textbf{NCS-DPS} & \cellcolor{gray!20}\textbf{13.03}\,/\,\textbf{143.2}\,/\,\textbf{0.447} & \cellcolor{gray!20}\textcolor{blue}{\textbf{17.05}}\,/\,168.1\,/\,\textbf{0.394} & \cellcolor{gray!20}\textcolor{blue}{\textbf{20.96}}\,/\,179.6\,/\,\textcolor{blue}{\textbf{0.282}} \\
        &   & MPGD             & 12.00\,/\,\textcolor{blue}{\textbf{138.4}}\,/\,0.473 & 12.85\,/\,\textbf{167.0}\,/\,0.457 & 12.64\,/\,347.8\,/\,0.572 \\
        &   & \cellcolor{gray!20}\textbf{NCS-MPGD} & \cellcolor{gray!20}\textcolor{blue}{\textbf{13.81}}\,/\,166.6\,/\,\textcolor{blue}{\textbf{0.426}} & \cellcolor{gray!20}\textbf{15.60}\,/\,208.7\,/\,\textcolor{blue}{\textbf{0.389}} & \cellcolor{gray!20}\textbf{15.13}\,/\,\textbf{311.6}\,/\,\textbf{0.531} \\
        &   & DAPS             & \textbf{12.72}\,/\,429.3\,/\,\textbf{0.677}& \textbf{11.54}\,/\,\textbf{205.4}\,/\,\textbf{0.722} & \textbf{13.05}\,/\,\textcolor{blue}{\textbf{66.39}}\,/\,\textbf{0.636} \\
        &   & \cellcolor{gray!20}\textbf{NCS-DAPS} & \cellcolor{gray!20}8.63\,/\,\textbf{302.9}\,/\,0.782 & \cellcolor{gray!20}8.64\,/\,285.0\,/\,0.765 & \cellcolor{gray!20}8.92\,/\,104.7\,/\,0.791 \\
        &   & RED-Diff         & 11.52\,/\,342.3\,/\,1.002 & 12.18\,/\,448.3\,/\,0.826 & 11.95\,/\,432.7\,/\,0.820 \\
    \midrule
    \multirow{14}{*}{ImageNet}
        & \multirow{7}{*}{\shortstack{Nonlinear\\Deblur}}
            & DPS              & 11.95\,/\,248.3\,/\,0.689 & 16.68\,/\,201.6\,/\,0.531 & 20.55\,/\,\textbf{169.2}\,/\,0.392 \\
        &   & \cellcolor{gray!20}\textbf{NCS-DPS} & \cellcolor{gray!20}\textbf{18.48}\,/\,\textbf{237.0}\,/\,\textbf{0.531} & \cellcolor{gray!20}\textbf{20.37}\,/\,\textbf{183.9}\,/\,\textbf{0.425} & \cellcolor{gray!20}\textbf{21.87}\,/\,175.0\,/\,\textbf{0.366} \\
        &   & MPGD             & 14.43\,/\,297.3\,/\,0.785 & 16.44\,/\,241.9\,/\,0.700 & 16.87\,/\,253.5\,/\,0.620 \\
        &   & \cellcolor{gray!20}\textbf{NCS-MPGD} & \cellcolor{gray!20}\textbf{16.49}\,/\,\textbf{225.3}\,/\,\textbf{0.556} & \cellcolor{gray!20}\textbf{17.22}\,/\,\textbf{194.5}\,/\,\textbf{0.462} & \cellcolor{gray!20}\textbf{18.95}\,/\,\textbf{188.1}\,/\,\textbf{0.472} \\
        &   & DAPS             & 23.34\,/\,140.3\,/\,0.371 & \textcolor{blue}{\textbf{24.10}}\,/\,104.5\,/\,\textcolor{blue}{\textbf{0.313}} & 25.33\,/\,\textcolor{blue}{\textbf{91.60}}\,/\,\textcolor{blue}{\textbf{0.283}} \\
        &   & \cellcolor{gray!20}\textbf{NCS-DAPS} & \cellcolor{gray!20}\textcolor{blue}{\textbf{23.98}}\,/\,\textcolor{blue}{\textbf{120.1}}\,/\,\textcolor{blue}{\textbf{0.351}} & \cellcolor{gray!20}23.83\,/\,\textcolor{blue}{\textbf{94.89}}\,/\,0.331 & \cellcolor{gray!20} \textcolor{blue}{\textbf{25.73}}\,/\,{{92.02}}\,/\,{{0.298}} \\
        &   & RED-Diff         & 12.47\,/\,342.3\,/\,1.310 & 18.67\,/\,225.0\,/\,0.713 & 19.99\,/\,202.6\,/\,0.544 \\
    \cmidrule(lr){2-6}
        & \multirow{7}{*}{\shortstack{Phase\\Retrieval}}
            & DPS              & 10.58\,/\,\textcolor{blue}{\textbf{251.9}}\,/\,0.694 & 11.36\,/\,\textcolor{blue}{\textbf{237.9}}\,/\,0.618 & 13.06\,/\,\textbf{256.5}\,/\,0.600 \\
        &   & \cellcolor{gray!20}\textbf{NCS-DPS} & \cellcolor{gray!20}\textcolor{blue}{\textbf{12.15}}\,/\,290.9\,/\,\textcolor{blue}{\textbf{0.602}} & \cellcolor{gray!20}\textcolor{blue}{\textbf{13.11}}\,/\,296.5\,/\,\textcolor{blue}{\textbf{0.589}} & \cellcolor{gray!20}\textcolor{blue}{\textbf{13.50}}\,/\,301.7\,/\,\textcolor{blue}{\textbf{0.587}} \\
        &   & MPGD             & 10.45\,/\,\textbf{268.0}\,/\,0.676 & \textbf{9.26}\,/\,\textbf{278.3}\,/\,0.829 & 7.62\,/\,\textbf{333.2}\,/\,0.897 \\
        &   & \cellcolor{gray!20}\textbf{NCS-MPGD} & \cellcolor{gray!20}\textbf{12.03}\,/\,291.8\,/\,\textbf{0.637} & \cellcolor{gray!20}7.42\,/\,309.5\,/\,\textbf{0.755} & \cellcolor{gray!20}\textbf{9.47}\,/\,335.2\,/\,\textbf{0.809} \\
        &   & DAPS             & \textbf{11.71}\,/\,429.3\,/\,\textbf{0.677}& \textbf{12.86}\,/\,\textbf{298.2}\,/\,\textbf{0.602} & \textbf{13.05}\,/\,\textcolor{blue}{\textbf{66.39}}\,/\,\textbf{0.636} \\
        &   & \cellcolor{gray!20}\textbf{NCS-DAPS} & \cellcolor{gray!20}{6.74}\,/\,\textbf{306.6}\,/\,{0.716} & \cellcolor{gray!20}9.86\,/\,301.5\,/\,0.628 & \cellcolor{gray!20}8.62\,/\,104.7\,/\,0.791 \\
        &   & RED-Diff         & 11.38\,/\,334.8\,/\,0.994 & 11.97\,/\,373.1\,/\,0.854 & 11.81\,/\,374.1\,/\,0.848 \\
    \bottomrule
    \end{tabular}
    \end{sc}
    \end{footnotesize}
\end{table*}

\section{Inverse Problems Setting}
\label{app:inverse_problems}

This appendix details the experimental setup for inverse problems evaluated in the main paper. We consider two categories: (1) image restoration tasks on FFHQ and ImageNet, and (2) scientific inverse problems from InverseBench~\citep{zheng2025inversebench}.

\subsection{Image Restoration Tasks}

We follow the problem formulations in~\citet{chung2023diffusion}. All tasks use additive Gaussian noise with $\sigma = 0.05$.

\vspace{0.3em}
\noindent\textbf{Linear tasks.}
\emph{Super-resolution} ($\times$4, $\times$8): bicubic downsampling.
\emph{Inpainting} (box, random): pixel masking.
\emph{Deblurring} (Gaussian, motion): convolution with blur kernels.

\vspace{0.3em}
\noindent\textbf{Nonlinear tasks.}
\emph{Non-uniform deblurring}: learned blur generator $\mathcal{B}_\theta(\vec{x}_0, \vec{z})$ with latent $\vec{z} \sim \mathcal{N}(\vec{0}, \sigma_z^2 \mat{I})$.
\emph{Phase retrieval}: magnitude-only Fourier measurements $\vec{y} = |\mat{F}\vec{x}_0| + \vec{n}$.

\vspace{0.3em}
\noindent\textbf{Metrics.} PSNR, FID, and LPIPS. For phase retrieval, we report the best among multiple samples due to non-uniqueness.

\subsection{Scientific Inverse Problems}

We follow InverseBench~\citep{zheng2025inversebench} and evaluate three representative tasks.

\vspace{0.3em}
\noindent\textbf{Linear inverse scattering.}
Recovering permittivity contrast from scattered light fields. Under the Born approximation: $\vec{y} = \mat{A}\vec{x}_0 + \vec{n}$, where $\mat{A}$ is derived from Green's functions. Complex-valued measurements are split into real/imaginary parts.

\vspace{0.3em}
\noindent\textbf{Compressed sensing MRI.}
Multi-coil parallel imaging with subsampled k-space: $\vec{y}_c = \mat{P}\mat{F}(\mat{S}_c \vec{x}_0) + \vec{n}_c$, where $\mat{P}$ is the sampling mask, $\mat{F}$ the Fourier transform, and $\mat{S}_c$ the coil sensitivity.

\vspace{0.3em}
\noindent\textbf{Black hole imaging (VLBI).}
Sparse visibility measurements with station-dependent errors. Inference uses \emph{closure quantities}---closure phases and log-closure amplitudes---which are nonlinear transformations of visibilities.

\vspace{0.3em}
\noindent\textbf{Metrics.} PSNR and SSIM for reconstruction fidelity. For VLBI, we additionally report $\chi^2_{\text{cp}}$ (closure phase) and $\chi^2_{\text{logca}}$ (log-closure amplitude); values near $1$ indicate good observation fitting.

\subsection{Hyperparameters}

We choose the optimal stepsize of DPS according to \citet{chung2023diffusion}. For the NCS method, the strength of the measurement operator affects the effective dimension of the noise space available for approximation. Consequently, we choose the parameter $K$ differently for each task, as summarized in Table~\ref{tab:k_settings}.

\begin{table}[H]
\centering
\caption{Codebook size $K$ for different inverse problem tasks.}
\label{tab:k_settings}
\vspace{0.5em}
\begin{tabular}{@{}cc@{\hspace{2em}}cc@{}}
\toprule
\multicolumn{2}{c}{\textbf{ImageNet / FFHQ}} & \multicolumn{2}{c}{\textbf{Scientific Inverse Problems}} \\
\cmidrule(r){1-2} \cmidrule(l){3-4}
Task & $K$ & Task & $K$ \\
\midrule
Super-resolution $\times$4 & 512 & Black hole imaging & 64 \\
Super-resolution $\times$8 & 64 & Linear inverse-scattering & 128 \\
Inpainting (box) & 512 & Compressed sensing MRI & 256 \\
Inpainting (random) & 512 & & \\
Gaussian deblurring & 256 & & \\
Motion deblurring & 512 & & \\
Phase retrieval & 256 & & \\
\bottomrule
\end{tabular}
\end{table}

\noindent The choice of $K \approx \sqrt{n}$ is motivated by the effective noise dimension $n$, which depends on the information loss induced by $\mat{A}$. For $\times$8 SR, the effective dimension is only $64 \times 64 \times 3$, so a smaller $K$ suffices. 

\begin{table}[H]
\centering
\caption{Hyperparameters for scientific inverse problems. We follow~\citet{zheng2025inversebench} for DPS and RED-Diff settings. For DAPS, we conduct a limited search over Langevin step size due to observed degradation with default parameters.}
\label{tab:scientific_hyperparams}
\vspace{0.5em}
\begin{tabular}{@{}llccc@{}}
\toprule
Method & Parameter & \shortstack{Linear Inverse\\Scattering (60)} & Black Hole & MRI (Real) \\
\midrule
DPS & Guidance scale & 625 & 0.003 & 0.428 \\
\midrule
\multirow{4}{*}{RED-Diff} 
    & Learning rate & 0.04 & 0.05 & $2.96\times10^{-2}$ \\
    & Regularization $\lambda_{\text{base}}$ & 0.0005 & 0.25 & $2.72\times10^{-3}$ \\
    & Regularization schedule & constant & constant & sqrt \\
    & Gradient weight & 1500 & 0.0004 & $1.7\times10^{-2}$ \\
\midrule
\multirow{5}{*}{DAPS} 
    & Annealing step & 200 & 100 & 200 \\
    & Diffusion step & 10 & 5 & 5 \\
    & Langevin step size & $1\times10^{-4}$ & $1\times10^{-4}$ & $1.52\times10^{-5}$ \\
    & Langevin step number & 50 & 20 & 100 \\
    & Noise level ($\tau$) & $1\times10^{-4}$ & 1 & $4.77\times10^{-3}$ \\
\bottomrule
\end{tabular}
\end{table}

\section{Computational Cost}
\label{app:computational_cost}
We benchmark the computational overhead of NCS on a single NVIDIA RTX 4090 GPU with 50 diffusion timesteps and batch size 1. As shown in Table~\ref{tab:computational_cost}, the overhead introduced by NCS is negligible in practice.

\begin{table}[H]
\centering
\caption{Computational cost comparison between baseline methods and their NCS variants.}
\label{tab:computational_cost}
\vspace{0.5em}
\begin{tabular}{@{}lccc@{}}
\toprule
Method & Total Time & Slowdown & Per-step Overhead \\
\midrule
DPS & 2.58s & -- & -- \\
NCS-DPS ($K$=256) & 2.61s & 1.01$\times$ & +1.71ms \\
NCS-DPS ($K$=512) & 2.64s & 1.02$\times$ & +2.82ms \\
\midrule
MPGD & 1.26s & -- & -- \\
NCS-MPGD ($K$=256) & 1.28s & 1.02$\times$ & +1.14ms \\
NCS-MPGD ($K$=512) & 1.31s & 1.04$\times$ & +2.46ms \\
\bottomrule
\end{tabular}
\end{table}

\begin{table}[H]
\centering
\caption{NCS computation overhead and memory usage for different $K$ values.}
\label{tab:ncs_overhead}
\vspace{0.5em}
\begin{tabular}{@{}lcc@{}}
\toprule
$K$ & Pure NCS Overhead (per step) & Additional Memory \\
\midrule
256 & $\sim$1.4ms & 192 MB \\
512 & $\sim$2.7ms & 384 MB \\
1024 & $\sim$5.4ms & 768 MB \\
\bottomrule
\end{tabular}
\end{table}

The computational and memory overhead of NCS is negligible: the slowdown is only 1--4\% because model inference (UNet forward pass) dominates the computation at $\sim$130ms per step, while NCS adds merely 1--3ms. Even with $K=1024$ noise candidates, the additional memory consumption is under 1 GB, which is insignificant compared to the memory footprint of DMs.

\section{Compression by NCS-DPS}
\label{app:ncs_compression}

Since we have unified various methods under the NCS framework, this implies that—aside from DDCM, which is a special case equivalent to NCS-MPGD, NCS-DPS can also be employed for compression tasks. To evaluate this, we conducted experiments on several images from the Kodak24 dataset, using the Stable Diffusion 2 model as the pre-trained backbone \citep{rombach2022highresolutionimagesynthesislatent}. We adopted a codebook of size 1024 but restricted the selection to a single optimal noise vector to maintain consistency with the experimental settings of the DDCM paper.

As shown in Fig.~\ref{fig:kodak24_samples}, the compression efficiency of NCS-DPS is inferior to that of DDCM. The compressed images appear overly smooth, exhibiting a significant loss of fine details. This degradation may stem from inaccuracies in the gradient used during the reverse computation. Nonetheless, when using fewer denoising steps, the images compressed by NCS-DPS appear to retain richer semantic structures.

\begin{figure}[H]
    \centering
    \begin{tabular*}{0.83\linewidth}{@{\extracolsep{\fill}}ccccc}
        \textbf{Ground Truth} & \textbf{DPS} & \textbf{DAPS} & \textbf{Red-Diff} & \textbf{NCS-DPS} \\
    \end{tabular*}
    \vspace{0.3em}
    \\
    \includegraphics[width=0.9\linewidth]{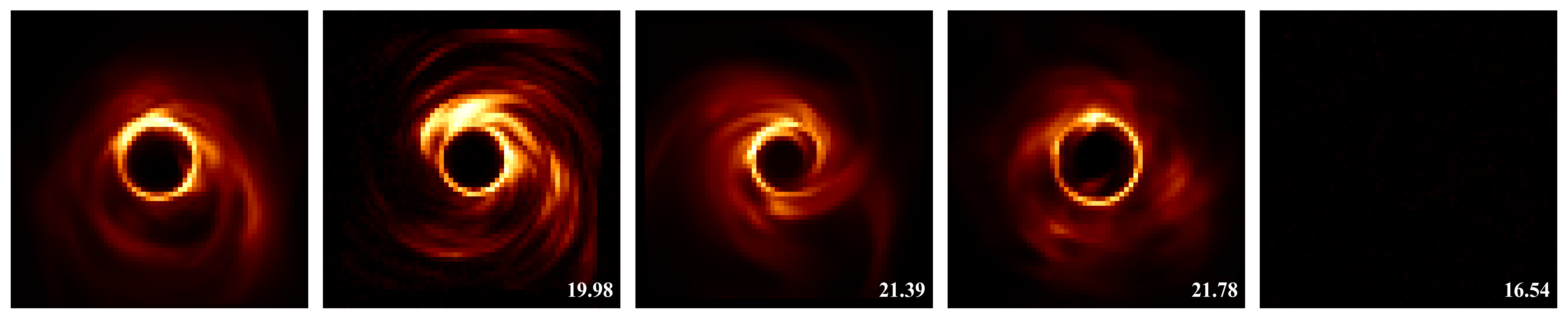}\\
    \vspace{0.5em}
    \includegraphics[width=0.9\linewidth]{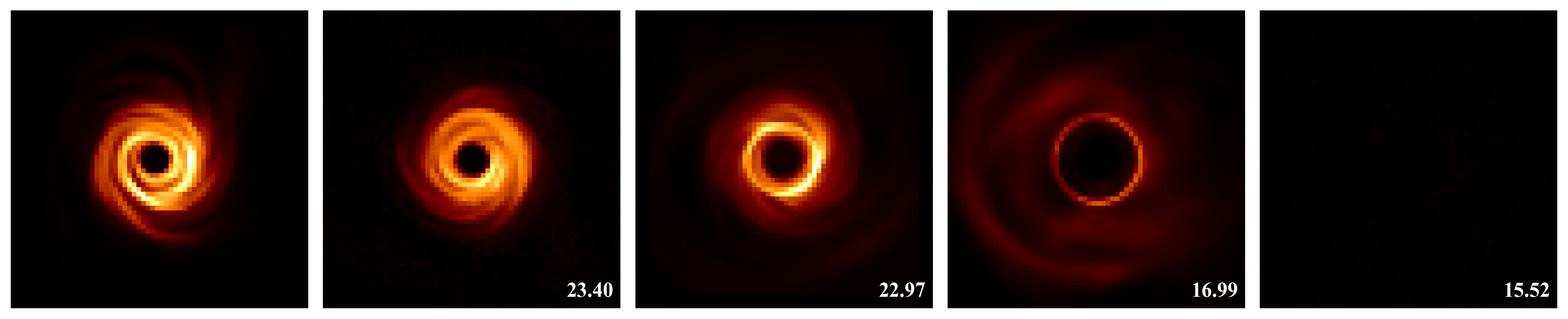}
    \caption{Failure cases of several methods. The top row shows, from left to right: Ground Truth, DPS, DAPS, Red-Diff, and NCS-DPS.}
    \label{fig:fail:image}
\end{figure}

\section{Failure Cases}

We observe that NCS-DPS can still encounter unstable behavior in certain cases, similar to DPS. In particular, during the sampling process, the reconstruction may collapse to overly dark images or become dominated by high-frequency noise, as illustrated in Fig.~\ref{fig:fail:image}. Empirically, this issue can often be substantially alleviated by slightly reducing the codebook size $K$, suggesting that the instability is sensitive to the strength of the injected conditional signal.

We hypothesize that this behavior is closely related to the inherent instability of gradient-based guidance in DPS-style methods, rather than being specific to NCS itself. In fact, when excluding these unstable samples, DPS-based methods achieve a significantly larger margin over competing approaches on scientific inverse problems in terms of PSNR. Addressing this instability more fundamentally remains an open problem, and we leave a systematic investigation of its causes and potential remedies for future work.

\end{document}